\font\tenmsb=msbm10
\def\eps{\varepsilon}
\font\tencmmib=cmmib10 \skewchar\tencmmib '60
\font\tenmsb=msbm10
\def\Bbb#1{\hbox{\tenmsb#1}}
\def\lessim{\ \lower4pt\hbox{$
\buildrel{\displaystyle <}\over\sim$}\ }
\def\gessim{\ \lower4pt\hbox{$\buildrel{\displaystyle >}
\over\sim$}\ }
\def\eps{{\varepsilon}}
\def\Bbb E{\mathbb{E}}
\def\Bbb R{\mathbb{R}} 
\newtheorem{remark}{Remark}
\newtheorem{assumption}{Assumption}
\newtheorem{proposition}{Proposition}
\newtheorem{lemma}{Lemma}
\newtheorem{theorem}{Theorem}
\font\tencmmib=cmmib10 \skewchar\tencmmib '60
\font\tenmsb=msbm10
\def\Bbb#1{\hbox{\tenmsb#1}}
\def\lessim{\ \lower4pt\hbox{$
\buildrel{\displaystyle <}\over\sim$}\ }
\def\gessim{\ \lower4pt\hbox{$\buildrel{\displaystyle >}
\over\sim$}\ }
\def\eps{\varepsilon}
\def\go0{\to 0}
\def\leftitem#1{\item{\hbox to\parindent{\enspace#1\hfill}}}
\def\sg{\sigma}
\def\sg2{\sigma^2}
\def\__{_{\infty}}
\begin{document}

\begin{frontmatter}
\title{Estimation of low rank density matrices: bounds in Schatten norms and other distances}
\runtitle{Estimation of density matrices}
\runauthor{D. Xia and V. Koltchinskii}
\begin{aug}
\author{\fnms{Dong} \snm{Xia} \thanksref{t1}\ead[label=e2]{dxia7@math.gatech.edu}}
and
\author{\fnms{Vladimir} \snm{Koltchinskii} \thanksref{t2}\ead[label=e1]{vlad@math.gatech.edu}}
\address{School of Mathematics, Georgia Institute of Technology, Atlanta, GA, 30332, USA.\\
\printead{e1,e2}}
\thankstext{t1}{Supported in part by NSF Grants DMS-1207808 and CCF-1523768}
\thankstext{t2}{Supported in part by NSF Grants DMS-1509739 and CCF-1523768}
\affiliation{School of Mathematics\\Georgia Institute of Technology}
\end{aug}

\begin{abstract}
Let ${\mathcal S}_m$ be the set of all $m\times m$ density matrices 
(Hermitian positively semi-definite matrices of unit trace). Consider 
a problem of estimation of an unknown density matrix $\rho\in {\mathcal S}_m$
based on outcomes of $n$ measurements of observables $X_1,\dots, X_n\in {\mathbb H}_m$ (${\mathbb H}_m$ being the space of $m\times m$ Hermitian matrices)
for a quantum system identically prepared $n$ times in state $\rho.$ 
Outcomes $Y_1,\dots, Y_n$ of such measurements could be described by a trace regression model in which 
${\mathbb E}_{\rho}(Y_j|X_j)={\rm tr}(\rho X_j), j=1,\dots, n.$
The design variables $X_1,\dots, X_n$ are often sampled at random from the uniform 
distribution in an orthonormal basis $\{E_1,\dots, E_{m^2}\}$ of ${\mathbb H}_m$
(such as Pauli basis). 
The goal is to estimate the unknown density matrix $\rho$ based on the data 
$(X_1,Y_1), \dots, (X_n,Y_n).$ Let 
$$
\hat Z:=\frac{m^2}{n}\sum_{j=1}^n Y_j X_j
$$ 
and let $\check \rho$ be the projection of $\hat Z$
onto the convex set ${\mathcal S}_m$ of density matrices. 
It is shown that for estimator $\check \rho$ the minimax lower bounds in classes of low rank
density matrices (established earlier) are attained up logarithmic factors for all 
Schatten $p$-norm distances, $p\in [1,\infty]$ and for Bures version of quantum Hellinger 
distance. Moreover, for a slightly modified version of estimator $\check \rho$
the same property holds also for quantum relative entropy (Kullback-Leibler) distance
between density matrices.
\end{abstract}
\end{frontmatter}

\section{Introduction}

Let ${\mathbb M}_m$ be the set of all $m\times m$ matrices with entries in ${\mathbb C}.$
For $A\in {\mathbb M}_m,$ let $A^{\ast}$ denote its conjugate transpose and let ${\rm tr}(A)$ denote 
the trace of $A.$ The complex linear space ${\mathbb M}_m$ of dimension $m^2$ will be equipped with the Hilbert-Schmidt inner product 
$
\langle A,B\rangle = {\rm tr}(AB^{\ast}), A,B\in {\mathbb M}_m.
$
In what follows, the sign $\otimes $ denotes the tensor product of vectors or matrices (linear transformations).
For instance, for $u,v\in {\mathbb C}^m,$ $u\otimes v$ is a linear transformation from ${\mathbb C}^m$ into 
itself defined as follows: $(u\otimes v)x=u\langle x,v\rangle, x\in {\mathbb C}^m.$ 

Let
$$
{\mathbb H}_m:= \{A\in {\mathbb M}_m: A=A^{\ast}\}
$$   
be the set of all Hermitian matrices. Clearly, ${\mathbb H}_m$ is a linear space of dimension $m^2$ over the field 
of real numbers. For $A\in {\mathbb H}_m,$ the notation $A\succcurlyeq 0$ means that $A$
is positively semi-definite. A density matrix is a positively semi-definite Hermitian matrix 
of unit trace. The set of all $m\times m$ density matrices will be denoted by 
$$
{\mathcal S}_m:=\{S\in {\mathbb H}_m: S\succcurlyeq  0, {\rm tr}(S)=1\}.
$$ 
Density matrices are used in quantum mechanics to characterize the states 
of quantum systems. More generally, the states are represented by self-adjoint 
positively semidefinite operators of unit trace acting in an infinite-dimensional 
Hilbert space. In this case, density matrices of a large dimension $m$
could be used to approximate the states of the system.  

The goal of {\it quantum state tomography} is to estimate the density 
matrix for a system prepared in an unknown state based on specially 
designed measurements. Let $X\in {\mathbb H}_m$ be a Hermitian 
matrix ({\it an observable}) with spectral representation 
$X=\sum_{j=1}^{m'} \lambda_j P_j,$ where $m'\leq m,$
$\lambda_j\in {\mathbb R}, j=1,\dots, m'$ being the distinct eigenvalues of $X$  
and $P_j, j=1,\dots, m'$ being the corresponding eigenprojections. 
For a system prepared in state $\rho \in {\mathcal S}_m,$ possible 
outcomes of a measurement of observable $X$ are the eigenvalues 
$\lambda_j, j=1,\dots, m'$ and they occur with probabilities $p_j:= {\rm tr}(\rho P_j), j=1,\dots, m'.$
If $Y$ is a random variable representing such an outcome, then 
$$
{\mathbb E}_{\rho}Y= {\rm tr}(\rho X)=\langle \rho, X\rangle.
$$
In a simple model of quantum state tomography considered in this paper, an observable $X$ is sampled 
at random from some probability distribution $\Pi$ in ${\mathbb H}_m,$ ${\mathbb E}_{\rho}(Y|X)=\langle \rho, X\rangle$
and $Y=\langle \rho, X\rangle+\xi$ with noise $\xi$ such that ${\mathbb E}_{\rho}(\xi|X)=0.$  
Given a sample 
$X_1,\dots, X_n$ of $n$ i.i.d. copies of $X,$ $n$ measurements of observables $X_1,\dots, X_n$
are performed for a system identically prepared $n$ times in the same unknown state $\rho \in {\mathcal S}_m$
resulting in outcomes $Y_1,\dots, Y_n.$ 
This leads to the following {\it trace regression model} 
\begin{equation}
\label{trace_regression}
Y_j = \langle \rho, X_j \rangle + \xi_j, j=1,\dots, n
\end{equation}
with design variables $X_j, j=1,\dots, n,$ response variables $Y_j, j=1,\dots, n$ and noise $\xi_j, j=1,\dots, n$ satisfying the assumption ${\mathbb E}_{\rho}(\xi_j|X_j)=0, j=1,\dots, n$ and ${\mathbb E}_{\rho}(Y_j|X_j)=\langle \rho, X_j\rangle.$
The goal is to estimate the target 
density matrix $\rho$ based on the data $(X_1,Y_1), \dots, (X_n,Y_n),$
with the estimation error being measured by one of the statistically meaningful 
distances between density matrices such as the Schatten $p$-norm distances 
for $p\in [1,\infty]$ or quantum versions of Hellinger and Kullback-Leibler distances. 

This version of the problem of quantum state tomography has been intensively studied in the recent years. 
The noiseless case (quantum compressed sensing) was considered in \cite{gross2010quantum}
and \cite{gross2011recovering}. In these papers, sharp bounds on the number $n$ of measurements needed to recover 
a density matrix of rank $r$ were obtained based on a subtle argument (so called ``golfing scheme") utilizing matrix 
Bernstein type inequalities. These developments were related to an earlier work on low rank matrix completion 
\cite{candes2010power}. In the noisy case, trace regression problems have been studied by many authors 
(see, e.g., \cite{Koltchinskii2011oracle} and references therein). The main focus was on nuclear norm penalized 
least squares estimator (matrix LASSO) and related methods such as matrix Dantzig selector (see \cite{candes2011tight},
\cite{Koltchinskii2011nuclear}, \cite{negahban}, \cite{klopp2014noisy}).
In \cite{liu2011universal}, sharp bounds for matrix LASSO and matrix Dantzig selector, 
in particular, for Pauli measurements in quantum state tomography were obtained. 
Most of the results in these papers included upper bounds on the estimation error in Hilbert--Schmidt (Frobenius)
norm as well as low rank oracle inequalities (\cite{Koltchinskii2011nuclear}, \cite{Koltchinskii2011oracle}, \cite{Koltchinskii2013sharp}). In \cite{Koltchinskii2011nuclear}, an upper bound on the operator norm error
of a nuclear norm penalized modified least squares estimator was also proved. This result was further developed 
in \cite{Lounici_arxiv}.  In \cite{Koltchinskii2011neumann}, upper bounds and low rank oracle inequalities for von Neumann 
entropy penalized least squares estimators were studied (including the bounds on the error in Bures distance and quantum relative entropy distance). A rank penalized 
estimator of density matrix was studied in \cite{Alquier}. 
The minimax lower bounds on the Frobenius norm error for matrix completion problems 
in classes of matrices of rank $r$ were obtained in \cite{Koltchinskii2011nuclear} (the operator norm version could be found in \cite{Lounici_arxiv}).
In \cite{ma2013volume}, a method of deriving lower bounds for unitary invariant matrix norms (including Schatten $p$-norms) was developed and, among other matrix estimation problems, such bounds were obtained for matrix 
completion. Minimax lower bounds on the nuclear norm error in density matrix estimation were obtained in
\cite{flammia2012quantum}, where it was also shown that these bounds are attained (up to logarithmic factors)
for the matrix versions of LASSO and Dantzig selector. In our recent paper \cite{Koltch_Xia_15}, we derived
minimax lower bounds in classes of low rank density matrices for the whole range of Schatten $p$-norm distances 
as well as for Bures (quantum Hellinger) and quantum relative entropy distance. We also showed that these minimax bounds are attained (up to logarithmic factors) for von Neumann 
entropy penalized least squares estimators introduced in \cite{Koltchinskii2011neumann} 
simultaneously for Bures, relative entropy and Schatten $p$-norm distances for $p\in [1,2].$  

The current paper could be viewed as a continuation of \cite{Koltch_Xia_15}. Our main goal is to study a minimal distance estimator $\check \rho$ of $\rho$ (initially proposed in \cite{Koltchinskii2013remark}) 
defined as the projection of a simple unbiased estimator
$$
\hat Z = \frac{m^2}{n}\sum_{j=1}^n Y_j X_j 
$$
onto the convex set of density matrices ${\mathcal S}_m.$ 
We show that the minimax error rates established in \cite{Koltch_Xia_15}
for the classes of low rank density matrices are attained for this estimator up to logarithmic factors {\it in the whole range of Schatten $p$-norm 
distances} for $p\in [1,\infty]$ as well as for Bures and relative entropy distance. 
The proof of these results relies 
on simple properties of projections of Hermitian matrices onto the convex set ${\mathcal S}_m$ of density 
matrices (see theorems \ref{th:min-dist} and \ref{th_main_rank}) that might be of independent interest. 

Throughout the paper, $\langle \cdot, \cdot\rangle$ denotes either Hilbert--Schmidt inner product (defined above),
or (with a little abuse of notation) the canonical inner product of ${\mathbb C}^m.$ The corresponding norm in 
${\mathbb C}^m$ is denoted by $|\cdot|.$ For $A,B\geq 0,$ the notation $A\lesssim B$ means that $A\leq CB$
for a numerical constant $C>0,$ $A\gtrsim B$ means that $B\lesssim A$ and $A\asymp B$ means that $B\lesssim A\lesssim B.$ If needed, these signs might be provided with subscripts indicating that the constant is allowed to depend 
on parameters. Say, $A\lesssim_{\gamma}B$ would mean that $A\leq CB$ with $C$ depending on $\gamma.$

\section{Preliminaries}

\subsection{Distances between density matrices}

The Schatten $p$-norm of a matrix $A\in {\mathbb H}_m$ is defined as 
$$\|A\|_p := \biggl(\sum_{j=1}^m |\lambda_j(A)|^p\biggr)^{1/p}, p\in [1,+\infty],$$ 
where $\lambda_1(A)\geq \dots \geq \lambda_m(A)$ are the eigenvalues of $A$ arranged
in a non-increasing order. For $p=1,$ the norm $\|A\|_1$ is called the nuclear or the trace norm; for $p=2,$ $\|A\|_2$
is the Hilbert--Schmidt (generated by the Hilbert--Schmidt inner product) or Frobenius norm; for $p=+\infty,$ $\|A\|_{\infty}=\max_{1\leq j\leq m}|\lambda_j(A)|$ is called the operator or the spectral norm. Note that, for all $A\in {\mathbb H}_m,$ $[1,\infty]\ni p\mapsto \|A\|_p$
is a non-increasing function.
The following {\it interpolation inequality} is well known and can be easily deduced 
from a similar result for $\ell_p$-norms. Let $1\leq p<q<r\leq \infty$ and let $\mu\in [0,1]$ be such that 
$
\frac{\mu}{p}+ \frac{1-\mu}{r}= \frac{1}{q},
$ 
then 
\begin{equation}
\label{interpolation}
\|A\|_q \leq \|A\|_p^{\mu}\|A\|_r^{1-\mu},\ A\in {\mathbb H}_m. 
\end{equation}
In addition to the distances generated by the Schatten $p$-norms, the following two distances (extending well 
known distances between probability distributions used in the classical statistics) are of importance in quantum 
statistics: Bures distance and Kullback-Leibler divergence. {\it The Bures distance} is a quantum version of 
Hellinger distance and it is defined as follows:
$$
H^2(S_1,S_2):= 2-2{\rm tr}\sqrt{S_1^{1/2}S_2S_1^{1/2}}, S_1, S_2\in {\mathcal S}_m.
$$
The quantity ${\rm tr}\sqrt{S_1^{1/2}S_2S_1^{1/2}}$ is called {\it the fidelity} of states $S_1,S_2$
(a quantum version of Hellinger affinity). Note that $0\leq H^2(S_1,S_2)\leq 2$ and that $H(S_1,S_2)$
defines a metric in the space ${\mathcal S}_m.$ The non-commutative {\it Kullback-Leibler divergence}, 
or {\it relative entropy distance} is defined as
$$
K(S_1\|S_2):= {\rm tr}(S_1 \log S_1-S_1\log S_2), S_1,S_2\in {\mathcal S}_m.
$$
If $S_2$ is a density matrix of rank strictly smaller than $m,$ $\log S_2$ is not well defined 
and $K(S_1\|S_2):= +\infty.$ Clearly, $K(S_1\|S_2)$ is not a metric (in particular, it is not 
symmetric). It is well known that $K(S_1\|S_2)$ is the supremum 
of classical Kullback-Leibler divergences between the distributions of outcomes 
of all possible measurements  (represented by positive operator valued measures (POVM))
for the system prepared in states $S_1$ and $S_2.$ Similar property holds also for the Bures (Hellinger)
distance and for the nuclear norm distance $\|S_1-S_2\|_1$ which is the supremum of classical 
total variation distances between the distributions of outcomes of all measurements (see \cite{Nielsen2000}, \cite{Klauck2007}). 
These observations easily imply the following inequalities:
\begin{equation}
\label{compare_distances}
\frac{1}{4}\|S_1-S_2\|_1^2 \leq H^2(S_1,S_2)\leq K(S_1\|S_2)\wedge \|S_1-S_2\|_1
\end{equation}
(see also \cite{Koltchinskii2011neumann}).

\subsection{Sampling from an orthonormal basis}

Uniform sampling from an orthonormal basis is a model of design distribution 
in trace regression (\ref{trace_regression}) that has been frequently used in 
the literature on quantum compressed sensing (see, \cite{gross2010quantum}, \cite{gross2011recovering}). 
Let ${\mathcal E}:=\{E_1,\dots, E_{m^2}\}$ be an orthonormal basis of the space ${\mathbb H}_m$
of Hermitian matrices. 
Let 
$$
U:= \max_{1\leq j\leq m^2}\|E_j\|_{\infty}.
$$
Clearly, $U\leq 1$ and 
$$
1=\max_{1\leq j\leq m^2}\|E_j\|_{2}\leq m^{1/2} U,
$$
implying that $U\geq m^{-1/2}.$
In what follows, it will be assumed that $\Pi$ is a uniform distribution on the basis ${\mathcal E}.$
As a result, the response variables $Y_j, j=1,\dots, n$ of trace regression model (\ref{trace_regression}) could be viewed as noisy measurements of $n$ randomly picked Fourier 
coefficients of the target density matrix $\rho$ in basis ${\mathcal E}.$ This model includes, in particular,
the so called {\it Pauli measurements}, an important approach to quantum 
state tomography (see, e.g., \cite{gross2010quantum}, \cite{gross2011recovering}). 

{\bf Example: Pauli bases and Pauli measurements}. The space of observables for a single qubit system is the space 
${\mathbb H}_2$ of $2\times 2$ Hermitian matrices. Let 
\begin{equation*}
 \sigma_0:=\left(\begin{array}{cc}1&0\\0&1 \end{array}\right),\quad \sigma_1:=\left(\begin{array}{cc}0&1\\1&0 \end{array}\right),
  \quad \sigma_2:=\left(\begin{array}{cc}0&i\\-i&0 \end{array}\right),\quad \sigma_3:=\left(\begin{array}{cc}1&0\\0&-1 \end{array}\right).
 \end{equation*}
 The matrices $\sigma_1, \sigma_2, \sigma_3$ (often denoted $\sigma_x, \sigma_y, \sigma_z$)
 are called {\it Pauli matrices}. The matrices $W_i=\frac{1}{\sqrt{2}}\sigma_i,\ i=0,1,2, 3$
 form an orthonormal basis of the space ${\mathbb H}_2$ (the Pauli basis).  For a system 
 consisting of $k$ qubits, the space of observables is ${\mathbb H}_m,$ where $m=2^k.$
 The Pauli basis of this space is defined by tensorizing the Pauli basis of ${\mathbb H}_2:$
 it consists of $m^2=4^k$ tensor products $W_{i_1}\otimes\ldots\otimes W_{i_k}, (i_1,\ldots,i_k)
\in \left\{0,1,2,3\right\}^k.$ Let $E_1=W_0\otimes \ldots \otimes W_0$ and let $E_2,\dots, E_{m^2}$
be the rest of the matrices of the Pauli basis of ${\mathbb H}_m.$ It is straightforward to check that 
$E_1=\frac{1}{\sqrt{m}} I_m,$ where $I_m$ denotes $m\times m$ identity matrix (thus, $\frac{1}{\sqrt{m}}$
is the only eigenvalue of $E_1$). Matrices $E_2, \dots, E_{m^2}$ have eigenvalues $\pm\frac{1}{\sqrt{m}}.$
Therefore, $\|E_j\|_{\infty}=m^{-1/2},$ implying that, for the Pauli basis, $U=m^{-1/2}.$
The fact that the matrices of this basis have the smallest possible operator norms has been used 
in quantum compressed sensing
(see \cite{gross2010quantum}, \cite{gross2011recovering}, \cite{liu2011universal}).
Matrices $E_j$ have the following spectral representations: $E_j=\frac{1}{\sqrt{m}}P_j^+-\frac{1}{\sqrt{m}}P_j^-$
with eigenprojections $P_j^{+}, P_j^{-}, j=1,\dots, m^2$ (for $E_1,$ $P_1^{-}=0$). 
A measurement of $E_j$ for a $k$ qubit system prepared in state $\rho$ results in a random outcome $\tau_j$ with two possible values $\pm\frac{1}{\sqrt{m}}$
taken with probabilities $\big<\rho,P_j^{\pm}\big>.$ For random variable $\tau_j,$ 
${\mathbb E}_{\rho}\tau_j=\langle \rho,E_j\rangle.$ The density matrix $\rho$ admits the following representation in the Pauli 
basis:
$$
\rho=\sum_{j=1}^{m^2}\frac{\alpha_j}{\sqrt{m}}E_j 
$$
with $\alpha_1=1$ and with some $\alpha_j\in {\mathbb R}, j=2,\dots, m^2.$
This implies that ${\mathbb E}_{\rho}\tau_j=\frac{\alpha_j}{\sqrt{m}},$
$$\mathbb{P}_{\rho}\Bigl\{\tau_j=\pm\frac{1}{\sqrt{m}}\Bigr\}=\frac{1\pm \alpha_j}{2}$$
and $\text{Var}_{\rho}(\tau_j)=\frac{1-\alpha_j^2}{m}.$
Note that, for $j=1,$ $\alpha_1=1,$ $\mathbb{P}_{\rho}\Bigl\{\tau_1=\frac{1}{\sqrt{m}}\Bigr\}=1$
and  $\text{Var}_{\rho}(\tau_1)=0.$ For $j=2,\dots, m^2,$ $|\alpha_j|<1$ and 
$\text{Var}_{\rho}(\tau_j)>0.$ 

Let $\nu$ be picked at random from the 
set $\{1,\dots, m^2\}$ (with the uniform distribution) and let $X=E_{\nu}, Y=\tau_{\nu}$
(which corresponds to random sampling from the Pauli basis with a subsequent measurement
of observable $X$ resulting in the outcome $Y$). Then ${\mathbb E}_{\rho}(Y|X)=\langle \rho,X\rangle$
and ${\text Var}_{\rho}(Y|X)=\frac{1-\alpha_{\nu}^2}{m}.$ Moreover, we have 
$$
{\mathbb P}\Bigl\{{\text Var}_{\rho}(Y|X)\leq \frac{1}{2m}\Bigr\}=
{\mathbb P}\Bigl\{\alpha_{\nu}^2\geq \frac{1}{2}\Bigr\}\leq 
2{\mathbb E}\alpha_{\nu}^2 = \frac{2}{m} \sum_{j=1}^{m^2}\frac{\alpha_j^2}{m}
=\frac{2\|\rho\|_2^2}{m}.
$$
Since, for $\rho\in {\mathcal S}_m,$ $\|\rho\|_2\leq 1,$ this means that, for $m>2$ with probability 
at least $1-\frac{2}{m},$ ${\text Var}_{\rho}(Y|X)> \frac{1}{2m}.$ In other words, the number of $j=1,\dots, m^2$
such that ${\text Var}_{\rho}(\tau_j)> \frac{1}{2m}$ is at least $m^2 - 2m$ implying that, for the most of the values of $j,$
${\text Var}_{\rho}(\tau_j)\asymp \frac{1}{m}.$

The variance could be further reduced by repeating the measurement of the observable $X$
$K$ times (for a system identically prepared in state $\rho$) and averaging the outcomes 
of the resulting $K$ measurements. In this case, the response variable becomes $Y=\langle \rho, X\rangle +
\xi,$ where ${\mathbb E}_{\rho}(\xi|X)=0$ and ${\mathbb E}_{\rho}(\xi^2|X)=
\text{Var}_{\rho}(Y|X)=\frac{1-\alpha_{\nu}^2}{Km}.$ 
 
\subsection{Minimax lower bounds} 
 
In  \cite{Koltch_Xia_15}, the problem of density matrix estimation was studied in the case of trace regression 
model (\ref{trace_regression}) with i.i.d. random design variables $X_1,\dots, X_n$ sampled from the uniform 
distribution in an orthonormal basis ${\mathcal E}=\{E_1,\dots, E_{m^2}\}$ in two different settings:
trace regression with Gaussian noise and trace regression with a bounded response. 
In both cases, minimax lower bounds on the estimation error
of the unknown target density matrix $\rho$ of rank at most $r$  
were obtained for the Schatten $p$-norm 
distances ($p\in [1,+\infty]$) as well as for the Bures version of quantum Hellinger distance 
and for the quantum Kullback-Leibler (relative entropy) distance.  These results of \cite{Koltch_Xia_15} are stated 
below. 

Denote by ${\mathcal S}_{r,m}$ the set of all density matrices of rank at most $r$ ($1\leq r\leq m$).

\begin{assumption}[Trace regression with Gaussian noise]
\label{Gaussian_noise}
Let $(X,Y)$ be a random couple with $X$ being a random matrix sampled 
from the uniform distribution $\Pi$ in an orthonormal basis 
${\mathcal E}=\{E_1,\dots, E_{m^2}\}\subset {\mathbb H}_m.$
Suppose that, for some density matrix $\rho\in {\mathcal S}_m,$
$Y=\langle \rho,X\rangle +\xi,$ where $\xi$ is a mean zero normal random 
variable with variance $\sigma_{\xi}^2$ independent of $X.$
Let $(X_1,Y_1), \dots, (X_n,Y_n)$ be $n$ i.i.d. copies of $(X, Y).$ 
\end{assumption}

In this model, the level of the noise $\xi$ is characterized by its variance 
which should be involved in the error bound (this could be viewed as a normal 
approximation of the noise in the case when repeated measurements are performed 
for each observable $X_j$ with averaging of the outcomes).

\begin{theorem}
\label{minmaxthm1}
Suppose Assumption \ref{Gaussian_noise} holds. 
For all $p\in [1,+\infty],$ there exist constants $c,c'>0$ such that,  the following bounds hold:\footnote{Here ${\mathbb P}_{\rho}$
denotes a probability measure such that Assumption \ref{Gaussian_noise} is satisfied with density matrix $\rho.$}  
\begin{equation}
 \label{minmaxthm1boundq}
 \underset{\hat{\rho}}{\inf}\underset{\rho\in\mathcal{S}_{r,m}}{\sup}\mathbb{P}_{\rho}
 \biggl\{\|\hat{\rho}-\rho\|_p\geq c\biggl(r^{1/p}\frac{\sigma_{\xi}m^{\frac{3}{2}}}{\sqrt{n}}\bigwedge 
 \biggl(\frac{\sigma_{\xi}m^{3/2}}{\sqrt{n}}\biggr)^{1-\frac{1}{p}}\bigwedge 1\biggr)\biggr\}\geq c',
\end{equation}
 \begin{equation}
 \label{minmaxthm1boundH}
 \underset{\hat{\rho}}{\inf}\underset{\rho\in\mathcal{S}_{r,m}}{\sup}\mathbb{P}_{\rho}\biggl\{H^2(\hat{\rho},\rho)\geq c\biggl(r\frac{\sigma_{\xi}m^{\frac{3}{2}}}{\sqrt{n}}\bigwedge 1\biggr)\biggr\}\geq c',
 \end{equation}
and
 \begin{equation}
 \label{minmaxthm1boundK}
 \underset{\hat{\rho}}{\inf}\underset{\rho\in\mathcal{S}_{r,m}}{\sup}\mathbb{P}_{\rho}\biggl\{K(\rho\|\hat{\rho})\geq c\biggl(r\frac{\sigma_{\xi}m^{\frac{3}{2}}}{\sqrt{n}}\bigwedge 1\biggr)\biggr\}\geq c',
 \end{equation}
 where $\inf_{\hat{\rho}}$ denotes the infimum over all estimators $\hat{\rho}$ in $\mathcal{S}_m$
 based on the data $(X_1,Y_1), \dots, (X_n,Y_n)$ satisfying the Gaussian trace regression 
 model with noise variance $\sigma_{\xi}^2.$
\end{theorem}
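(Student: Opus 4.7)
The plan is the classical information-theoretic reduction: bound the minimax risk from below via Fano's inequality (or Le Cam's two-point lemma in the degenerate regimes) applied to a well-chosen packing of $\mathcal{S}_{r,m}.$ Under Assumption \ref{Gaussian_noise}, $Y_j\mid X_j\sim\mathcal{N}(\langle\rho,X_j\rangle,\sigma_{\xi}^2)$ with $X_j$ uniform on the orthonormal basis $\mathcal{E},$ so Parseval's identity yields the key KL identity
$$
K(\mathbb{P}_{\rho}^n\|\mathbb{P}_{\rho'}^n)
=\frac{n}{2\sigma_{\xi}^2}\mathbb{E}_{\Pi}\langle\rho-\rho',X\rangle^2
=\frac{n}{2\sigma_{\xi}^2 m^2}\|\rho-\rho'\|_2^2.
$$
Thus the entire sample-information budget of a hypothesis set is controlled by its squared Frobenius radius, and the task reduces to producing subsets of $\mathcal{S}_{r,m}$ that are close in Frobenius norm but well separated in the target distance.

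For the dominant term $r^{1/p}\sigma_{\xi}m^{3/2}/\sqrt{n}$ in \eqref{minmaxthm1boundq}, I would apply the Varshamov--Gilbert lemma to $\pm 1$ patterns populating an $\lfloor r/2\rfloor\times(m-\lfloor r/2\rfloor)$ off-diagonal block and lift them to Hermitian perturbations of $I_m/m,$
$$
\rho_{\theta}=\frac{1}{m}I_m+\delta\begin{pmatrix}0 & B_{\theta}\\ B_{\theta}^{\ast} & 0\end{pmatrix},\qquad \theta\in\Theta\subset\{-1,+1\}^{\lfloor r/2\rfloor(m-\lfloor r/2\rfloor)},
$$
with $|\Theta|\gtrsim 2^{rm}$ and pairwise Hamming distance $\gtrsim rm.$ The perturbation has rank at most $r,$ and random-matrix estimates for sign blocks concentrate its nonzero singular values at order $\delta\sqrt{m},$ so $\|\rho_{\theta}-\rho_{\theta'}\|_p\asymp r^{1/p}\delta\sqrt{m}$ uniformly in $p\in[1,\infty];$ positive semi-definiteness forces $\delta\lesssim m^{-3/2}.$ Calibrating $\delta\asymp\sigma_{\xi}m/\sqrt{n}$ matches the KL budget against $\log|\Theta|\asymp rm$ and generalized Fano delivers the first term wherever it is below $1.$ The second term $(\sigma_{\xi}m^{3/2}/\sqrt{n})^{1-1/p}$ follows from the full-rank analogue of the construction combined with the interpolation inequality \eqref{interpolation} between $\|\cdot\|_1\leq 2$ and the Frobenius rate, while the $\wedge 1$ clause is a trivial two-point Le Cam bound using two density matrices at constant separation.

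The Bures and relative-entropy bounds \eqref{minmaxthm1boundH}, \eqref{minmaxthm1boundK} follow from the same block scheme after retuning: local expansion of the fidelity at $I_m/m$ gives $H^2(\rho_{\theta},\rho_{\theta'})\asymp (m/4)\|\rho_{\theta}-\rho_{\theta'}\|_2^2$ and a matching bound for $K(\rho_{\theta'}\|\rho_{\theta}),$ so translating the Frobenius packing through Fano yields the linear-in-$r$ rate $rt$ with $t=\sigma_{\xi}m^{3/2}/\sqrt{n}.$ The principal obstacle is the simultaneous compatibility of three constraints on $\delta$: positive semi-definiteness on $\mathcal{S}_{r,m}$ (ceiling $m^{-3/2}$); a KL budget that saturates $\log|\Theta|$ from below; and a flat spectral profile of the perturbation, which is what forces the Schatten separation to scale as $r^{1/p}$ uniformly in $p\in[1,\infty]$ and is the reason the construction uses random-like $\pm 1$ off-diagonal blocks rather than block-diagonal sparse perturbations.
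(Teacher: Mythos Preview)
This paper does not actually prove Theorem~\ref{minmaxthm1}: the result is quoted from the authors' earlier paper \cite{Koltch_Xia_15} (see the paragraph introducing the ``Minimax lower bounds'' subsection), so there is no in-paper proof to compare against. Your overall strategy---the KL identity $K(\mathbb{P}_{\rho}^n\|\mathbb{P}_{\rho'}^n)=\frac{n}{2\sigma_{\xi}^2 m^2}\|\rho-\rho'\|_2^2$ via Parseval, followed by Fano over a Varshamov--Gilbert packing built from $\pm 1$ off-diagonal blocks---is indeed the approach used in \cite{Koltch_Xia_15}, and the calibration $\delta\asymp\sigma_{\xi} m/\sqrt{n}$ is correct.

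There is, however, a genuine gap in your packing construction for the leading term $r^{1/p}\sigma_{\xi}m^{3/2}/\sqrt{n}$. You center the perturbation at $I_m/m$, which has full rank $m$; adding a rank-$r$ Hermitian block does not lower the rank, so your $\rho_{\theta}$ lie in $\mathcal{S}_m$ but not in $\mathcal{S}_{r,m}$ when $r<m$, and the supremum over $\mathcal{S}_{r,m}$ in \eqref{minmaxthm1boundq} is not captured. The fix in \cite{Koltch_Xia_15} is to take a rank-$\lfloor r/2\rfloor$ base, for instance $\rho_0=\frac{2}{r}\sum_{j\le r/2} e_j\otimes e_j$, and to perturb by matrices of the form $\frac{2}{r}\sum_{j\le r/2}\bigl(e_j\otimes v_j^{\theta}+v_j^{\theta}\otimes e_j\bigr)$ with $v_j^{\theta}$ supported on the last $m-r/2$ coordinates; then $\rho_{\theta}$ has rank at most $r$ by construction, positive semidefiniteness holds once $|v_j^{\theta}|\le 1$, and the Schatten-$p$ separation still scales as $r^{1/p}\delta\sqrt{m}$. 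Your treatment of the second term and the trivial $\wedge 1$ clause is fine, and the local expansion of $H^2$ and $K$ around a well-conditioned base works once the base is taken to be $\rho_0$ above (with smallest nonzero eigenvalue $2/r$) rather than $I_m/m$.
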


The trace regression model with a bounded response is characterized by the size $U$ of the range of response variable 
$Y,$ which usually coincides with the bound on the operator norms of the basis matrices $E_j.$ 
It includes, in particular, Pauli measurements discussed above (for which $U=m^{-1/2}$).

\begin{assumption}[Trace regression with a bounded response]
\label{bounded_response}
Let $(X,Y)$ be a random couple with $X$ being a random matrix sampled 
from the uniform distribution $\Pi$ in an orthonormal basis ${\mathcal E}=\{E_1,\dots, E_{m^2}\}\subset {\mathbb H}_m$
with $U:=\max_{1\leq j\leq m^2}\|E_j\|_{\infty}$
and $Y$ being a random variable with values in the interval $[-U,U].$ Suppose that, for some density matrix $\rho\in {\mathcal S}_m,$
${\mathbb E}(Y|X)=\langle \rho, X\rangle$ a.s. 
Let $(X_1,Y_1), \dots, (X_n,Y_n)$ be $n$ i.i.d. copies of $(X, Y).$ 
\end{assumption}

Let ${\mathcal P}_{r,m}(U)$ denote the class of all distributions $P$ of $(X,Y)$ such that Assumption \ref{bounded_response} holds for some $U>0$ and ${\mathbb E}(Y|X)= \langle \rho_P, X\rangle$
for some $\rho_P\in {\mathcal S}_{r,m}.$ For a given $P\in {\mathcal P}_{r,m}(U),$ ${\mathbb P}_P$
denotes the corresponding probability measure such that $(X_1,Y_1),\dots, (X_n,Y_n)$ are 
i.i.d. copies of $(X,Y).$

\begin{theorem}
\label{minmaxthm3'''}
Suppose Assumption \ref{bounded_response} is satisfied and, 
for some constant $\gamma \in (0,1),$
\begin{equation}
\label{traceEj}
\Bigl|{\rm tr}(E_k)\Bigr|\leq (1-\gamma)U m,\ k=1,\dots, m^2.
\end{equation}
Then, for all $p\in [1,+\infty],$ there exist constants $c_{\gamma},c_{\gamma}'>0$ such that the following bounds hold:
\begin{equation}
 \label{minmaxthm1boundq_U'''}
 \underset{\hat{\rho}}{\inf}\underset{P\in\mathcal{P}_{r,m}(U)}{\sup}\mathbb{P}_{P}
 \biggl\{\|\hat{\rho}-\rho_P\|_p\geq c_{\gamma}\biggl(r^{1/p}\frac{U m^{\frac{3}{2}}}{\sqrt{n}}\bigwedge \biggl(\frac{U m^{3/2}}{\sqrt{n}}\biggr)^{1-\frac{1}{p}}\bigwedge 1\biggr)\biggr\}\geq c_{\gamma}',
\end{equation}
 \begin{equation}
 \label{minmaxthm1boundH_U'''}
 \underset{\hat{\rho}}{\inf}\underset{P\in\mathcal{P}_{r,m}(U)}{\sup}\mathbb{P}_{P}\biggl\{H^2(\hat{\rho},\rho_P)\geq c_{\gamma}\biggl(r\frac{U m^{\frac{3}{2}}}{\sqrt{n}}\bigwedge 1\biggr)\biggr\}\geq c_{\gamma}',
 \end{equation}
and
 \begin{equation}
 \label{minmaxthm1boundK_U'''}
 \underset{\hat{\rho}}{\inf}\underset{P\in\mathcal{P}_{r,m}(U)}{\sup}\mathbb{P}_{P}\biggl\{K(\rho_P\|\hat{\rho})\geq c_{\gamma}\biggl(r\frac{U m^{\frac{3}{2}}}{\sqrt{n}}\bigwedge 1\biggr)\biggr\}\geq c_{\gamma}',
 \end{equation}
 where $\inf_{\hat{\rho}}$ denotes the infimum over all estimators $\hat{\rho}$ in $\mathcal{S}_m$
 based on the i.i.d. data $(X_1,Y_1), \dots, (X_n,Y_n)$ sampled from $P.$ 
\end{theorem}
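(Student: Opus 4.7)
The plan is to parallel the proof strategy of Theorem \ref{minmaxthm1} (the Gaussian case from \cite{Koltch_Xia_15}), replacing the Gaussian response by a symmetric two-point response supported on $\{-U, U\}.$ The point of condition (\ref{traceEj}) is precisely to guarantee that this two-point model carries the same (up to $\gamma$-dependent constants) amount of information about $\rho$ per observation as the Gaussian model with $\sigma_\xi^2 \asymp U^2,$ so that the Fano/Varshamov--Gilbert argument used in the Gaussian case transfers verbatim.

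I would first reuse the packing sets $\{\rho_1,\ldots,\rho_M\}\subset {\mathcal S}_{r,m}$ constructed in the proof of Theorem \ref{minmaxthm1}. Those packings consist of small perturbations of the maximally mixed state, $\rho_i = I_m/m + \tau R_i,$ where the $R_i$ are traceless Hermitian matrices of rank at most $r$ with bounded operator norm, and $\tau$ is calibrated to the target rates (now with $U$ in place of $\sigma_\xi$). For each $\rho$ in the packing, define the conditional law of $Y$ given $X=E_k$ as the two-point distribution on $\{-U,U\}$ with mean $\langle\rho,E_k\rangle;$ since $|\langle\rho,E_k\rangle|\leq \|\rho\|_1\|E_k\|_\infty\leq U,$ this is a valid probability, and the resulting joint law $\mathbb{P}_{\rho}$ of $(X_1,Y_1),\dots,(X_n,Y_n)$ belongs to ${\mathcal P}_{r,m}(U)$ with $\rho_P=\rho.$

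The key step is bounding $K(\mathbb{P}_{\rho_i}\|\mathbb{P}_{\rho_j}).$ Writing $a_k = \langle \rho_i, E_k\rangle/U$ and $b_k = \langle \rho_j, E_k\rangle/U,$ the single-observation Bernoulli KL divergence (conditional on $X=E_k$) satisfies the standard bound $(a_k-b_k)^2/(1-b_k^2).$ This is where condition (\ref{traceEj}) enters: since $\rho_j = I_m/m + \tau R_j,$
$$
|b_k|\leq \frac{|{\rm tr}(E_k)|}{Um} + \tau\,\frac{|\langle R_j,E_k\rangle|}{U} \leq (1-\gamma) + O_\gamma(\tau r),
$$
so for $\tau$ in the minimax regime one has $|b_k|\leq 1-\gamma/2$ and hence $1-b_k^2\gtrsim_\gamma 1$ uniformly in $j$ and $k.$ Averaging over $X\sim \text{Unif}({\mathcal E})$ and multiplying by $n$ yields
$$
K(\mathbb{P}_{\rho_i}\|\mathbb{P}_{\rho_j})\lesssim_\gamma \frac{n\,\|\rho_i-\rho_j\|_2^2}{m^2 U^2},
$$
which matches the Gaussian bound with $\sigma_\xi$ replaced by $U$ (up to a $\gamma$-dependent constant).

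From this point on, the Fano inequality argument from the proof of Theorem \ref{minmaxthm1} applies mechanically and yields (\ref{minmaxthm1boundq_U'''}), (\ref{minmaxthm1boundH_U'''}), and (\ref{minmaxthm1boundK_U'''}) after the substitution $\sigma_\xi\leftrightarrow U.$ The main obstacle is the uniform lower bound on $1-b_k^2$: condition (\ref{traceEj}) is exactly what guarantees this for packings concentrated near $I_m/m,$ and this also explains why the constants $c_\gamma, c_\gamma'$ must degenerate as $\gamma\to 0$ (when $|{\rm tr}(E_k)|/(Um)\to 1,$ the Bernoulli response becomes nearly deterministic for some $E_k$ and the KL blows up, invalidating the reduction).
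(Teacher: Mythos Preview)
The paper does not actually prove Theorem \ref{minmaxthm3'''}: it is quoted verbatim from \cite{Koltch_Xia_15} as background (see the paragraph introducing Section 2.3), and no argument for it appears anywhere in the present paper. So there is no ``paper's own proof'' to compare your proposal against.

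That said, your sketch is the expected and correct route, and it is precisely the reduction carried out in \cite{Koltch_Xia_15}. You have identified the essential point: the two-point response on $\{-U,U\}$ with mean $\langle\rho,E_k\rangle$ gives a per-observation KL of the form $(a_k-b_k)^2/(1-b_k^2)$, and condition (\ref{traceEj}) is exactly what keeps $1-b_k^2$ bounded away from $0$ uniformly over the packing centered at $I_m/m$. Once that uniform bound is in place, the Hilbert--Schmidt KL bound $K(\mathbb{P}_{\rho_i}\|\mathbb{P}_{\rho_j})\lesssim_\gamma n\|\rho_i-\rho_j\|_2^2/(m^2U^2)$ matches the Gaussian case with $\sigma_\xi\leftrightarrow U$, and the Varshamov--Gilbert/Fano machinery from the Gaussian proof carries over without change. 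Your explanation of why the constants degenerate as $\gamma\to 0$ is also correct. One small comment: the bound $O_\gamma(\tau r)$ on the perturbation term $\tau|\langle R_j,E_k\rangle|/U$ depends on how the packing is normalized (whether you control $\|R_j\|_1$ or $\|R_j\|_2$), but in every standard construction the term is $o(1)$ in the minimax regime, so this does not affect the argument.
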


As it was pointed out in \cite{Koltch_Xia_15} (see Remark 12), if $\gamma$ in condition (\ref{traceEj}) is small 
enough (say, $\gamma<1-\frac{1}{\sqrt{2}}$), then, in a given orthonormal basis ${\mathcal E},$ there exists {\it at most one}  matrix $E_j$ such that ${\rm tr}(E_j)>(1-\gamma)Um.$ In the case of Pauli basis, such a matrix indeed 
exists and it is $E_1=W_0\otimes \dots \otimes W_0.$ Thus, Theorem \ref{minmaxthm3'''} does not apply directly to the Pauli measurement model. However,
the following result does hold (see \cite{Koltch_Xia_15}, Theorem 10).

\begin{theorem}
\label{minmaxthm3_Pauli}
Let $\{E_1,\dots, E_{m^2}\}$ be the Pauli basis in the space ${\mathbb H}_m$ of $m\times m$ Hermitian 
matrices and let $X_1,\dots, X_n$ be i.i.d. random variables sampled from the uniform distribution 
in $\{E_1,\dots, E_{m^2}\}.$ Let $Y_1,\dots, Y_n$ be outcomes of measurements of observables $X_1,\dots, X_n$
for the system being identically prepared $n$ times in state $\rho.$ The corresponding probability measure 
will be denoted by ${\mathbb P}_{\rho}.$
Then, for all $p\in [1,+\infty],$ there exist constants $c,c'>0$ such that the following bounds hold:
\begin{equation}
 \label{minmaxthm1boundq_U_P}
 \underset{\hat{\rho}}{\inf}\underset{\rho\in\mathcal{S}_{r,m}}{\sup}\mathbb{P}_{\rho}
 \biggl\{\|\hat{\rho}-\rho\|_p\geq c\biggl(r^{1/p}\frac{m}{\sqrt{n}}\bigwedge \biggl(\frac{m}{\sqrt{n}}\biggr)^{1-\frac{1}{p}}\bigwedge 1\biggr)\biggr\}\geq c',
\end{equation}
 \begin{equation}
 \label{minmaxthm1boundH_U_P}
 \underset{\hat{\rho}}{\inf}\underset{\rho\in\mathcal{S}_{r,m}}{\sup}\mathbb{P}_{\rho}\biggl\{H^2(\hat{\rho},\rho)\geq c\biggl(r\frac{m}{\sqrt{n}}\bigwedge 1\biggr)\biggr\}\geq c',
 \end{equation}
and
 \begin{equation}
 \label{minmaxthm1boundK_U_P}
 \underset{\hat{\rho}}{\inf}\underset{\rho\in\mathcal{S}_{r,m}}{\sup}\mathbb{P}_{\rho}\biggl\{K(\rho\|\hat{\rho})\geq c\biggl(r\frac{m}{\sqrt{n}}\bigwedge 1\biggr)\biggr\}\geq c',
 \end{equation}
 where $\inf_{\hat{\rho}}$ denotes the infimum over all estimators $\hat{\rho}$ in $\mathcal{S}_m$
 based on the data $(X_1,Y_1), \dots, (X_n,Y_n).$ 
\end{theorem}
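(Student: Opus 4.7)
Since Theorem \ref{minmaxthm3_Pauli} has essentially the form of Theorem \ref{minmaxthm3'''} with $U=m^{-1/2}$ (substituting $U=m^{-1/2}$ into the rate $Um^{3/2}/\sqrt{n}$ gives exactly $m/\sqrt{n}$, matching all three claimed bounds), the plan is to adapt the proof of Theorem \ref{minmaxthm3'''} to the Pauli setting. The only obstruction is that the first Pauli basis element $E_1=\frac{1}{\sqrt m}I_m$ violates condition (\ref{traceEj}): its trace equals $\sqrt{m}=Um$.

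The key observation is that $E_1$ is inferentially trivial: because $\langle\rho,E_1\rangle=\frac{1}{\sqrt m}\mathrm{tr}(\rho)=\frac{1}{\sqrt m}$ for every $\rho\in{\mathcal S}_m$, any sample with $X_j=E_1$ yields the deterministic outcome $Y_j=1/\sqrt m$ and carries no information about $\rho$. All other basis matrices $E_k=W_{i_1}\otimes\cdots\otimes W_{i_k}$ for $k=2,\ldots,m^2$ involve at least one of the traceless Pauli matrices $\sigma_1,\sigma_2,\sigma_3$ and hence satisfy $\mathrm{tr}(E_k)=0$. Thus on the reduced basis $\mathcal{E}':=\{E_2,\ldots,E_{m^2}\}$ condition (\ref{traceEj}) holds with $\gamma$ arbitrarily close to $1$.

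Building on this, I would condition on $N:=|\{j\le n: X_j\neq E_1\}|$. Conditional on $N=N_0$, the informative subsample $\{(X_j,Y_j):X_j\neq E_1\}$ is i.i.d., with $X_j$ uniform on $\mathcal{E}'$ and $Y_j\in[-U,U]$ satisfying ${\mathbb E}_\rho(Y_j|X_j)=\langle\rho,X_j\rangle$---exactly the bounded-response model of Assumption \ref{bounded_response} on the basis $\mathcal{E}'$ (up to the cosmetic difference that $\mathcal{E}'$ has $m^2-1$ elements rather than $m^2$, which only affects constants). Since $N\sim\mathrm{Binomial}(n,1-m^{-2})$ satisfies $\mathbb{P}\{N\ge n/2\}\ge 1/2$ for $m\ge 2$, the full Pauli minimax risk with $n$ samples is bounded below, up to a constant factor, by the reduced-basis minimax risk with $n/2$ samples. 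Applying Theorem \ref{minmaxthm3'''} (or, more precisely, the argument used in its proof in \cite{Koltch_Xia_15}) on $\mathcal{E}'$ with $U=m^{-1/2}$ then yields the three claimed lower bounds with constants absorbed into $c,c'$.

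The main obstacle is verifying that the hypothesis construction behind Theorem \ref{minmaxthm3'''}---typically built from well-separated rank-$r$ perturbations of a reference density matrix combined with Fano-type KL divergence bounds---goes through on an almost-basis $\mathcal{E}'$ of $m^2-1$ orthonormal elements rather than a full basis of $m^2$. The key inputs to the KL bound are the orthonormality of the sampled observables and the trace condition (\ref{traceEj}); both are intact on $\mathcal{E}'$, so the argument transfers with only numerical constants changed. The fact that Pauli outcomes take just the two values $\pm 1/\sqrt m$ rather than varying continuously over $[-U,U]$ is not an impediment: it only makes the conditional variance bound underlying Assumption \ref{bounded_response} tight, and in fact simplifies the KL divergence calculations to those for two-point Bernoulli responses.
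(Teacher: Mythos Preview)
The paper does not actually prove Theorem \ref{minmaxthm3_Pauli}: it is simply quoted from the companion paper \cite{Koltch_Xia_15} (see the sentence immediately preceding the statement, ``However, the following result does hold (see \cite{Koltch_Xia_15}, Theorem 10)''). There is therefore no in-paper proof to compare your proposal against.

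That said, your reconstruction is a reasonable sketch of how such a proof would go, and the core observation---that $E_1=\frac{1}{\sqrt m}I_m$ is inferentially vacuous while $E_2,\ldots,E_{m^2}$ all have trace zero and hence satisfy (\ref{traceEj}) with any $\gamma<1$---is exactly what makes the Pauli case tractable. Two remarks. First, rather than the binomial conditioning on $N$, it is cleaner to work directly with the hypothesis-testing construction behind Theorem \ref{minmaxthm3'''}: since samples with $X_j=E_1$ have the same deterministic distribution under every $\rho\in{\mathcal S}_m$, they contribute zero to any KL divergence between observation laws, and the Fano-type bound reduces immediately to what one gets from the $m^2-1$ traceless basis elements. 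This sidesteps the sup/expectation interchange implicit in your conditioning argument. Second, you are right to flag that Theorem \ref{minmaxthm3'''} as stated does not by itself imply Theorem \ref{minmaxthm3_Pauli}: the former takes the supremum over the entire class ${\mathcal P}_{r,m}(U)$ of bounded-response distributions, whereas the Pauli model is a specific member of that class, and a minimax lower bound over a larger class does not transfer to a smaller one. One must indeed verify that the hard instances in the construction of \cite{Koltch_Xia_15} can be realized with the two-point Pauli response distribution; your closing paragraph correctly identifies this as the real content of the argument.
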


It was also shown in \cite{Koltch_Xia_15} that, in the case of Schatten $p$-norm distances for $p\in [1,2],$
Bures distance and Kullback-Leibler distance, the minimax lower bounds of theorems \ref{minmaxthm1}, \ref{minmaxthm3'''}
and \ref{minmaxthm3_Pauli} are attained up to logarithmic factors in $m$ and $n$ for a penalized least squares 
estimator with von Neumann entropy penalty introduced in \cite{Koltchinskii2011neumann}. In the current paper, our main goal 
is to show that the minimax optimal rates are attained up to logarithmic factors for a very simple minimal distance 
estimator (that does not require any penalization) {\it in the whole range} of Schatten $p$-norms, $p\in [1,\infty],$
as well as for Bures and Kullback-Leibler distances.

\section{Main Results}
 
For the model of uniform sampling from an orthonormal basis ${\mathcal E}=\{E_1, \dots, E_{m^2}\},$
the following simple estimator of unknown state $\rho\in {\mathcal S}_m$ is unbiased:
$$
\hat Z:= \frac{m^2}{n}\sum_{j=1}^n Y_j X_j.
$$  
Indeed, 
$$
{\mathbb E}_{\rho}\hat Z= m^2 {\mathbb E}_{\rho}(YX)= m^2 {\mathbb E}({\mathbb E}_{\rho}(Y|X)X)=
m^2{\mathbb E}{\rm tr}(\rho X)X 
$$ 
$$ 
=m^2 {\mathbb E}\langle \rho, X\rangle X=m^2 \frac{1}{m^2}\sum_{j=1}^{m^2} \langle \rho, E_j\rangle E_j= \rho.
$$ 
Clearly, $\hat Z$ is not necessarily a density matrix. 

We will now define the minimal distance estimator $\check \rho$ as the projection of $\hat Z$ onto the convex set 
${\mathcal S}_m$ of all density matrices. More precisely,
for an arbitrary $Z\in {\mathbb H}_m,$ define 
\begin{equation}
\label{def_projection}
\pi_{{\mathcal S}_m}(Z):= {\rm argmin}_{S\in {\mathcal S}_m}\|Z-S\|_2^2.
\end{equation}
Clearly, $\pi_{{\mathcal S}_m}(Z)$ is the closest density matrix to $Z$ with respect 
to the Hilbert--Schmidt norm distance (that is, the projection 
of $Z$ onto ${\mathcal S}_m;$ such a closest density matrix exists in view of compactness of 
${\mathcal S}_m$ and it is unique in view of strict convexity of $S\mapsto \|Z-S\|_2^2$). 
Let 
$$
\check \rho := \pi_{{\mathcal S}_m}(\hat Z).
$$

\begin{remark}
This definition is equivalent to the following
\begin{eqnarray}
\label{mod_LS}
&
\nonumber
\check \rho :=
{\rm argmin}_{S\in {\mathcal S}_m}\biggl[-\frac{2}{n} \sum_{j=1}^n Y_j \langle S,X_j\rangle + m^{-2}\|S\|_2^2\biggr] 
=
\\
&
{\rm argmin}_{S\in {\mathcal S}_m}\biggl[\frac{1}{n}\sum_{j=1}^n Y_j^2 -\frac{2}{n} \sum_{j=1}^n Y_j \langle S,X_j\rangle + m^{-2}\|S\|_2^2\biggr] 
\end{eqnarray}
that was considered in  \cite{Koltchinskii2013remark} (in \cite{Koltchinskii2011nuclear}, similar estimators
involving nuclear norm penalty were studied). Note that replacing the term $m^{-2}\|S\|_2^2$ in the right hand side of 
(\ref{mod_LS}) by its unbiased estimator $n^{-1}\sum_{j=1}^n \langle S,X_j\rangle^2$ yields the usual least 
squares estimator 
\begin{equation}
\label{LS}
\hat \rho := {\rm argmin}_{S\in {\mathcal S}_m}\biggl[n^{-1}\sum_{j=1}^n (Y_j-\langle S,X_j\rangle)^2\biggr].
\end{equation}
Note that we also have 
\begin{equation}
\label{LS''}
\hat \rho := {\rm argmin}_{S\in {\mathcal S}_m}\biggl[n^{-1}\sum_{j=1}^n (Y_j-\langle S,X_j\rangle)^2+\eps \|S\|_1\biggr]
\end{equation}
since, for $S\in {\mathcal S}_m,$  $\|S\|_1={\rm tr}(S)=1.$ Thus, $\hat \rho$ coincides with the nuclear norm 
penalized least squares estimator (also called the matrix LASSO estimator) for any value of the regularization 
parameter $\eps.$
\end{remark}

We will show that the upper bounds on the error rates in Schatten $p$-norm distances for $p\in [1,\infty]$ and in Bures 
distance that match the minimax lower bounds of theorems \ref{minmaxthm1}, \ref{minmaxthm3'''}
and \ref{minmaxthm3_Pauli} up to logarithmic factors hold for the estimator $\check \rho.$ 
We will then introduce a simple modification of this estimator for which a matching upper 
bound holds also for Kullback-Leibler distance.  

First, we consider the case of Gaussian trace regression model (Assumption \ref{Gaussian_noise}). 
We need an additional assumption that $\sigma_{\xi}\geq \frac{U}{m^{1/2}}$ (the variance
of the noise is not too small).

\begin{theorem}
\label{minmaxthm1_upper}
Suppose Assumption \ref{Gaussian_noise} holds and $\sigma_{\xi}\geq \frac{U}{m^{1/2}}.$ 
For all $p\in [1,+\infty],$ there exists a constant $C>0$ such that, for all $A\geq 1$ the following bounds hold:  
\begin{equation}
 \label{minmaxthm1boundq_upper}
 \underset{\rho\in\mathcal{S}_{r,m}}{\sup}\mathbb{P}_{\rho}
 \biggl\{\|\check{\rho}-\rho\|_p\geq C\biggl(r^{1/p}\frac{\sigma_{\xi}m^{\frac{3}{2}}\sqrt{A\log (2m)}}{\sqrt{n}}\bigwedge 
 \biggl(\frac{\sigma_{\xi}m^{3/2}\sqrt{A\log (2m)}}{\sqrt{n}}\biggr)^{1-\frac{1}{p}}\bigwedge 1\biggr)\biggr\}\leq (2m)^{-A}
\end{equation}
and 
\begin{equation}
\label{minmaxthm1boundH_upper}
\underset{\rho\in\mathcal{S}_{r,m}}{\sup}\mathbb{P}_{\rho}\biggl\{H^2(\check{\rho},\rho)\geq c\biggl(r\frac{\sigma_{\xi}m^{\frac{3}{2}}\sqrt{A\log (2m)}}{\sqrt{n}}\bigwedge 1\biggr)\biggr\}\leq (2m)^{-A}.
\end{equation}
If $\sigma_{\xi}< \frac{U}{m^{1/2}},$ the bounds  
still hold with $\sigma_{\xi}$ replaced by $\frac{U}{m^{1/2}}.$
\end{theorem}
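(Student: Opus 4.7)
The plan is to reduce everything to a single high-probability operator-norm bound on $\hat Z-\rho$ via matrix Bernstein's inequality, then propagate this bound through the projection $\pi_{\mathcal{S}_m}$ by the structural Theorems \ref{th:min-dist} and \ref{th_main_rank} announced in the introduction. Interpolation (\ref{interpolation}) together with the comparison (\ref{compare_distances}) will then yield the claims for every Schatten $p$-norm, $p\in [1,\infty]$, and for the Bures distance, all from the same deviation event.

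For the Bernstein step, write
\[
\hat Z - \rho = \frac{1}{n}\sum_{j=1}^n W_j,\qquad W_j := m^2 Y_j X_j - \rho,
\]
a sum of i.i.d., centered, Hermitian random matrices. Under Assumption \ref{Gaussian_noise}, $|Y_j|$ is sub-Gaussian with scale of order $\sigma_\xi+U$, and each $W_j$ is sub-exponential. A careful bound on the matrix variance $\|\mathbb{E} W_j^2\|_\infty$, using the identity $\mathbb{E} X^2 = m^{-2}\sum_k E_k^2$ and the Parseval structure of the basis, combined with a sub-Gaussian matrix Bernstein inequality (with a mild truncation to handle the Gaussian tails of $|Y_j|$), yields for any $A\geq 1$, with probability at least $1-(2m)^{-A}$,
\[
\|\hat Z - \rho\|_\infty \leq C\,\sigma_\xi\, m^{3/2}\sqrt{\frac{A\log(2m)}{n}} =: \delta.
\]
The hypothesis $\sigma_\xi\geq U/m^{1/2}$ ensures that the sub-Gaussian contribution of the noise dominates the bounded contribution of $\langle\rho,X\rangle$ in the variance term; if $\sigma_\xi<U/m^{1/2}$, the same argument with $\sigma_\xi$ replaced by the floor $U/m^{1/2}$ gives the final assertion of the theorem.

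The main obstacle is the second step: propagating $\delta$ through $\pi_{\mathcal{S}_m}$, since this map is a contraction only in the Hilbert--Schmidt norm, not in the operator or nuclear norm. This is precisely where Theorems \ref{th:min-dist} and \ref{th_main_rank} enter, supplying the two stability estimates
\[
\|\check\rho - \rho\|_\infty \leq C\|\hat Z-\rho\|_\infty\qquad \text{and}\qquad \|\check\rho - \rho\|_1 \leq Cr\|\hat Z-\rho\|_\infty\ \text{ whenever }\mathrm{rank}(\rho)\leq r.
\]
The operator-norm bound follows from the explicit description of $\pi_{\mathcal{S}_m}$ as a simplex projection in the eigenbasis of $\hat Z$ (eigenvalues are shifted by a single scalar Lagrange multiplier determined by the trace constraint and clipped at $0$), so individual eigenvalue perturbations are stable in $\|\cdot\|_\infty$. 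The nuclear-norm bound uses, in addition, the trace constraint $\mathrm{tr}(\check\rho)=\mathrm{tr}(\rho)=1$ and the rank-$r$ structure of $\rho$ to control the trace-class mass that $\check\rho$ places on the orthogonal complement of the range of $\rho$.

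Finally, combine. With $\|\check\rho - \rho\|_\infty \lesssim\delta$, $\|\check\rho - \rho\|_1 \lesssim r\delta$, and the trivial bound $\|\check\rho - \rho\|_1\leq 2$, the interpolation inequality (\ref{interpolation}) applied between $L^1$ and $L^\infty$ (i.e.\ with $\mu=1/p$) gives
\[
\|\check\rho - \rho\|_p \lesssim \min\Bigl(r^{1/p}\delta,\ \delta^{\,1-1/p},\ 1\Bigr),\qquad p\in [1,\infty],
\]
which is exactly (\ref{minmaxthm1boundq_upper}). The Bures bound (\ref{minmaxthm1boundH_upper}) then follows from the first half of (\ref{compare_distances}): $H^2(\check\rho,\rho)\leq \|\check\rho-\rho\|_1\lesssim r\delta$, together with the trivial $H^2\leq 2$.
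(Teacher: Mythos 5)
Your proposal is correct in its essential structure and follows the same strategy as the paper: a matrix Bernstein bound on $\|\hat Z-\rho\|_\infty$, propagation of that bound through the projection $\pi_{{\mathcal S}_m}$ via Theorems~\ref{th:min-dist} and~\ref{th_main_rank}, interpolation for general $p\in[1,\infty]$, and inequality (\ref{compare_distances}) for the Bures distance. The only substantive deviation is in the Bernstein step: you treat $\hat Z-\rho = n^{-1}\sum_j W_j$ with $W_j = m^2 Y_j X_j - \rho$ as a single sum of sub-exponential random matrices and invoke one sub-Gaussian Bernstein inequality (with a truncation), whereas the paper splits
$$
\hat Z-\rho = \Bigl(\tfrac{m^2}{n}\textstyle\sum_j \langle\rho,X_j\rangle X_j-\rho\Bigr) + \tfrac{m^2}{n}\textstyle\sum_j \xi_j X_j,
$$
applying the bounded-matrix Bernstein inequality (Lemma~\ref{Bernstein_standard}) to the first summand and the unbounded $\psi_2$ version (Lemma~\ref{Bernstein_unbounded}) to the second, and then verifying separately in each piece that the residual term can be dropped whenever the leading term exceeds $1$. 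The paper's split makes the variance and $\psi_2$-norm computations and the dominance of the Gaussian term under $\sigma_\xi\geq U m^{-1/2}$ cleaner to verify; your one-shot version gives the same final rate but you would still need to carry out the variance bound $\|\mathbb{E}W^2\|_\infty\lesssim m^3\sigma_\xi^2$ and check that the $\psi_2$-type correction term can be discarded, which the sketch ("careful bound", "mild truncation") currently leaves implicit. These are routine details, so the proposal is sound.
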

 
Similarly, in the case of trace regression with a bounded response, the following result holds.  
 
\begin{theorem}
\label{minmaxthm3'''_upper}
Suppose Assumption \ref{bounded_response} is satisfied. 
Then, for all $p\in [1,+\infty],$ there exists a constant $C>0$ such that, for all $A\geq 1,$  
the following bounds hold:
\begin{equation}
 \label{minmaxthm1boundq_U'''_upper}
 \underset{P\in\mathcal{P}_{r,m}(U)}{\sup}\mathbb{P}_{P}
 \biggl\{\|\check{\rho}-\rho_P\|_p\geq C\biggl(r^{1/p}\frac{U m^{\frac{3}{2}}\sqrt{A\log (2m)}}{\sqrt{n}}\bigwedge \biggl(\frac{U m^{3/2}\sqrt{A\log (2m)}}{\sqrt{n}}\biggr)^{1-\frac{1}{p}}\bigwedge 1\biggr)\biggr\}\leq (2m)^{-A}
\end{equation}
and 
\begin{equation}
\label{minmaxthm1boundH_U'''_upper}
\underset{P\in\mathcal{P}_{r,m}(U)}{\sup}\mathbb{P}_{P}\biggl\{H^2(\check{\rho},\rho_P)\geq C\biggl(r\frac{U m^{\frac{3}{2}}\sqrt{A\log (2m)}}{\sqrt{n}}\bigwedge 1\biggr)\biggr\}\leq (2m)^{-A}. 
\end{equation}
\end{theorem}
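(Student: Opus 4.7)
The plan is to establish operator-norm concentration of the pilot estimator $\hat Z$ via matrix Bernstein, and then invoke the projection-structure Theorems \ref{th:min-dist} and \ref{th_main_rank} to upgrade this to Schatten-$p$ and Bures bounds on $\check\rho-\rho=\pi_{\mathcal S_m}(\hat Z)-\rho.$ The overall structure parallels the Gaussian-noise proof of Theorem \ref{minmaxthm1_upper}, with the Gaussian tail replaced by Bernstein's bounded-response tail.

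First I would write $\hat Z-\rho=\frac{m^2}{n}\sum_{j=1}^n W_j$ with independent, mean-zero $W_j=Y_jX_j-\mathbb E(YX).$ Since $|Y_j|\leq U$ and $\|X_j\|_\infty\leq U,$ we have $\|W_j\|_\infty\lesssim U^2,$ so each rescaled summand satisfies $\|\frac{m^2}{n}W_j\|_\infty\lesssim m^2U^2/n.$ For the matrix variance, $\mathbb E[Y^2X^2]\preccurlyeq U^2\mathbb E[X^2]=\frac{U^2}{m^2}\sum_{k=1}^{m^2}E_k^2.$ The decisive algebraic identity
\begin{equation*}
\sum_{k=1}^{m^2}E_k^2=m\,I_m
\end{equation*}
holds for \emph{every} orthonormal basis of $\mathbb H_m:$ the quadratic form $u\mapsto\sum_k\|E_k u\|^2$ is the squared Hilbert--Schmidt norm of the linear map $E\mapsto Eu$ and thus independent of the basis used for $\mathbb H_m;$ in the canonical real basis of $\mathbb H_m$ one checks by direct computation that it equals $m\|u\|^2.$ Consequently $\|\mathbb E[X^2]\|_\infty\leq 1/m$ and $\sigma^2:=\|\sum_j\mathbb E(\tfrac{m^2}{n}W_j)^2\|_\infty\lesssim m^3U^2/n.$ Matrix Bernstein (Tropp) then yields, for $A\geq 1,$ with probability $\geq 1-(2m)^{-A},$
\begin{equation*}
\|\hat Z-\rho\|_\infty\lesssim \frac{Um^{3/2}\sqrt{A\log(2m)}}{\sqrt n}+\frac{m^2U^2A\log(2m)}{n}=:\Delta,
\end{equation*}
and the second (linear-regime) term is dominated by the first whenever $\Delta\lesssim 1,$ the only regime in which the claim is nontrivial.

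On this event, Theorems \ref{th:min-dist} and \ref{th_main_rank} supply, for $\rho\in\mathcal S_{r,m},$ the two projection bounds $\|\check\rho-\rho\|_1\lesssim r\|\hat Z-\rho\|_\infty$ and $\|\check\rho-\rho\|_\infty\lesssim\|\hat Z-\rho\|_\infty.$ The interpolation inequality (\ref{interpolation}) between $p=1$ and $p=\infty$ then produces $\|\check\rho-\rho\|_p\leq\|\check\rho-\rho\|_1^{1/p}\|\check\rho-\rho\|_\infty^{1-1/p}\lesssim r^{1/p}\Delta,$ which is the first branch of (\ref{minmaxthm1boundq_U'''_upper}). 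The second branch $(Um^{3/2}\sqrt{A\log(2m)}/\sqrt n)^{1-1/p}$ follows from the same interpolation with $\|\check\rho-\rho\|_1$ replaced by the trivial bound $\|\check\rho\|_1+\|\rho\|_1=2,$ and the outer $\wedge 1$ is automatic from $\|\check\rho-\rho\|_p\leq 2.$ For the Bures estimate (\ref{minmaxthm1boundH_U'''_upper}), inequality (\ref{compare_distances}) gives $H^2(\check\rho,\rho)\leq\|\check\rho-\rho\|_1\lesssim r\Delta.$

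The main obstacle is the content of the two projection theorems themselves. For the nuclear-norm bound, the natural approach combines the variational inequality $\|\check\rho-\rho\|_2^2\leq\langle\hat Z-\rho,\check\rho-\rho\rangle$ (optimality of the Euclidean projection onto the convex set $\mathcal S_m$) with a decomposition along the rank-$r$ range projection $P$ of $\rho:$ writing $\check\rho-\rho=A+P_\perp\check\rho P_\perp$ with $A$ of rank $\leq 2r,$ the trace constraints $\operatorname{tr}(\check\rho)=\operatorname{tr}(\rho)=1$ force $\|P_\perp\check\rho P_\perp\|_1=\operatorname{tr}(P_\perp\check\rho P_\perp)=-\operatorname{tr}(P(\check\rho-\rho)P)\leq\sqrt r\,\|\check\rho-\rho\|_2,$ yielding $\|\check\rho-\rho\|_1\lesssim\sqrt r\,\|\check\rho-\rho\|_2,$ which when substituted back into the variational inequality bootstraps to $\|\check\rho-\rho\|_2\lesssim\sqrt r\,\|\hat Z-\rho\|_\infty$ and hence the claimed $r$-rate in the nuclear norm. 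The operator-norm bound is more delicate: since $\check\rho$ shares eigenvectors with $\hat Z$ and has soft-thresholded eigenvalues $(\lambda_i(\hat Z)-\tau)_+$ for a scalar threshold $\tau,$ one must bound $|\tau|$ by $\|\hat Z-\rho\|_\infty$ through a careful Weyl-type argument on the top-$r$ eigenvalues together with the trace constraint, and then exploit $\|\check\rho-\hat Z\|_\infty\lesssim|\tau|+|\lambda_{\min}(\hat Z)|_{-}$ to close the loop.
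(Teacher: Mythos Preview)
Your proof is correct and, in its working core (Bernstein concentration for $\|\hat Z-\rho\|_\infty$, then Theorem~\ref{th_main_rank}, then $H^2\leq\|\cdot\|_1$), matches the paper's argument essentially line for line; the paper applies Theorem~\ref{th_main_rank} directly for general $p$ rather than interpolating from the endpoints, but the content is the same.

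Your final paragraph is strictly speaking unnecessary, since Theorems~\ref{th:min-dist} and~\ref{th_main_rank} are proved independently in the paper and need only be cited here. That said, your variational-inequality route to the nuclear-norm bound is a genuine alternative to the paper's Lemma~\ref{rank_r}: the paper bounds $\|\check\rho-\rho\|_1$ by first invoking Theorem~\ref{th:min-dist} to control $\|\check\rho-\rho\|_\infty$ and then using the low-rank ``cone'' inequality $\|{\mathcal P}_L^\perp(S'-S)\|_1\leq\|{\mathcal P}_L(S'-S)\|_1$; you instead exploit the Euclidean-projection inequality $\|\check\rho-\rho\|_2^2\leq\langle\hat Z-\rho,\check\rho-\rho\rangle$ together with the trace constraint to get $\|\check\rho-\rho\|_1\lesssim\sqrt r\,\|\check\rho-\rho\|_2$, and bootstrap. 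Your approach is more elementary for $p\in[1,2]$ and for the Bures bound (it never touches Theorem~\ref{th:min-dist}), but it does not by itself deliver the operator-norm rate, so for the full range $p\in[1,\infty]$ you still need the soft-thresholding analysis encapsulated in Theorem~\ref{th:min-dist}, which the paper carries out via the simplex projection Lemma~\ref{lemma_odin} and Weyl's inequality, essentially as you sketch.
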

 
For completeness, we state also the upper bounds in the case of Pauli measurements 
(that immediately follow from Theorem \ref{minmaxthm3'''_upper}). 

\begin{theorem}
\label{minmaxthm3_Pauli_upper}
Suppose the assumptions of Theorem \ref{minmaxthm3_Pauli} hold. 
Then, for all $p\in [1,+\infty],$ there exists a constant $C$ such that, for all $A\geq 1,$ the following bounds hold:
\begin{equation}
 \label{minmaxthm1boundq_U_P}
 \underset{\rho\in\mathcal{S}_{r,m}}{\sup}\mathbb{P}_{\rho}
 \biggl\{\|\check{\rho}-\rho\|_p\geq c\biggl(r^{1/p}\frac{m\sqrt{A\log (2m)}}{\sqrt{n}}\bigwedge \biggl(\frac{m\sqrt{A\log (2m)}}{\sqrt{n}}\biggr)^{1-\frac{1}{p}}\bigwedge 1\biggr)\biggr\}\leq (2m)^{-A}
\end{equation}
and
 \begin{equation}
 \label{minmaxthm1boundH_U_P}
 \underset{\rho\in\mathcal{S}_{r,m}}{\sup}\mathbb{P}_{\rho}\biggl\{H^2(\check{\rho},\rho)\geq c\biggl(r\frac{m}{\sqrt{n}}\bigwedge 1\biggr)\biggr\}\leq (2m)^{-A}. 
 \end{equation}
\end{theorem}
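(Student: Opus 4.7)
The plan is to deduce Theorem \ref{minmaxthm3_Pauli_upper} as an immediate specialization of Theorem \ref{minmaxthm3'''_upper}. The essential observation is that Assumption \ref{bounded_response} holds automatically for the Pauli measurement model with $U=m^{-1/2}$, and the trace condition (\ref{traceEj})---which caused difficulty on the lower-bound side because of $E_1=\frac{1}{\sqrt m}I_m$---is \emph{not} needed anywhere in the upper-bound argument of Theorem \ref{minmaxthm3'''_upper}. Hence, unlike the lower bound (for which a separate argument was required to produce Theorem \ref{minmaxthm3_Pauli}), the upper bound transfers to the Pauli setting with no additional work.

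Concretely, I would first verify Assumption \ref{bounded_response}. The design $X=E_\nu$ is sampled uniformly from the Pauli basis $\{E_1,\ldots,E_{m^2}\}\subset \mathbb{H}_m$, and the corresponding outcome $Y=\tau_\nu$ takes values in $\{\pm m^{-1/2}\}$, hence $|Y|\leq U:= \max_{j}\|E_j\|_\infty=m^{-1/2}$. Moreover $\mathbb{E}_\rho(Y|X=E_j)=\mathbb{E}_\rho \tau_j=\langle\rho, E_j\rangle$, as computed in the Pauli example above. Thus the joint law of $(X,Y)$ belongs to $\mathcal{P}_{r,m}(U)$ with $U=m^{-1/2}$ whenever $\rho\in \mathcal{S}_{r,m}$.

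Next I would simply substitute $U=m^{-1/2}$ into the conclusions (\ref{minmaxthm1boundq_U'''_upper}) and (\ref{minmaxthm1boundH_U'''_upper}) of Theorem \ref{minmaxthm3'''_upper}. This gives $Um^{3/2}/\sqrt{n}=m/\sqrt{n}$, so the Schatten-$p$ rate becomes
\[
r^{1/p}\frac{m\sqrt{A\log(2m)}}{\sqrt n}\wedge \biggl(\frac{m\sqrt{A\log(2m)}}{\sqrt n}\biggr)^{1-1/p}\wedge 1,
\]
and the Bures rate becomes $r m\sqrt{A\log(2m)}/\sqrt n \wedge 1$, with the same $(2m)^{-A}$ tail probability and the same absolute constant $C$ (which, being the numerical constant from Theorem \ref{minmaxthm3'''_upper}, does not depend on $\gamma$ since condition (\ref{traceEj}) is not invoked). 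This yields precisely the bounds (\ref{minmaxthm1boundq_U_P}) and (\ref{minmaxthm1boundH_U_P}) stated in Theorem \ref{minmaxthm3_Pauli_upper}.

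Because the derivation is purely a substitution, there is no real obstacle. The only conceptual point worth flagging in the write-up is the asymmetry between the Pauli lower-bound story (where $E_1$ forced a dedicated treatment) and the present Pauli upper-bound story (where the projection-onto-$\mathcal{S}_m$ estimator $\check\rho$ and the control of $\hat Z-\rho$ needed in Theorem \ref{minmaxthm3'''_upper} are insensitive to the existence of a single basis element with large trace).
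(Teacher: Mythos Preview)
Your proposal is correct and matches the paper's own proof, which states in one sentence that Theorem \ref{minmaxthm3_Pauli_upper} is an immediate consequence of Theorem \ref{minmaxthm3'''_upper}. Your additional remarks---that Assumption \ref{bounded_response} is satisfied with $U=m^{-1/2}$ and that the trace condition (\ref{traceEj}) plays no role in the upper-bound argument---are accurate and simply make explicit what the paper leaves implicit.
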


The proof of these results relies on the following fact that might be of independent interest and that essentially 
shows that $\pi_{{\mathcal S}_m}(Z)$ is the closest density matrix to $Z$ not only in the Hilbert--Schmidt
norm distance, but also in the operator norm distance.   

\begin{theorem}
\label{th:min-dist}
For all $Z\in {\mathbb H}_m,$
$$
\|Z-\pi_{{\mathcal S}_m}(Z)\|_{\infty}= \inf_{S\in {\mathcal S}_m}\|Z-S\|_{\infty}.
$$
\end{theorem}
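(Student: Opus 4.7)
The key fact is that the Hilbert--Schmidt projection onto $\mathcal{S}_m$ has a closed form: if $Z=\sum_{j=1}^{m}\lambda_{j}u_{j}u_{j}^{\ast}$ with $\lambda_{1}\geq\cdots\geq\lambda_{m}$, then by a standard Lagrange-multiplier (``water-filling'') argument on the eigenvalues,
\[
\check Z \;=\; (Z-\mu I)_{+} \;=\; \sum_{j=1}^{m}(\lambda_{j}-\mu)_{+}\,u_{j}u_{j}^{\ast},
\]
where $\mu\in\mathbb{R}$ is the unique scalar satisfying ${\rm tr}((Z-\mu I)_{+})=1$. Writing $\mathcal{I}_{+}:=\{j:\lambda_{j}>\mu\}$, $r:=|\mathcal{I}_{+}|$, and $P_{+}:=\sum_{j\in\mathcal{I}_{+}}u_{j}u_{j}^{\ast}$, the residual $W:=Z-\check Z$ is diagonal in the eigenbasis of $Z$ with entries $\mu$ on $\mathcal{I}_{+}$ and $\lambda_{j}\leq\mu$ off $\mathcal{I}_{+}$. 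Inspecting signs one sees that $\|W\|_{\infty}$ always equals one of three quantities: (a) $\mu$, which arises when $\mu\geq 0$ and $\lambda_{m}\geq -\mu$; (b) $-\lambda_{m}$, which arises when $\lambda_{m}<0$ and $-\lambda_{m}\geq\mu$; or (c) $-\mu$, which arises when $\mu<0$ and $\mathcal{I}_{+}=\{1,\dots,m\}$ (equivalently $W=\mu I$).

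\textbf{Key step.} For any $S\in\mathcal{S}_{m}$ and any Hermitian $B$ with $\|B\|_{1}\leq 1$, trace--duality yields $\|Z-S\|_{\infty}\geq\langle B,Z-S\rangle$. It therefore suffices to exhibit, in each of the three cases above, a Hermitian $B$ with $\|B\|_{1}=1$ for which $\langle B,Z-S\rangle\geq\|W\|_{\infty}$ for every $S\in\mathcal{S}_{m}$. In case (a) I would take $B=P_{+}/r$: then $B\succeq 0$ with $\|B\|_{1}=1$, and $B\preceq I/r$ combined with $S\succeq 0$ gives $\langle B,S\rangle\leq 1/r$, while the normalization $\sum_{j\in\mathcal{I}_{+}}(\lambda_{j}-\mu)=1$ forces $\langle B,Z\rangle=1/r+\mu$, so $\langle B,Z-S\rangle\geq\mu$. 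In case (b) I would take $B=-u_{m}u_{m}^{\ast}$: then $\langle B,Z\rangle=-\lambda_{m}$ and $\langle B,S\rangle=-\langle u_{m},Su_{m}\rangle\leq 0$, giving $\langle B,Z-S\rangle\geq -\lambda_{m}$. In case (c) I would take $B=-I/m$: the condition $\mathcal{I}_{+}=\{1,\dots,m\}$ forces $\mu=({\rm tr}(Z)-1)/m$, and a direct trace computation gives $\langle B,Z-S\rangle=({\rm tr}(S)-{\rm tr}(Z))/m=-\mu$.

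\textbf{Main obstacle.} Each individual inequality above is a one-line spectral calculation; the only genuine friction is the bookkeeping needed to verify that the three candidate dual matrices really exhaust every sign/multiplicity configuration of $(\mu,\lambda_{m})$, and that in the ``active'' case the chosen $B$ delivers exactly $\|W\|_{\infty}$ rather than something smaller. Once the case split is laid out, combining the three bounds yields $\|Z-S\|_{\infty}\geq\|W\|_{\infty}=\|Z-\check Z\|_{\infty}$ for every $S\in\mathcal{S}_{m}$, and the reverse inequality $\inf_{S\in\mathcal{S}_{m}}\|Z-S\|_{\infty}\leq\|Z-\check Z\|_{\infty}$ is trivial by taking $S=\check Z$, completing the proof.
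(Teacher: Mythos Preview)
Your argument is correct and takes a genuinely different route from the paper. The paper first reduces to the diagonal case via unitary invariance (Proposition~\ref{simple}), proves the simplex version $\|z-\pi_{\Delta_m}(z)\|_{\ell_\infty^m}=\min_{v\in\Delta_m}\|z-v\|_{\ell_\infty^m}$ by computing $\pi_{\Delta_m}(z)$ explicitly and bounding $t\leq\|z-v\|_{\ell_\infty^m}$ and $-z_m\leq\|z-v\|_{\ell_\infty^m}$ by hand, and then handles a general (non-commuting) $S\in\mathcal{S}_m$ by invoking Weyl's perturbation inequality on the ordered eigenvalues. You instead stay in the matrix world throughout and use trace duality $\|A\|_\infty=\sup_{\|B\|_1\leq 1}\langle B,A\rangle$: for each sign configuration of $(\mu,\lambda_m)$ you exhibit a single $B$ with $\|B\|_1=1$ that simultaneously witnesses $\langle B,Z-S\rangle\geq\|W\|_\infty$ for \emph{every} $S\in\mathcal{S}_m$. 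Because the inequalities $\langle P_+/r,S\rangle\leq 1/r$, $\langle u_mu_m^\ast,S\rangle\geq 0$, and $\langle I/m,S\rangle=1/m$ use only $S\succeq 0$ and ${\rm tr}(S)=1$, no diagonalization of $S$ and hence no Weyl inequality is needed. In fact your three certificates $P_+/r$, $-u_mu_m^\ast$, $-I/m$ are exactly the matrix analogues of the paper's three averaging bounds on the simplex, so the two proofs are dual to one another; your packaging is cleaner and more directly generalizable to other spectrally-defined constraint sets.

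One small bookkeeping point you already anticipate: the condition you list for case~(b) does not by itself force $m\notin\mathcal{I}_+$. When $\mathcal{I}_+=\{1,\dots,m\}$ and $\mu<\lambda_m<0$, both your (b) and (c) hypotheses hold, but $\|W\|_\infty=-\mu>-\lambda_m$, so only the case~(c) certificate gives the sharp value. Adding ``and $\lambda_m\leq\mu$'' (equivalently $m\notin\mathcal{I}_+$) to the description of case~(b) makes the three cases disjoint and removes the ambiguity; the actual inequalities you prove in each case are unaffected.
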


The proof of this theorem will be given in Section \ref{last}. Here we use it to establish the next result 
that is the main ingredient of the proofs of  theorems \ref{minmaxthm1_upper}, \ref{minmaxthm3'''_upper} and \ref{minmaxthm3_Pauli_upper}.

\begin{theorem}
\label{th_main_rank}
Let $p\in [1,+\infty].$
For all $Z\in {\mathbb H}_m$ and all $S\in {\mathcal S}_{r,m},$ 
$$
\|\pi_{{\mathcal S}_m}(Z)-S\|_p\leq \min\Bigl(2^{3/p+1}r^{1/p}\|Z-S\|_{\infty}, 2\|Z-S\|_{\infty}^{1-1/p}\Bigr).
$$
\end{theorem}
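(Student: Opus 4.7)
The strategy is to prove the two bounds inside the minimum separately and take the smaller one. Set $\Delta := \pi_{\mathcal{S}_m}(Z) - S$. Applying Theorem~\ref{th:min-dist} with the density matrix $S$ gives $\|Z - \pi_{\mathcal{S}_m}(Z)\|_\infty \leq \|Z - S\|_\infty$, and the triangle inequality then yields
$$
\|\Delta\|_\infty \leq \|Z - \pi_{\mathcal{S}_m}(Z)\|_\infty + \|Z - S\|_\infty \leq 2\|Z-S\|_\infty.
$$
This operator-norm control is the common starting point for both estimates.

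The bound $2\|Z-S\|_\infty^{1-1/p}$ is immediate: since $\pi_{\mathcal{S}_m}(Z), S \in \mathcal{S}_m$ have unit trace, $\|\Delta\|_1 \leq \|\pi_{\mathcal{S}_m}(Z)\|_1 + \|S\|_1 = 2$, and the interpolation inequality (\ref{interpolation}) with endpoints $1$ and $\infty$ gives
$$
\|\Delta\|_p \leq \|\Delta\|_1^{1/p}\|\Delta\|_\infty^{1-1/p} \leq 2^{1/p}\bigl(2\|Z-S\|_\infty\bigr)^{1-1/p} = 2\|Z-S\|_\infty^{1-1/p}.
$$

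For the bound $2^{3/p+1}r^{1/p}\|Z-S\|_\infty$, the heart of the argument is to sharpen the nuclear-norm estimate to $\|\Delta\|_1 \leq 4r\|\Delta\|_\infty$; applying the same interpolation then produces the Schatten-$p$ bound. Let $P$ be the orthogonal projection onto $\text{range}(S)$, so $\text{rank}(P) \leq r$ and $SP^\perp = P^\perp S = 0$. I would decompose
$$
\Delta = P\Delta P + P\Delta P^\perp + P^\perp \Delta P + P^\perp \Delta P^\perp.
$$
The first three summands each have rank at most $r$ (their range, or that of their adjoint, sits inside $\text{range}(P)$) and operator norm at most $\|\Delta\|_\infty$, hence nuclear norm at most $r\|\Delta\|_\infty$. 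For the crucial fourth piece, $SP^\perp = 0$ gives $P^\perp\Delta P^\perp = P^\perp \pi_{\mathcal{S}_m}(Z) P^\perp \succcurlyeq 0$, so its nuclear norm equals its trace, and using $\text{tr}(\Delta) = \text{tr}(\pi_{\mathcal{S}_m}(Z)) - \text{tr}(S) = 0$,
$$
\|P^\perp \Delta P^\perp\|_1 = \text{tr}(P^\perp \Delta P^\perp) = -\text{tr}(P\Delta P) \leq r\|\Delta\|_\infty.
$$
Summing the four contributions gives $\|\Delta\|_1 \leq 4r\|\Delta\|_\infty \leq 8r\|Z-S\|_\infty$, and interpolation delivers
$$
\|\Delta\|_p \leq (8r\|Z-S\|_\infty)^{1/p}(2\|Z-S\|_\infty)^{1-1/p} = 2^{2/p+1}r^{1/p}\|Z-S\|_\infty \leq 2^{3/p+1}r^{1/p}\|Z-S\|_\infty.
$$

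The main obstacle is controlling $\|P^\perp \Delta P^\perp\|_1$. Without exploiting that $\pi_{\mathcal{S}_m}(Z) \succcurlyeq 0$, this block could have rank as large as $m-r$, and only operator-norm control would be available, yielding an $(m-r)\|\Delta\|_\infty$ bound that destroys the low-rank rate. The positivity of $\pi_{\mathcal{S}_m}(Z)$ (which is the whole point of projecting onto $\mathcal{S}_m$ rather than just onto the affine subspace of unit-trace Hermitian matrices), together with the vanishing of $S$ on $\text{range}(P^\perp)$, is what forces the off-range block to be PSD, and the trace identity $\text{tr}(\Delta)=0$ then transfers its size back to the rank-$r$ top-left block.
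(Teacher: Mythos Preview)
Your proof is correct and follows the same overall skeleton as the paper: use Theorem~\ref{th:min-dist} to get $\|\Delta\|_\infty \leq 2\|Z-S\|_\infty$, establish a nuclear-norm bound $\|\Delta\|_1 \lesssim r\|\Delta\|_\infty$ via the range projection $P$ of $S$, and interpolate.

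The one genuine difference is in how you control the off-range block $P^\perp \Delta P^\perp$. The paper (in its Lemma~\ref{rank_r}) uses the identity $\|S\|_1=\|S'\|_1=1$ together with the pinching inequality to show $\|\mathcal{P}_L^\perp(S'-S)\|_1 \leq \|\mathcal{P}_L(S'-S)\|_1$, leading to $\|\Delta\|_1 \leq 8r\|\Delta\|_\infty$. You instead observe directly that $P^\perp \Delta P^\perp = P^\perp \pi_{\mathcal{S}_m}(Z) P^\perp \succcurlyeq 0$, so its nuclear norm is its trace, and then use $\mathrm{tr}(\Delta)=0$ to transfer this trace to the rank-$r$ block $P\Delta P$. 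This is a bit more elementary (no appeal to pinching) and yields the sharper constant $\|\Delta\|_1 \leq 4r\|\Delta\|_\infty$, which is why your final bound $2^{2/p+1}r^{1/p}\|Z-S\|_\infty$ beats the stated $2^{3/p+1}r^{1/p}\|Z-S\|_\infty$ by a factor $2^{1/p}$. Both arguments, however, ultimately use the same two structural facts about density matrices (positivity and unit trace) and the same block decomposition; they just combine them in slightly different orders.
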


The proof relies on Theorem \ref{th:min-dist} and on a simple lemma stated below.

\begin{lemma}
\label{rank_r}
Let $S,S'\in {\mathcal S}_m$ and ${\rm rank}(S)=r.$ 
Then, for all $p\in [1,\infty],$ 
$$
\|S'-S\|_p \leq \min\Bigl((8r)^{1/p} \|S'-S\|_{\infty}, 2^{1/p}\|S'-S\|_{\infty}^{1-1/p}\Bigr).
$$
\end{lemma}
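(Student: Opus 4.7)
Write $\Delta:=S'-S$ and let $P$ denote the orthogonal projection onto the range of $S$, so that $\mathrm{rank}(P)=r$ and $SP^{\perp}=P^{\perp}S=0$. My plan is to derive a nuclear norm bound of the form $\|\Delta\|_{1}\lesssim r\|\Delta\|_{\infty}$ via a block decomposition of $\Delta$ with respect to $P$ and $P^{\perp}$, and then pass to general $p\in[1,\infty]$ by the interpolation inequality \eqref{interpolation}. The second bound will fall out of the same interpolation fed with the trivial estimate $\|\Delta\|_{1}\le 2$ coming from $\|S\|_{1}=\|S'\|_{1}=1$.

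\textbf{Block decomposition.} Write
\[
\Delta = P\Delta P + P\Delta P^{\perp} + P^{\perp}\Delta P + P^{\perp}\Delta P^{\perp}.
\]
The first three blocks each have rank at most $r$ (their range or co-range sits inside the range of $P$), and each has operator norm at most $\|\Delta\|_{\infty}$ (e.g.\ $\|P\Delta P^{\perp}\|_{\infty}=\sup_{|x|=|y|=1}|\langle Px,\Delta P^{\perp}y\rangle|\le\|\Delta\|_{\infty}$). Hence each of those three blocks has nuclear norm at most $r\|\Delta\|_{\infty}$. The fourth block is special: since $SP^{\perp}=0$, we have $P^{\perp}\Delta P^{\perp}=P^{\perp}S'P^{\perp}\succcurlyeq 0$, so $\|P^{\perp}\Delta P^{\perp}\|_{1}=\mathrm{tr}(P^{\perp}\Delta P^{\perp})$. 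The off-diagonal blocks have zero trace, so $\mathrm{tr}(\Delta)=\mathrm{tr}(P\Delta P)+\mathrm{tr}(P^{\perp}\Delta P^{\perp})$; since $\mathrm{tr}(\Delta)=1-1=0$ and $P\Delta P$ has rank at most $r$, I get
\[
\|P^{\perp}\Delta P^{\perp}\|_{1}=-\mathrm{tr}(P\Delta P)\le r\,\|P\Delta P\|_{\infty}\le r\|\Delta\|_{\infty}.
\]
Summing the four contributions through the triangle inequality for the nuclear norm yields $\|\Delta\|_{1}\le 4r\|\Delta\|_{\infty}$.

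\textbf{Interpolation step.} Given $p\in[1,\infty]$, the interpolation inequality \eqref{interpolation} (applied with endpoints $1$ and $\infty$) gives
\[
\|\Delta\|_{p}\le \|\Delta\|_{1}^{1/p}\,\|\Delta\|_{\infty}^{1-1/p}.
\]
Plugging in the nuclear norm bound from the previous paragraph yields $\|\Delta\|_{p}\le(4r)^{1/p}\|\Delta\|_{\infty}\le(8r)^{1/p}\|\Delta\|_{\infty}$, which is the first half of the minimum. Plugging in instead the universal bound $\|\Delta\|_{1}\le\|S\|_{1}+\|S'\|_{1}=2$ gives $\|\Delta\|_{p}\le 2^{1/p}\|\Delta\|_{\infty}^{1-1/p}$, the second half. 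The cases $p=1$ and $p=\infty$ are degenerate but remain correct under the usual conventions $0^{0}=1$ and $x^{0}=1$.

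\textbf{Main obstacle.} There is no serious analytic obstacle; the delicate point is the bookkeeping on the $P^{\perp}\Delta P^{\perp}$ block. One might naively try to bound its nuclear norm by $(m-r)\|\Delta\|_{\infty}$, which would destroy the rank-$r$ dependence. The key observation is that positivity of $S'$ turns the nuclear norm of this block into a trace, which then cancels against $\mathrm{tr}(P\Delta P)$ by the common trace constraint $\mathrm{tr}(S)=\mathrm{tr}(S')=1$ — this is where the assumption that both $S$ and $S'$ are density matrices enters crucially.
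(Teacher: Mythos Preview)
Your proof is correct, and the overall architecture (control $\|\Delta\|_{1}$ by $r\|\Delta\|_{\infty}$ via a block decomposition with respect to the support projection of $S$, then interpolate) coincides with the paper's. The treatment of the dangerous block $P^{\perp}\Delta P^{\perp}$ is genuinely different, though. The paper groups the three ``low-rank'' blocks into $\mathcal{P}_{L}(\Delta)=\Delta-P^{\perp}\Delta P^{\perp}$, then uses the nuclear-norm triangle inequality together with the pinching identity $\|S+P^{\perp}\Delta P^{\perp}\|_{1}=\|S\|_{1}+\|P^{\perp}\Delta P^{\perp}\|_{1}$ and the constraint $\|S'\|_{1}=\|S\|_{1}$ to deduce $\|P^{\perp}\Delta P^{\perp}\|_{1}\le\|\mathcal{P}_{L}(\Delta)\|_{1}$, arriving at $\|\Delta\|_{1}\le 8r\|\Delta\|_{\infty}$. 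You instead exploit $P^{\perp}\Delta P^{\perp}=P^{\perp}S'P^{\perp}\succcurlyeq 0$ directly, so that its nuclear norm equals its trace, and then use the \emph{trace} constraint $\mathrm{tr}(S')=\mathrm{tr}(S)$ to cancel it against $\mathrm{tr}(P\Delta P)$. Your route is a bit more elementary (no pinching inequality needed) and yields the sharper constant $\|\Delta\|_{1}\le 4r\|\Delta\|_{\infty}$; the paper's argument, on the other hand, only uses $\|S'\|_{1}=\|S\|_{1}$ and so would extend verbatim to matrices $S'$ that are not positive semidefinite but merely have unit nuclear norm.
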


\begin{proof}
Let $S= \sum_{j=1}^r \lambda_j (\phi_j\otimes \phi_j)$ be the spectral 
decomposition of $S$ with eigenvalues $\lambda_j$ and eigenvectors $\phi_j.$ 
Let $L:={\rm supp}(S)$ be the linear span of vectors $\phi_1,\dots, \phi_r\in {\mathbb C}^m.$
Denote by $P_L, P_{L^{\perp}}$ the orthogonal projection operators onto subspace $L$ and 
its orthogonal complement $L^{\perp},$ respectively. 
We will need the following projection operators ${\mathcal P}_L, {\mathcal P}_L^{\perp}:{\mathbb H}_m\mapsto {\mathbb H}_m:$
$$
{\mathcal P}_L^{\perp} (A)= P_{L^{\perp}}AP_{L^{\perp}},\ \ {\mathcal P}_L(A)=A-P_{L^{\perp}}AP_{L^{\perp}},\ \ 
A\in {\mathbb H}_m.
$$
The following bounds are obvious:
$$
\|S\|_1= 1= \|S'\|_1 = \|S'-S+S\|_1 = 
\|{\mathcal P}_L(S'-S)+{\mathcal P}_L^{\perp}(S'-S)+S\|_1
$$ 
$$
\geq \|{\mathcal P}_L^{\perp}(S'-S)+S\|_1-\|{\mathcal P}_L(S'-S)\|_1.
$$
Since $S=P_LSP_L,$ we can use the pinching inequality for unitary invariant norm $\|\cdot\|_1$
(see \cite{Bhatia}, p. 97) to get:
$$
\|{\mathcal P}_L^{\perp}(S'-S)+S\|_1= \|P_L S P_L + P_{L^{\perp}}(S'-S)P_{L^{\perp}}\|_1
$$
$$
= \|P_L S P_L \|_1+  \|P_{L^{\perp}}(S'-S)P_{L^{\perp}}\|_1= \|S\|_1+ \|{\mathcal P}_L^{\perp}(S'-S)\|_1.
$$ 
Therefore, 
$$
\|S\|_1 \geq \|S\|_1+ \|{\mathcal P}_L^{\perp}(S'-S)\|_1- \|{\mathcal P}_L(S'-S)\|_1,
$$
implying that 
$$
\|{\mathcal P}_L^{\perp}(S'-S)\|_1\leq \|{\mathcal P}_L(S'-S)\|_1.
$$
It follows from the last bound that 
$$
\|S'-S\|_1 =\|{\mathcal P}_L(S'-S)+{\mathcal P}_L^{\perp}(S'-S)\|_1
\leq 2 \|{\mathcal P}_L(S'-S)\|_1.
$$
Since ${\rm dim}(L)=r,$ the matrix ${\mathcal P}_L(S'-S)$ is of rank at most $2r.$
This implies that 
$$
\|{\mathcal P}_L(S'-S)\|_1\leq 2r \|{\mathcal P}_L(S'-S)\|_{\infty}
$$
$$
\leq 2r (\|(S'-S)P_L\|_{\infty}+ \|P_L(S'-S)P_{L^{\perp}}\|_{\infty})
\leq 4r\|S'-S\|_{\infty}. 
$$
Therefore, 
$
\|S'-S\|_1\leq 8r \|S'-S\|_{\infty},
$
and since also  $\|S'-S\|_1\leq 2, S,S'\in {\mathcal S}_m,$ we conclude that 
$$
\|S'-S\|_1\leq \min(8r \|S'-S\|_{\infty},2).
$$
Together with interpolation inequality this yields that for all $p\in [1,\infty]$
$$
\|S'-S\|_p \leq \|S'-S\|_1^{1/p}\|S'-S\|_{\infty}^{1-1/p}
\leq \min\Bigl((8r)^{1/p} \|S'-S\|_{\infty}, 2^{1/p}\|S'-S\|_{\infty}^{1-1/p}\Bigr).
$$

\end{proof}

\begin{proof} We now prove Theorem \ref{th_main_rank}. 
It immediately follows from Theorem \ref{th:min-dist} that, for all $S\in {\mathcal S}_m,$ 
$$
\|\pi_{{\mathcal S}_m}(Z)-S\|_{\infty}\leq \|\pi_{{\mathcal S}_m}(Z)-Z\|_{\infty}+ \|Z-S\|_{\infty}
\leq 2\|Z-S\|_{\infty}.
$$
If $S\in {\mathcal S}_m$ is a density matrix of rank $r,$ the last bound could be combined with 
the bound of Lemma \ref{rank_r} to get that for all $p\in [1,+\infty]$
$$
\|\pi_{{\mathcal S}_m}(Z)-S\|_p\leq \min\Bigl(2^{3/p+1}r^{1/p}\|Z-S\|_{\infty}, 2\|Z-S\|_{\infty}^{1-1/p}\Bigr).
$$

\end{proof}

\begin{proof}
We now turn to the proof of theorems \ref{minmaxthm1_upper}, \ref{minmaxthm3'''_upper} and \ref{minmaxthm3_Pauli_upper}. To this end, we use the bound of Theorem \ref{th_main_rank}
with $Z=\hat Z$ and $S=\rho\in {\mathcal S}_{r,m}$ that yields: 
\begin{equation}
\label{bd_od}
\|\check \rho-\rho\|_p\leq \min\Bigl(2^{3/p+1}r^{1/p}\|\hat Z-\rho\|_{\infty}, 2\|\hat Z-\rho\|_{\infty}^{1-1/p}\Bigr).
\end{equation}
The control of 
$$
\|\hat Z-\rho\|_{\infty}= \biggl\| \frac{m^2}{n}\sum_{j=1}^n Y_j X_j-\rho\biggr\|_{\infty}
$$
is based on a standard application of matrix Bernstein type inequalities. We give a detailed argument 
for completeness. 
Note that $\|\check \rho-\rho\|_p$ in the left-hand side 
of bound (\ref{bd_od}) is upper bounded by $2,$ so, if Bernstein bound on $\|\hat Z-\rho\|_{\infty}$
is larger than $1$ (or even $\gtrsim 1$), it could be replaced by the trivial bound equal to $1.$ 
In the case of Theorem \ref{minmaxthm3'''_upper},
we use the following version of
Bernstein inequality for i.i.d. bounded random matrices (see, e.g., \cite{tropp2012user}).

\begin{lemma}
\label{Bernstein_standard}
Let $V, V_1,\dots, V_n$ be i.i.d. random matrices in ${\mathbb H}_m$ with ${\mathbb E}V=0.$
Suppose that, for some constant $U>0,$ $\|V\|_{\infty}\leq U$ a.s. Let $\sigma^2:= \|{\mathbb E}V^2\|_{\infty}.$ 
Then, for all $t>0$ with probability at least $1-e^{-t},$
$$
\biggl\|\frac{V_1+\dots +V_n}{n}\biggr\|_{\infty}\leq 2\biggl[\sigma \sqrt{\frac{t+\log(2m)}{n}} \bigvee U\frac{t+\log (2m)}{n}\biggr].
$$
\end{lemma}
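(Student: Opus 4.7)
The plan is to prove the matrix Bernstein inequality by the Ahlswede--Winter/Tropp Laplace transform method, i.e.\ the non-commutative analog of the classical Bernstein argument. Writing $S_n := V_1 + \dots + V_n$, the starting point is the master matrix tail bound: for any $\theta > 0$,
\[
\mathbb{P}\bigl(\lambda_{\max}(S_n) \geq x\bigr) \;\leq\; e^{-\theta x}\, \mathbb{E}\,\mathrm{tr}\,\exp(\theta S_n),
\]
which follows from Markov's inequality applied to $\mathrm{tr}\,\exp(\theta S_n)$ together with the fact that $\exp(\theta \lambda_{\max}(S_n)) \leq \mathrm{tr}\,\exp(\theta S_n)$. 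The main difficulty in the non-commutative setting is that $\exp(A+B) \ne \exp(A)\exp(B)$, so the scalar factorization of the MGF fails. This is where Lieb's concavity theorem enters: the map $A \mapsto \mathrm{tr}\,\exp(H + \log A)$ is concave on positive definite $A$, and conditioning successively on $V_n, V_{n-1}, \dots, V_2$ and applying Jensen yields the subadditivity bound
\[
\mathbb{E}\,\mathrm{tr}\,\exp(\theta S_n) \;\leq\; \mathrm{tr}\,\exp\!\Bigl(n\,\log \mathbb{E}\exp(\theta V)\Bigr).
\]

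Next I would establish the scalar-style matrix MGF bound. For a mean-zero Hermitian $V$ with $\|V\|_\infty \leq U$, the function $f(x) = (e^{\theta x} - 1 - \theta x)/x^2$ is increasing on $\mathbb{R}$ and bounded by $f(U)$ on $[-U,U]$; applied via the functional calculus this gives $\mathbb{E}\exp(\theta V) \preceq I + \theta^2 f(U)\,\mathbb{E} V^2$, and then, using $\log(I + A) \preceq A$ for $A \succeq 0$,
\[
\log \mathbb{E}\exp(\theta V) \;\preceq\; \frac{\theta^2/2}{1 - \theta U/3}\, \mathbb{E} V^2 \qquad \text{for } 0 < \theta < 3/U.
\]
Taking operator norms and using $\mathrm{tr}\,\exp(B) \leq m\,e^{\|B\|_\infty}$ together with $\|\mathbb{E} V^2\|_\infty = \sigma^2$, we arrive at
\[
\mathbb{P}\bigl(\lambda_{\max}(S_n) \geq x\bigr) \;\leq\; m\,\exp\!\left(-\theta x + \frac{n\sigma^2\theta^2/2}{1 - \theta U/3}\right).
\]
Optimizing in $\theta$ (standard Bernstein calculation with $\theta = x/(n\sigma^2 + Ux/3)$) gives the Bernstein tail
\[
\mathbb{P}\bigl(\lambda_{\max}(S_n) \geq x\bigr) \;\leq\; m\,\exp\!\left(-\frac{x^2}{2n\sigma^2 + (2/3)Ux}\right).
\]

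To pass from $\lambda_{\max}$ to $\|\cdot\|_\infty$, I would apply the identical argument to $-V_1, \dots, -V_n$ and union-bound, which replaces $m$ by $2m$. To recover the form in the lemma, set the right-hand side equal to $e^{-t}$, let $t' := t + \log(2m)$, and solve the quadratic $x^2 \geq (2n\sigma^2 + (2/3)Ux)\,t'$ for $x$. This yields $x \lesssim \sigma\sqrt{n t'} \vee U t'$, and dividing by $n$ and absorbing the constant into the factor of $2$ gives exactly
\[
\Bigl\|\tfrac{V_1 + \cdots + V_n}{n}\Bigr\|_\infty \;\leq\; 2\!\left[\sigma\sqrt{\tfrac{t + \log(2m)}{n}} \;\vee\; U\,\tfrac{t+\log(2m)}{n}\right].
\]

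The main obstacle is the subadditivity step via Lieb's concavity theorem; once that is in hand, everything else is the standard scalar Bernstein calculus carried out in the spectral sense. Since the lemma is cited to Tropp's user-friendly paper, a full self-contained proof is not necessary in the present work and one can simply appeal to that reference.
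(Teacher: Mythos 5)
Your proposal is a correct sketch of the standard Ahlswede--Winter/Tropp proof of the matrix Bernstein inequality, and it is precisely the argument in the reference \cite{tropp2012user} that the paper invokes; the paper itself does not prove Lemma \ref{Bernstein_standard} but simply cites Tropp, so there is no internal proof to compare against. One small caveat worth flagging: after the quadratic step you get roughly $x \leq \tfrac{2}{3}U t' + \sqrt{2}\,\sigma\sqrt{n t'}$ with $t' = t + \log(2m)$, and bounding the sum by twice the maximum gives a leading constant slightly above $2$ (about $2/3 + \sqrt{2} \approx 2.08$), so landing exactly on the factor $2$ as stated in the lemma requires either a marginally sharper version of the scalar MGF estimate or simply accepting that the constant is absorbed as ``some absolute constant,'' which is all the paper actually uses downstream.
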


For $V= YX - {\mathbb E}(YX),$ we get, under Assumption \ref{bounded_response}, 
that 
$$
\sigma^2=\|{\mathbb E}V^2\|_{\infty} \leq \|{\mathbb E}(Y^2 X^2)\|_{\infty} \leq U^2 \|{\mathbb E}X^2\|_{\infty}.
$$
It is also well known that, under the same assumption, $\|{\mathbb E}X^2\|_{\infty}= m^{-1}.$
[Indeed, if $\{e_j, j=1,\dots, m\}$ is an orthonormal basis of ${\mathbb C}^m,$
then 
$$
\|{\mathbb E}X^2\|_{\infty} = \sup_{v\in {\mathbb C}^m,|v|\leq 1}{\mathbb E}\langle X^2 v,v\rangle
=  \sup_{v\in {\mathbb C}^m,|v|\leq 1}{\mathbb E}|Xv|^2= 
 \sup_{v\in {\mathbb C}^m,|v|\leq 1}{\mathbb E}\sum_{j=1}^m|\langle Xv, e_j\rangle|^2
$$
$$
=\sup_{v\in {\mathbb C}^m,|v|\leq 1}{\mathbb E}\sum_{j=1}^m|\langle X, v\otimes e_j\rangle|^2
=
\sup_{v\in {\mathbb C}^m,|v|\leq 1}\sum_{j=1}^m
m^{-2}\sum_{k=1}^{m^2} |\langle E_k, v\otimes e_j\rangle|^2
=
\sup_{v\in {\mathbb C}^m,|v|\leq 1}m^{-2}\sum_{j=1}^m
\|v\otimes e_j\|_2^2= 
$$
$$
\sup_{v\in {\mathbb C}^m,|v|\leq 1}m^{-2}\sum_{j=1}^m
|v|^2 |e_j|^2=  m^{-1}].
$$
We use the bound of Lemma \ref{Bernstein_standard} with $t=A\log (2m), A\geq 1$ to get that  
with probability at least $1-(2m)^{-A},$
$$
\biggl\| \frac{m^2}{n}\sum_{j=1}^n Y_j X_j-\rho\biggr\|_{\infty}
\leq C\biggl[Um^{3/2}\sqrt{\frac{A\log(2m)}{n}}\bigvee \frac{U^2 m^2 A\log(2m)}{n}\biggr]
$$
with some absolute constant $C\geq 1.$ If 
$$
\frac{U^2 m^2 A\log(2m)}{n}\geq Um^{3/2}\sqrt{\frac{A\log(2m)}{n}},
$$
then $Um^{1/2}\sqrt{\frac{A\log (2m)}{n}}\geq 1$ implying that $Um^{3/2}\sqrt{\frac{A\log (2m)}{n}}\geq 1.$
Thus, when the bound on $\|\hat Z-\rho\|_{\infty}$ is substituted in bound (\ref{bd_od}), it is enough to keep 
only the first term $Um^{3/2}\sqrt{\frac{A\log(2m)}{n}},$ the second term could be dropped. This implies 
that with some constant $C'>0$ (that does not depend on $\rho\in {\mathcal S}_{r,m}$) the inequality 
$$ 
\|\check \rho-\rho\|_p\leq C' \biggl(r^{1/p}\frac{U m^{\frac{3}{2}}\sqrt{A\log (2m)}}{\sqrt{n}}\bigwedge \biggl(\frac{U m^{3/2}\sqrt{A\log (2m)}}{\sqrt{n}}\biggr)^{1-\frac{1}{p}}\bigwedge 1\biggr)
$$
holds with probability at least $1-(2m)^{-A},$ implying the first bound of Theorem \ref{minmaxthm3'''_upper}.
The second bound immediately follows from the inequality $H^2(\check \rho, \rho)\leq \|\check \rho-\rho\|_1$
(see (\ref{compare_distances})). Theorem \ref{minmaxthm3_Pauli_upper} is an immediate consequence 
of Theorem \ref{minmaxthm3'''_upper}.

The proof of Theorem \ref{minmaxthm1_upper} is very similar. In this case, Assumption \ref{Gaussian_noise}
holds and it is natural to split $\hat Z-\rho$ into two parts
\begin{equation}
\label{two_parts}
\hat Z-\rho = \frac{m^2}{n}\sum_{j=1}^n \langle \rho, X_j\rangle X_j -\rho +\frac{m^2}{n}\sum_{j=1}^n \xi_j X_j.
\end{equation}
and to bound $\|\hat Z-\rho\|_{\infty}$ by triangle inequality.
For the first part, an application of matrix Bernstein inequality of Lemma \ref{Bernstein_standard} yields the bound 
\begin{equation}
\label{part_1}
\biggl\|\frac{m^2}{n}\sum_{j=1}^n \langle \rho, X_j\rangle X_j -\rho\biggr\|_{\infty}
\leq C\biggl[Um \sqrt{\frac{A\log (2m)}{n}}\bigvee \frac{U^2 m^2 A\log (2m)}{n}\biggr]
\end{equation}
that holds for some absolute constant $C\geq 1$ with probability at least $1-(2m)^{-A}.$
Indeed, in this case $V=\langle \rho, X\rangle X-{\mathbb E}\langle \rho, X\rangle X$ and 
$$
\sigma^2 \leq  \|{\mathbb E}\langle \rho,X\rangle^2 X^2\|_{\infty}
\leq U^2 {\mathbb E}\langle \rho,X\rangle^2= \frac{U^2\|\rho\|_2^2}{m^2} \leq 
\frac{U^2}{m^2},
$$
$$
\|\langle \rho,X\rangle X\|_{\infty}\leq \|\rho\|_1 \|X\|_{\infty}^2 \leq \|X\|_{\infty}^2 \leq U^2,
$$
and Lemma \ref{Bernstein_standard} implies (\ref{part_1}).  
As before, if $\frac{U^2 m^2 A\log (2m)}{n}\geq Um \sqrt{\frac{A\log (2m)}{n}},$ then 
$Um \sqrt{\frac{A\log (2m)}{n}}\geq 1.$ Thus, the second term  $\frac{U^2 m^2 A\log (2m)}{n}$
could be dropped when the bound on $\|\hat Z-\rho\|_{\infty}$  (for which the right hand side 
of (\ref{part_1}) is a part) is substituted in (\ref{bd_od}).  

As to the second part of representation 
(\ref{two_parts}) that involves normal random variables $\xi_j,$
it is bounded using another version of matrix Bernstein inequality 
for not necessarily bounded random matrices (see \cite{Koltchinskii2011neumann}, \cite{Koltchinskii2011oracle}, \cite{Koltchinskii2013sharp}).

\begin{lemma}
\label{Bernstein_unbounded}
Let $V, V_1,\dots, V_n$ be i.i.d. random matrices in ${\mathbb H}_m$ with ${\mathbb E}V=0.$
Suppose that, for some $\alpha\geq 1,$ $U^{(\alpha)}:=2\bigl\|\|V\|_{\infty}\bigr\|_{\psi_{\alpha}}<+\infty.$
\footnote{Here $\|\cdot\|_{\psi_{\alpha}}$ denotes the $\psi_{\alpha}$ Orlicz norm in the space of random variables 
defined as follows:
$$
\|\eta\|_{\psi_{\alpha}}:= \inf\biggl\{c>0: {\mathbb E}\exp\Bigl\{\frac{|\eta|^{\alpha}}{c^{\alpha}}\Bigr\}\leq 2\biggr\}.
$$
}
Let $\sigma^2:= \|{\mathbb E}V^2\|_{\infty}.$ 
Then, for all $t>0$ with probability at least $1-e^{-t},$
$$
\biggl\|\frac{V_1+\dots +V_n}{n}\biggr\|_{\infty}\leq C\biggl[\sigma \sqrt{\frac{t+\log(2m)}{n}} \bigvee U^{(\alpha)}
\log^{1/\alpha} \biggl(\frac{U^{(\alpha)}}{\sigma}\biggr)\frac{t+\log (2m)}{n}\biggr].
$$
\end{lemma}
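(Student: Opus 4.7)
The natural route to an unbounded matrix Bernstein bound is a truncation reduction to the bounded version (Lemma~\ref{Bernstein_standard}). Fix a threshold $T>0$ to be chosen and decompose each summand as
$$
V_i = A_i + B_i, \qquad A_i := V_i\mathbbm{1}\{\|V_i\|_\infty \leq T\}, \qquad B_i := V_i - A_i.
$$
Since $\mathbb{E}V_i = 0$, centering yields $\sum_i V_i = \sum_i(A_i - \mathbb{E}A_i) + \sum_i(B_i - \mathbb{E}B_i)$. The matrices $\tilde A_i := A_i - \mathbb{E}A_i$ are i.i.d., mean zero, $(2T)$-bounded in operator norm, and satisfy $\|\mathbb{E}\tilde A_i^2\|_\infty \leq \|\mathbb{E}V^2\|_\infty = \sigma^2$. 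Hence Lemma~\ref{Bernstein_standard}, applied with $U \leftarrow 2T$, gives with probability at least $1-e^{-t}/2$
$$
\Bigl\|\tfrac{1}{n}\sum_i \tilde A_i\Bigr\|_\infty \lesssim \sigma\sqrt{\tfrac{t+\log(4m)}{n}} \ \vee\ T\cdot \tfrac{t+\log(4m)}{n}.
$$

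To handle the tail residual $\tfrac{1}{n}\sum_i(B_i - \mathbb{E}B_i)$, I condition on the event $\mathcal{E} := \{\max_i\|V_i\|_\infty \leq T\}$. On $\mathcal{E}$ every $B_i$ vanishes, so the residual collapses to the deterministic term $-\mathbb{E}B_1$. From the Orlicz hypothesis one obtains a Markov-type tail
$\mathbb{P}(\|V\|_\infty > u) \leq 2\exp\bigl(-(2u/U^{(\alpha)})^\alpha\bigr),$
which controls $\mathbb{P}(\mathcal{E}^c)$ through a union bound and, via the standard tail-integral identity
$\mathbb{E}\|V\|_\infty \mathbbm{1}\{\|V\|_\infty > T\} = T\,\mathbb{P}(\|V\|_\infty > T) + \int_T^\infty \mathbb{P}(\|V\|_\infty > u)\,du,$
the size of $\|\mathbb{E}B_1\|_\infty$.

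The heart of the argument, and the main obstacle, is to choose $T$ so that three requirements hold simultaneously: (i)~$T \lesssim U^{(\alpha)}\log^{1/\alpha}(U^{(\alpha)}/\sigma)$, so the bounded-Bernstein estimate from the first step already displays the claimed logarithmic factor; (ii)~$\mathbb{P}(\mathcal{E}^c) \leq e^{-t}/2$; and (iii)~$\|\mathbb{E}B_1\|_\infty$ is dominated by the Bernstein sub-Gaussian term $\sigma\sqrt{(t+\log(2m))/n}$. The self-calibrated choice $T \asymp U^{(\alpha)}\log^{1/\alpha}(U^{(\alpha)}/\sigma)$ is precisely what produces a logarithmic correction depending on the ``signal-to-noise'' ratio $U^{(\alpha)}/\sigma$ rather than on $n$; verifying (ii) and (iii) with this $T$ requires inserting the resulting polynomial tail decay $(\sigma/U^{(\alpha)})^{c\alpha}$ into the tail integrals, which should absorb into the Bernstein terms after enlarging the constant $C$. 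Combining the bound on $\tfrac{1}{n}\sum_i \tilde A_i$ with the deterministic residual estimate on $\mathcal{E}$, and a union bound over $\mathcal{E}^c$, then yields the stated inequality with probability at least $1-e^{-t}$.
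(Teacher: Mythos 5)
The paper does not actually prove Lemma~\ref{Bernstein_unbounded}; it is quoted as a known result with citations to \cite{Koltchinskii2011neumann}, \cite{Koltchinskii2011oracle}, \cite{Koltchinskii2013sharp}, so there is no ``paper's own proof'' to compare against. That said, your truncation plan as written contains a genuine gap at step~(ii), and in fact I do not believe any truncation-plus-union-bound argument with a threshold $T$ independent of $n$ can reproduce the stated form of the logarithmic factor.

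Concretely: with your self-calibrated choice $T\asymp U^{(\alpha)}\log^{1/\alpha}(U^{(\alpha)}/\sigma)$, which does not depend on $t$ or $n$, the union bound yields
$\mathbb{P}(\mathcal{E}^c)\le 2n\exp\bigl(-(2T/U^{(\alpha)})^{\alpha}\bigr)\asymp n\,(\sigma/U^{(\alpha)})^{c}$ for a fixed constant $c$. This is a number that does not decay in $t$ at all (and may not even be small), so the requirement $\mathbb{P}(\mathcal{E}^c)\le e^{-t}/2$ for all $t>0$ simply cannot hold. Your remark that the resulting polynomial tail ``should absorb into the Bernstein terms after enlarging the constant $C$'' conflates the \emph{magnitude} of the bound with the \emph{probability} of its failure: enlarging $C$ does not shrink the probability that the high-probability event fails. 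To make $\mathbb{P}(\mathcal{E}^c)\le e^{-t}/2$ one is forced to take $T\gtrsim U^{(\alpha)}(t+\log n)^{1/\alpha}$, and then the bounded-Bernstein ``linear'' term becomes $U^{(\alpha)}(t+\log n)^{1/\alpha}\,(t+\log(2m))/n$, which contains a $\log^{1/\alpha}n$ factor rather than the $n$-free factor $\log^{1/\alpha}(U^{(\alpha)}/\sigma)$ asserted in the lemma. For large $n$ the stated lemma is strictly sharper than what your route can produce, so the missing ingredient is not a constant tweak but a different mechanism for handling the truncated tail --- for instance Adamczak's empirical-process approach combining Talagrand's inequality with a Hoffmann--J{\o}rgensen-type control of $\max_i\|V_i\|_{\infty}$, or a direct matrix moment-generating-function bound exploiting the Orlicz condition. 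Any of these avoids the naive union bound over $n$ coordinates and is where the $\log(U^{(\alpha)}/\sigma)$ dependence actually comes from.
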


We apply the bound of Lemma \ref{Bernstein_unbounded} in the case when $V:= \xi X, \alpha=2$
and $t=A\log (2m)$ for $A\geq 1.$
By an easy computation, 
$$
\sigma^2 = \sigma_{\xi}^2 \|{\mathbb E}X^2\|_{\infty}=\frac{\sigma_{\xi}^2}{m}
$$
and 
$$
U^{(2)} = 2\bigl\|\xi \|X\|_{\infty}\bigr\|_{\psi_2}\leq 2U\|\xi\|_{\psi_2}\leq 4\sigma_{\xi}U. 
$$
This yields the following bound
\begin{equation}
\label{part_2}
\biggl\|\frac{m^2}{n}\sum_{j=1}^n \xi_j X_j\biggr\|_{\infty}
\leq C \biggl[\sigma_{\xi}m^{3/2}\sqrt{\frac{A\log (2m)}{n}}\bigvee \sigma_{\xi}U\frac{m^2 A \log(2m) \log^{1/2}(4U\sqrt{m})}{n}\biggr]
\end{equation}
that holds with probability at least $1-(2m)^{-A}$ and with some absolute constant $C\geq 1.$
If the second term in the maximum in the right hand side of (\ref{part_2}) is dominant, 
then $Um^{1/2}\sqrt{\frac{A\log (2m)}{n}}\log^{1/2}(4U\sqrt{m})\geq 1.$ Under the condition that 
$\sigma_{\xi}\geq Um^{-1/2},$ this implies that also $\sigma_{\xi}m^{3/2}\sqrt{\frac{A\log (2m)}{n}}\gtrsim 1.$
Thus, when the bound in the right hand side of (\ref{part_2}) (used to control $\|\hat Z-\rho\|_{\infty}$) is substituted 
in (\ref{bd_od}), it is enough to keep only the first term in the maximum. Finally, under the assumption 
$\sigma_{\xi}\geq Um^{-1/2},$ the first term of bound (\ref{part_2}) dominates the first term of (\ref{part_1}),
so, only this term is needed to control $\|\hat Z-\rho\|_{\infty}$ in bound (\ref{bd_od}). These considerations 
imply the bound 
$$ 
\|\check \rho-\rho\|_p\leq C' \biggl(r^{1/p}\frac{\sigma_{\xi} m^{\frac{3}{2}}\sqrt{A\log (2m)}}{\sqrt{n}}\bigwedge 
\biggl(\frac{\sigma_{\xi} m^{3/2}\sqrt{A\log (2m)}}{\sqrt{n}}\biggr)^{1-\frac{1}{p}}\bigwedge 1\biggr)
$$
that holds with some constant $C'>0$ (that does not depend on $\rho\in {\mathcal S}_{r,m}$) and with probability at least $1-(2m)^{-A}.$ The first bound of Theorem \ref{minmaxthm1_upper} now follows for all $p\in [1,\infty]$
(which also implies the second bound in view of (\ref{compare_distances})). 

\end{proof}

It turns out that for a slightly modified version of estimator $\check \rho,$ minimax lower bounds are also 
attained (up to logarithmic factors)  in the case of Kullback-Leibler distance.  For $S\in {\mathcal S}_m$
and $\delta\in [0,1],$ define $S_{\delta}= (1-\delta)S+\delta \frac{I_m}{m}.$ Clearly, $S_{\delta}\in {\mathcal S}_m.$
Let 
$
{\mathcal S}_{m,\delta}:=\{S_{\delta}:S\in {\mathcal S}_m\}. 
$
Define $\pi_{{\mathcal S}_{m,\delta}}(Z)$ the projection of $Z\in {\mathbb H}_m$
onto the convex set ${\mathcal S}_{m,\delta}:$
$$
\pi_{{\mathcal S}_{m,\delta}}(Z):= {\rm argmin}_{S\in {{\mathcal S}_{m,\delta}}} \|Z-S\|_2^2.
$$
Let 
$$
\check \rho_{\delta}:= \pi_{{\mathcal S}_{m,\delta}}(\hat Z)
$$
with $\check \rho_0=\check \rho.$
We will prove the following versions of theorems \ref{minmaxthm1_upper}, \ref{minmaxthm3'''_upper}
and \ref{minmaxthm3_Pauli_upper} for the estimator $\check \rho_{\delta}.$

\begin{theorem}
\label{minmaxthm1_upper''}
Suppose Assumption \ref{Gaussian_noise} holds, $\sigma_{\xi}\geq \frac{U}{m^{1/2}}$
and 
$$
\delta \leq \frac{\sigma_{\xi}m^{\frac{3}{2}}\sqrt{\log (2m)}}{\sqrt{n}}
\bigwedge 1.
$$
Then bounds (\ref{minmaxthm1boundq_upper}) and (\ref{minmaxthm1boundH_upper}) hold for
estimator $\check \rho_{\delta}.$ 
Moreover, for $A\geq 1,$ define 
$$
\lambda := \frac{r\sigma_{\xi}m^{5/2}\sqrt{\frac{A\log (2m)}{n}}\bigwedge m}{\delta}.
$$
Then, for some constant $c>0,$
\begin{equation}
\label{minmaxthm1boundKL_upper}
\underset{\rho\in\mathcal{S}_{r,m}}{\sup}\mathbb{P}_{\rho}\biggl\{K(\rho\| \check \rho_{\delta})
\geq c\biggl(r\frac{\sigma_{\xi}m^{\frac{3}{2}}\sqrt{A\log (2m)}}{\sqrt{n}}\bigwedge 1\biggr)
\log (1+c\lambda)\biggr\}\leq (2m)^{-A}.
\end{equation}
If $\sigma_{\xi}< \frac{U}{m^{1/2}},$ the bounds  
still hold with $\sigma_{\xi}$ replaced by $\frac{U}{m^{1/2}}.$
\end{theorem}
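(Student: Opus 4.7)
The proof splits naturally into two parts: (a) showing that the Schatten $p$-norm and Bures bounds established for $\check\rho$ carry over to $\check\rho_\delta$, and (b) deriving the new Kullback--Leibler bound (\ref{minmaxthm1boundKL_upper}).

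\emph{Part (a).} The set $\mathcal{S}_{m,\delta}$ is the image of $\mathcal{S}_m$ under the affine bijection $\phi(S)=(1-\delta)S+(\delta/m)I_m$. Since $\phi$ rescales the Hilbert--Schmidt norm by $(1-\delta)$, the projection satisfies $\check\rho_\delta = \phi(\tilde\rho)$, where $\tilde\rho=\pi_{\mathcal{S}_m}(\tilde Z)$ and $\tilde Z=(\hat Z-(\delta/m)I_m)/(1-\delta)$. Thus
\[
\check\rho_\delta-\rho = (1-\delta)(\tilde\rho-\rho) + \delta(I_m/m - \rho),
\]
with the last summand of Schatten $p$-norm at most $2\delta$. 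Applying Theorem \ref{th_main_rank} to $\tilde\rho$ bounds $\|\tilde\rho-\rho\|_p$ in terms of $\|\tilde Z-\rho\|_\infty \leq (\|\hat Z-\rho\|_\infty+\delta)/(1-\delta)$, and the matrix-Bernstein control of $\|\hat Z-\rho\|_\infty$ used in the proof of Theorem \ref{minmaxthm1_upper} yields (\ref{minmaxthm1boundq_upper}) for $\check\rho_\delta$ once $\delta$ is absorbed into the main rate --- which is exactly what the hypothesis on $\delta$ guarantees. The Bures bound (\ref{minmaxthm1boundH_upper}) then follows from (\ref{compare_distances}).

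\emph{Part (b).} The central new feature is that $\check\rho_\delta\succcurlyeq(\delta/m)I_m$, so $\log\check\rho_\delta$ is well-defined and bounded. Let $\rho_\delta:=(1-\delta)\rho+(\delta/m)I_m \in \mathcal{S}_{m,\delta}$. Since $\rho_\delta$ is diagonal in any eigenbasis of $\rho$, a direct scalar computation gives $K(\rho\|\rho_\delta)\leq 2\delta$, negligible relative to the target rate. Decompose
\[
K(\rho\|\check\rho_\delta) = K(\rho\|\rho_\delta)+\mathrm{tr}\bigl(\rho(\log\rho_\delta - \log\check\rho_\delta)\bigr),
\]
and use the L\"owner integral representation combined with the resolvent identity to write
\[
\log\rho_\delta-\log\check\rho_\delta = \int_0^\infty (tI_m+\check\rho_\delta)^{-1}(\rho_\delta-\check\rho_\delta)(tI_m+\rho_\delta)^{-1}\,dt.
\]
Setting $u:=\rho_\delta-\check\rho_\delta$, bound the integrand via the cyclic trace inequality
\[
\bigl|\mathrm{tr}\bigl(\rho(tI_m+\check\rho_\delta)^{-1}u(tI_m+\rho_\delta)^{-1}\bigr)\bigr|\leq \|\rho(tI_m+\rho_\delta)^{-1}\|_1\cdot\|(tI_m+\check\rho_\delta)^{-1}\|_\infty\cdot\|u\|_\infty.
\]
Since $\rho$ has rank $r$ and commutes with $\rho_\delta$, one has $\|\rho(tI_m+\rho_\delta)^{-1}\|_1\leq \min(2r,1/t)$, while $\|(tI_m+\check\rho_\delta)^{-1}\|_\infty\leq (t+\delta/m)^{-1}$. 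Splitting the integration at $t_0=1/(2r)$ and integrating each piece produces a bound of the form $Cr\|u\|_\infty\log(1+c\lambda)$, with $\lambda$ as in the theorem. Combined with $\|u\|_\infty\leq\|\check\rho_\delta-\rho\|_\infty+\delta\lesssim\sigma_\xi m^{3/2}\sqrt{A\log(2m)/n}$ from Part (a), this gives (\ref{minmaxthm1boundKL_upper}) on the same Bernstein event of probability at least $1-(2m)^{-A}$.

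The main obstacle is in Part (b): calibrating the integral so that the logarithmic factor carries precisely the argument $\lambda=(rm\epsilon\wedge m)/\delta$, rather than the cruder $m/\delta$. This requires carefully pairing the rank-$r$ trace bound (which dominates for small $t$) against the $1/t$ decay (which dominates for large $t$), and tuning the crossover threshold in terms of the operator-norm error $\|u\|_\infty$ and the floor $\delta/m$ on the spectrum of $\check\rho_\delta$. Once this calibration is carried out, the high-probability bound follows on the same event as in Theorem \ref{minmaxthm1_upper}.
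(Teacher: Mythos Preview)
Your Part~(a) is essentially the paper's argument: the paper packages the same affine-bijection observation into a lemma (Lemma~\ref{mix}) showing
\[
\|\pi_{{\mathcal S}_{m,\delta}}(Z)-S\|_p \leq
\min \Bigl(2^{3/p+1}r^{1/p}(\|Z-S\|_{\infty}+2\delta),\; 2(1-\delta)^{1/p}(\|Z-S\|_{\infty}+2\delta)^{1-1/p}\Bigr)+2\delta,
\]
and then invokes the same matrix-Bernstein control of $\|\hat Z-\rho\|_\infty$ as in Theorem~\ref{minmaxthm1_upper}. So for the Schatten and Bures bounds your sketch is correct and matches the paper.

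For Part~(b), however, the paper takes a much shorter and sharper route than your integral-representation argument. It simply quotes an inequality of Audenaert (stated as Lemma~\ref{KL-lem}):
\[
K(S_1\| S_2)\leq \|S_1-S_2\|_1\,\log\!\Bigl(1+\frac{\|S_1-S_2\|_1}{2\lambda_{\min}(S_2)}\Bigr),
\]
applied with $S_1=\rho$, $S_2=\check\rho_\delta$, using $\lambda_{\min}(\check\rho_\delta)\geq\delta/m$ and the already-proved $p=1$ bound $\|\check\rho_\delta-\rho\|_1\lesssim r\Delta\wedge 1$ (with $\Delta=\sigma_\xi m^{3/2}\sqrt{A\log(2m)/n}$). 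This immediately produces the stated $\lambda=m(r\Delta\wedge 1)/\delta$, since $\|S_1-S_2\|_1$ appears \emph{both} as the prefactor and inside the logarithm.

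Your resolvent-integral approach is valid in principle, but as you set it up the bound factors as $r\|u\|_\infty$ times $\int_0^\infty \min(2r,1/t)(t+\delta/m)^{-1}\,dt$, and splitting at $t_0=1/(2r)$ yields a logarithm of order $\log(1+m/(r\delta))$, not $\log(1+c\lambda)$. The ``calibration'' you describe as the main obstacle is not a matter of choosing the crossover point more cleverly: the issue is structural, because your trace bound separates the rank $r$ from the size $\|u\|_\infty$, whereas Audenaert's inequality keeps the nuclear-norm error $\|S_1-S_2\|_1\lesssim r\Delta\wedge 1$ together inside the logarithm. Your method therefore proves a correct upper bound of the same general shape, but with a different (and in some parameter regimes larger) logarithmic factor; it does not, as sketched, recover the precise $\lambda$ in the theorem statement. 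If you want the stated form, the cleanest fix is to replace the integral computation by the Audenaert inequality as the paper does.
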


\begin{theorem}
\label{minmaxthm3'''_upper''}
Suppose Assumption \ref{bounded_response} is satisfied 
and 
$$
\delta \leq \frac{U m^{\frac{3}{2}}\sqrt{\log (2m)}}{\sqrt{n}}
\bigwedge 1.
$$
Then (\ref{minmaxthm1boundq_U'''_upper}) and (\ref{minmaxthm1boundH_U'''_upper}) hold for
estimator $\check \rho_{\delta}.$
Moreover, for $A\geq 1,$
define 
$$
\lambda := \frac{r U m^{5/2}\sqrt{\frac{A\log (2m)}{n}}\bigwedge m}{\delta}.
$$
Then, for some constant $c>0,$
\begin{equation}
\label{minmaxthm1boundKL_upper''}
\underset{\rho\in\mathcal{S}_{r,m}}{\sup}\mathbb{P}_{\rho}\biggl\{K(\rho\| \check \rho_{\delta})
\geq c\biggl(r\frac{U m^{\frac{3}{2}}\sqrt{A\log (2m)}}{\sqrt{n}}\bigwedge 1\biggr)
\log (1+ c\lambda)\biggr\}\leq (2m)^{-A}.
\end{equation}
\end{theorem}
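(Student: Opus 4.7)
The plan is to handle the Schatten/Bures bounds and the Kullback--Leibler bound separately. The key observation for the Schatten/Bures part is that $\mathcal{S}_{m,\delta}$ is the affine image of $\mathcal{S}_m$ under $S\mapsto(1-\delta)S+\delta I_m/m$, and this map scales the Hilbert--Schmidt norm by the factor $1-\delta$. Consequently the two projection operators are related by
$$
\check\rho_\delta = (1-\delta)\tilde\rho + \frac{\delta}{m}I_m,\qquad \tilde\rho:=\pi_{\mathcal{S}_m}(\tilde Z),\qquad \tilde Z:=\frac{\hat Z-\delta I_m/m}{1-\delta}.
$$
Writing $\check\rho_\delta-\rho=(1-\delta)(\tilde\rho-\rho)+\delta(I_m/m-\rho)$, the triangle inequality gives $\|\check\rho_\delta-\rho\|_p\leq\|\tilde\rho-\rho\|_p+2\delta$.

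Theorem~\ref{th_main_rank} applied to $\tilde Z$ and $\rho\in\mathcal{S}_{r,m}$ bounds $\|\tilde\rho-\rho\|_p$ in terms of $\|\tilde Z-\rho\|_\infty\leq(\|\hat Z-\rho\|_\infty+2\delta)/(1-\delta)$, and matrix Bernstein (Lemma~\ref{Bernstein_standard}, applied exactly as in the proof of Theorem~\ref{minmaxthm3'''_upper}) controls $\|\hat Z-\rho\|_\infty$ with probability $\geq 1-(2m)^{-A}$. Under the hypothesis $\delta\leq Um^{3/2}\sqrt{\log(2m)/n}\wedge 1$, the added $\delta$-term is absorbed by the Bernstein bound, yielding (\ref{minmaxthm1boundq_U'''_upper}); the bound (\ref{minmaxthm1boundH_U'''_upper}) then follows from $H^2(\check\rho_\delta,\rho)\leq\|\check\rho_\delta-\rho\|_1$, see (\ref{compare_distances}).

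For (\ref{minmaxthm1boundKL_upper''}), the crucial new feature is that $\check\rho_\delta\succeq(\delta/m)I_m$, so $\log\check\rho_\delta$ is well defined with $\|\log\check\rho_\delta\|_\infty\leq\log(m/\delta)$. I would establish the following general matrix inequality: for $S_1,S_2\in\mathcal{S}_m$ with $S_2\succeq\beta I_m$,
\begin{equation}\label{eq:kl-key}
K(S_1\|S_2)\leq C\,\|S_1-S_2\|_1\,\log\!\Bigl(1+C\|S_1-S_2\|_1/\beta\Bigr),
\end{equation}
and apply it with $\beta=\delta/m$. Combined with the Schatten-$1$ bound $\|\rho-\check\rho_\delta\|_1\lesssim(rUm^{3/2}\sqrt{A\log(2m)/n})\wedge 1$ from the first part, this yields (\ref{minmaxthm1boundKL_upper''}); the $\wedge m$ in the definition of $\lambda$ simply reflects the trivial cap $\|\rho-\check\rho_\delta\|_1\leq 2$.

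To prove (\ref{eq:kl-key}) I would combine two complementary estimates. The first uses the integral representation
$$
K(S_1\|S_2)=\int_0^\infty{\rm tr}\!\bigl(S_1(S_1+s)^{-1}(S_1-S_2)(S_2+s)^{-1}\bigr)\,ds
$$
(derivable from $\log A=\int_0^\infty[(1+s)^{-1}-(A+s)^{-1}]\,ds$ and the resolvent identity): bounding the integrand by H\"older's inequality with $\|(S_2+s)^{-1}\|_\infty\leq(\beta+s)^{-1}$ and $\|S_1(S_1+s)^{-1}\|_\infty\leq\min(1,\|S_1\|_\infty/s)$ and integrating in $s$ produces a bound of order $\|S_1-S_2\|_1\log(1/\beta)$. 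The second is a quantum $\chi^2$-type estimate via the Petz R\'enyi divergence $D_2$: since $K(S_1\|S_2)\leq D_2(S_1\|S_2)=\log{\rm tr}(S_1^2 S_2^{-1})$ and ${\rm tr}(S_1^2 S_2^{-1})-1={\rm tr}((S_1-S_2)^2S_2^{-1})\leq\|S_1-S_2\|_1^2/\beta$, this gives $K(S_1\|S_2)\leq\log(1+\|S_1-S_2\|_1^2/\beta)$, of order $\|S_1-S_2\|_1^2/\beta$ in the small-difference regime. The main technical obstacle is the interpolation between these two bounds: since neither alone is tight in the transitional regime $\|S_1-S_2\|_1\asymp\beta$, a careful case split, or an adaptive cut-off of the $s$-integral at $s\asymp\|S_1-S_2\|_1$ (so that the logarithmic factor emerges from the long-$s$ tail while the short-$s$ piece contributes the quadratic behaviour), is needed to obtain (\ref{eq:kl-key}) with the sharp dependence on $\|S_1-S_2\|_1/\beta$.
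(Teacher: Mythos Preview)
Your approach to the Schatten-norm and Bures-distance parts is essentially identical to the paper's: the relation $\check\rho_\delta=(1-\delta)\pi_{\mathcal{S}_m}\bigl((\hat Z-\delta I_m/m)/(1-\delta)\bigr)+\delta I_m/m$ is exactly the content of the paper's Lemma~\ref{mix}, and the subsequent use of Theorem~\ref{th_main_rank} together with the Bernstein bound on $\|\hat Z-\rho\|_\infty$ reproduces the paper's argument for (\ref{minmaxthm1boundq_U'''_upper}) and (\ref{minmaxthm1boundH_U'''_upper}).

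For the Kullback--Leibler part you have also identified the right inequality: your (\ref{eq:kl-key}) is precisely the paper's Lemma~\ref{KL-lem}, applied with $\beta=\delta/m$ exactly as you propose. The difference is that the paper does not prove this inequality but simply cites it as Corollary~1 of Audenaert~\cite{Aud}. Your proposed proof, by contrast, is not complete: as you yourself note, neither of your two estimates $\|S_1-S_2\|_1\log(1/\beta)$ and $\log(1+\|S_1-S_2\|_1^2/\beta)$, nor their minimum, yields (\ref{eq:kl-key}) in the intermediate regime. Concretely, take $\beta=e^{-N}$ and $\|S_1-S_2\|_1=\beta N$; then both of your bounds are of order $\beta N^2$, while the target $\|S_1-S_2\|_1\log(1+\|S_1-S_2\|_1/\beta)$ is of order $\beta N\log N$, a gap of $N/\log N$. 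The adaptive cut-off idea is not spelled out enough to close this, and the integral representation is linear in $S_1-S_2$, so it is not clear how the ``short-$s$ piece'' would produce quadratic behaviour. Since the inequality is available in the literature, simply citing \cite{Aud} (as the paper does) makes the argument complete.
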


\medskip

\begin{theorem}
\label{minmaxthm3_Pauli_upper''}
Suppose the assumptions of Theorem \ref{minmaxthm3_Pauli} hold
and 
$$
\delta \leq \frac{m\sqrt{\log (2m)}}{\sqrt{n}}
\bigwedge 1.
$$
Then (\ref{minmaxthm1boundq_U_P}) and (\ref{minmaxthm1boundH_U_P}) hold for
estimator $\check \rho_{\delta}.$
Moreover, for $A\geq 1,$ define 
$$
\lambda := \frac{r m^2 \sqrt{\frac{A\log (2m)}{n}}\bigwedge m}{\delta}.
$$
Then, for some constant $c>0,$
\begin{equation}
\label{minmaxthm1boundKL_Pauli_upper''}
\underset{\rho\in\mathcal{S}_{r,m}}{\sup}\mathbb{P}_{\rho}\biggl\{K(\rho\| \check \rho_{\delta})
\geq c\biggl(r\frac{m\sqrt{A\log (2m)}}{\sqrt{n}}\bigwedge 1\biggr)
\log (1+c\lambda)\biggr\}\leq (2m)^{-A}.
\end{equation}
\end{theorem}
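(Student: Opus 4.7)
The plan is to obtain Theorem \ref{minmaxthm3_Pauli_upper''} as a direct specialization of Theorem \ref{minmaxthm3'''_upper''} to the Pauli measurement model. First I would verify that Pauli measurements fit Assumption \ref{bounded_response} with parameter $U = m^{-1/2}$: the Pauli example in the Preliminaries section already shows $\|E_j\|_\infty = m^{-1/2}$ for every basis matrix, and the outcome $\tau_\nu$ takes values in $\{\pm m^{-1/2}\}$, so $Y \in [-U,U]$ with ${\mathbb E}_\rho(Y|X) = \langle \rho, X\rangle$. Substituting $U = m^{-1/2}$ into Theorem \ref{minmaxthm3'''_upper''} turns $Um^{3/2}$ into $m$ and $Um^{5/2}$ into $m^2$, so the hypothesis on $\delta$, the Schatten $p$-norm bound (\ref{minmaxthm1boundq_U'''_upper}), the Bures bound (\ref{minmaxthm1boundH_U'''_upper}), and the value of $\lambda$ in (\ref{minmaxthm1boundKL_upper''}) become exactly the ones asserted in (\ref{minmaxthm1boundq_U_P}), (\ref{minmaxthm1boundH_U_P}), and (\ref{minmaxthm1boundKL_Pauli_upper''}). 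So the entire content of the theorem is a change of variable, provided Theorem \ref{minmaxthm3'''_upper''} is in hand.

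The substantive work therefore sits in Theorem \ref{minmaxthm3'''_upper''}, whose proof I would structure as follows. For the Schatten $p$-norm and Bures bounds, the strategy is to mirror the proof of Theorem \ref{minmaxthm3'''_upper}, replacing $\pi_{{\mathcal S}_m}$ by $\pi_{{\mathcal S}_{m,\delta}}$. The key point is an analogue of Theorem \ref{th_main_rank} adapted to $\mathcal{S}_{m,\delta}$: since the map $S \mapsto S_\delta = (1-\delta)S + \delta I_m/m$ is an affine bijection of $\mathcal{S}_m$ onto $\mathcal{S}_{m,\delta}$ that is a contraction in every Schatten norm, one has $\pi_{\mathcal{S}_{m,\delta}}(Z) = (\pi_{\mathcal{S}_m}((Z - \delta I_m/m)/(1-\delta)))_\delta$, so the operator-norm minimality of Theorem \ref{th:min-dist} transfers, and one obtains $\|\check\rho_\delta - \rho_\delta\|_p \lesssim \min(r^{1/p} \|\hat Z - \rho_\delta\|_\infty, \|\hat Z - \rho_\delta\|_\infty^{1-1/p})$ up to absolute constants. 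The restriction $\delta \le Um^{3/2}\sqrt{\log(2m)/n} \wedge 1$ guarantees that $\|\rho - \rho_\delta\|_p = \delta \|\rho - I_m/m\|_p$ is absorbed into the rate (both the trivial $1$ bound and the leading $r^{1/p} Um^{3/2}\sqrt{\log(2m)/n}$ bound dominate $\delta$), so $\|\check\rho_\delta - \rho\|_p$ satisfies the same bound. The control of $\|\hat Z - \rho_\delta\|_\infty$ via the matrix Bernstein inequality of Lemma \ref{Bernstein_standard} is then identical to the one already carried out in the proof of Theorem \ref{minmaxthm3'''_upper}.

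The main obstacle is the Kullback--Leibler bound, since $K(\cdot\|\cdot)$ is neither symmetric nor easily comparable to Schatten norms. The crucial gain from the $\delta$-smoothing is that $\check\rho_\delta \succcurlyeq (\delta/m) I_m$, which ensures $\lambda_{\min}(\check\rho_\delta) \ge \delta/m$ and that $\log\check\rho_\delta$ is well defined with $\|\log\check\rho_\delta\|_\infty \le \log(m/\delta)$. I would combine this with the standard inequality
\[
K(\rho \| \check\rho_\delta) \;\le\; \|\rho - \check\rho_\delta\|_1 \cdot \bigl\|\log\rho_{\rm avg} - \log\check\rho_\delta\bigr\|_\infty
\]
(or more carefully, the mean value form $K(\rho\|\check\rho_\delta) = \mathrm{tr}((\rho - \check\rho_\delta)\log\check\rho_\delta) + K_{\rm sym}$-type decomposition, estimated via the integral representation $\log A - \log B = \int_0^\infty (B + t)^{-1}(A - B)(A + t)^{-1} dt$ on the supports). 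Bounding $\|\rho - \check\rho_\delta\|_1$ by the Schatten-$1$ rate already established and $\|\log\check\rho_\delta\|_\infty$ by $\log(m/\delta)$ yields $K(\rho\|\check\rho_\delta) \lesssim \bigl(r m\sqrt{A\log(2m)/n}\wedge 1\bigr)\log(m/\delta)$, and the factor $\log(m/\delta) \asymp \log(1 + c\lambda)$ up to the trivial cap at $\|\rho - \check\rho_\delta\|_1 \le 2$. The specialization from Theorem \ref{minmaxthm3'''_upper''} to Theorem \ref{minmaxthm3_Pauli_upper''} is then immediate with $U = m^{-1/2}$, and no further probabilistic work is needed.
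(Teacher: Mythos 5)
Your overall architecture is the same as the paper's: Theorem \ref{minmaxthm3_Pauli_upper''} is obtained from Theorem \ref{minmaxthm3'''_upper''} by substituting $U=m^{-1/2}$, and Theorem \ref{minmaxthm3'''_upper''} rests on a $\delta$-smoothed version of Theorem \ref{th_main_rank} derived from the affine bijection $S\mapsto S_\delta$ (this is exactly the paper's Lemma \ref{mix}). For the Schatten and Bures bounds this is essentially correct. One technical caveat: you write the intermediate bound as $\|\check\rho_\delta-\rho_\delta\|_p\lesssim\min(r^{1/p}\|\hat Z-\rho_\delta\|_\infty,\|\hat Z-\rho_\delta\|_\infty^{1-1/p})$, but Theorem \ref{th_main_rank} and Lemma \ref{rank_r} require the reference density matrix to have rank $r$, and $\rho_\delta$ has full rank. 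One must instead apply Theorem \ref{th_main_rank} to $S=\rho\in\mathcal{S}_{r,m}$ after passing through the affine transform, exactly as in Lemma \ref{mix}, obtaining a bound in terms of $\|\hat Z-\rho\|_\infty+O(\delta)$. This is fixable and does not change the conclusions for the Schatten/Bures bounds.

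The genuine gap is in the Kullback--Leibler step. Your proposed inequality $K(\rho\|\check\rho_\delta)\le\|\rho-\check\rho_\delta\|_1\cdot\|\log\rho_{\rm avg}-\log\check\rho_\delta\|_\infty$ is not a correct standard bound as stated (there is an uncontrolled entropy difference $H(\check\rho_\delta)-H(\rho)$ that does not reduce to a trace against $\log\check\rho_\delta$), and what you actually write down,
$$
K(\rho\|\check\rho_\delta)\lesssim\bigl(rm\sqrt{A\log(2m)/n}\wedge 1\bigr)\log(m/\delta),
$$
has the wrong logarithmic factor. The claim ``$\log(m/\delta)\asymp\log(1+c\lambda)$'' fails precisely when $rm\sqrt{A\log(2m)/n}\ll 1$: in that regime $\lambda=rm^2\sqrt{A\log(2m)/n}/\delta$, so $\log(1+c\lambda)\approx\log(m/\delta)+\log\bigl(rm\sqrt{A\log(2m)/n}\bigr)$, and the second term is a large negative quantity; thus $\log(m/\delta)$ can strictly dominate $\log(1+c\lambda)$, and the bound you obtain does not imply (\ref{minmaxthm1boundKL_Pauli_upper''}). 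The paper avoids this by invoking the sharper inequality of Lemma \ref{KL-lem} (from Corollary 1 in \cite{Aud}),
$$
K(S_1\|S_2)\le\|S_1-S_2\|_1\log\Bigl(1+\tfrac{\|S_1-S_2\|_1}{2\lambda_{\min}(S_2)}\Bigr),
$$
in which $\|S_1-S_2\|_1$ itself appears inside the logarithm; substituting the Schatten-$1$ bound for $\|\rho-\check\rho_\delta\|_1$ and $\lambda_{\min}(\check\rho_\delta)\ge\delta/m$ produces $\log(1+c\lambda)$ exactly. Without this refinement, or some equivalent substitute, the KL part of your proof does not close.
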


\medskip
 
\begin{remark}
If, under the assumptions of Theorem \ref{minmaxthm3'''_upper''}, we choose 
$$
\delta =\frac{U m^{\frac{3}{2}}\sqrt{\log (2m)}}{\sqrt{n}}
\bigwedge 1,
$$
then the logarithmic factor in bound (\ref{minmaxthm1boundKL_upper''}) satisfies the inequality
$$
\log (1+c\lambda)\leq \log (1+crm\sqrt{A}),
$$
so it is of the order $\log m.$
Under the assumptions of Theorem \ref{minmaxthm1_upper''}, this would require the choice of $\delta$
$$
\delta = \frac{\sigma_{\xi}m^{\frac{3}{2}}\sqrt{\log (2m)}}{\sqrt{n}}
\bigwedge 1,
$$
so $\delta $ would depend on an unknown parameter $\sigma_{\xi}.$ 
Replacing $\sigma_{\xi}$ in the definition of $\delta$ by the lower bound $Um^{-1/2}$
would result in a logarithmic factor $\lesssim \log \biggl(1+crm\sqrt{A}\frac{\sigma_{\xi}}{Um^{-1/2}}\biggr).$ 
\end{remark}

\medskip

\begin{proof} We start with the following modification of Theorem \ref{th_main_rank}.

\begin{lemma}
\label{mix}
Let $p\in [1,\infty].$ For all $Z\in {\mathbb H}_m$ and all $S\in {\mathcal S}_{r,m},$ the following bound 
holds:
$$
\|\pi_{{\mathcal S}_{m,\delta}}(Z)-S\|_p \leq 
\min \biggl(2^{3/p+1}r^{1/p}\Bigl(\|Z-S\|_{\infty}+2\delta\Bigr), 2(1-\delta)^{1/p}\Bigl(\|Z-S\|_{\infty}+2\delta\Bigr)^{1-1/p}\biggr)+2\delta.
$$
\end{lemma}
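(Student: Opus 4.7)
The key structural observation is that $\mathcal{S}_{m,\delta}$ is just an affine rescaling of $\mathcal{S}_m$. Parametrizing $T \in \mathcal{S}_{m,\delta}$ as $T = (1-\delta)T_0 + \delta I_m/m$ with $T_0 \in \mathcal{S}_m$ and expanding
$$\|Z - T\|_2^2 = \bigl\|(Z - \delta I_m/m) - (1-\delta)T_0\bigr\|_2^2 = (1-\delta)^2 \|W - T_0\|_2^2,\quad W := \frac{Z - \delta I_m/m}{1-\delta},$$
one sees that minimizing over $\mathcal{S}_{m,\delta}$ is equivalent (up to a positive multiplicative constant) to minimizing over $\mathcal{S}_m$, so that $\pi_{\mathcal{S}_{m,\delta}}(Z) = (1-\delta)\pi_{\mathcal{S}_m}(W) + \delta I_m/m$. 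This reduces the lemma to a statement about $\pi_{\mathcal{S}_m}(W)$, which is exactly the setting of Theorem \ref{th_main_rank}.

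Given $S \in \mathcal{S}_{r,m}$, set $S_\delta := (1-\delta)S + \delta I_m/m \in \mathcal{S}_{m,\delta}$. A direct computation gives
$$\pi_{\mathcal{S}_{m,\delta}}(Z) - S_\delta = (1-\delta)\bigl(\pi_{\mathcal{S}_m}(W) - S\bigr),\qquad W - S = \frac{Z - S_\delta}{1-\delta}.$$
I would first control $\|S_\delta - S\|_p$: since $S_\delta - S = \delta(I_m/m - S)$ and both $I_m/m$ and $S$ are density matrices, $\|I_m/m\|_p \leq 1$ and $\|S\|_p \leq \|S\|_1 = 1$ for every $p \in [1,\infty]$, whence $\|S_\delta - S\|_p \leq 2\delta$; a similar spectral estimate (the eigenvalues of $I_m/m - S$ lie in $[-1,1]$) gives $\|S_\delta - S\|_\infty \leq \delta$. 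Combining these with the triangle inequality yields
$$\|\pi_{\mathcal{S}_{m,\delta}}(Z) - S\|_p \leq (1-\delta)\|\pi_{\mathcal{S}_m}(W) - S\|_p + 2\delta,\qquad \|W - S\|_\infty \leq \frac{\|Z - S\|_\infty + 2\delta}{1-\delta}.$$

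Finally, I would apply Theorem \ref{th_main_rank} to $W$ and $S \in \mathcal{S}_{r,m}$ to obtain
$$\|\pi_{\mathcal{S}_m}(W) - S\|_p \leq \min\Bigl(2^{3/p+1}r^{1/p}\|W - S\|_\infty,\; 2\|W - S\|_\infty^{1-1/p}\Bigr),$$
and then substitute the bound on $\|W - S\|_\infty$ and multiply by $(1-\delta)$. The first term cleanly becomes $2^{3/p+1}r^{1/p}(\|Z - S\|_\infty + 2\delta)$, while in the second term the prefactor $(1-\delta)$ combines with the factor $(1-\delta)^{-(1-1/p)}$ coming from $\|W-S\|_\infty^{1-1/p}$ to produce $(1-\delta)^{1/p}(\|Z - S\|_\infty + 2\delta)^{1-1/p}$. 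Adding the $2\delta$ from the triangle step recovers the stated bound.

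There is no genuine obstacle beyond the careful bookkeeping with the factors of $(1-\delta)$ and the separate treatment of the two (min) branches of Theorem \ref{th_main_rank}; the only real idea is the affine reduction in the first paragraph, after which everything reduces to elementary norm estimates and a single application of Theorem \ref{th_main_rank}.
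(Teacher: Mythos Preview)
Your proof is correct and follows essentially the same route as the paper: the same affine reduction $\pi_{\mathcal{S}_{m,\delta}}(Z)=(1-\delta)\pi_{\mathcal{S}_m}(W)+\delta I_m/m$, the same triangle inequality via $S_\delta$, and the same application of Theorem \ref{th_main_rank} with the identical bookkeeping of the $(1-\delta)$ factors. The only cosmetic difference is that you observe the sharper bound $\|S_\delta-S\|_\infty\leq\delta$ before relaxing it to $2\delta$ to match the stated inequality, whereas the paper uses $2\delta$ directly.
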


\begin{proof}
The following formula is straightforward: for $\delta\in [0,1),$
$$
\pi_{{\mathcal S}_{m,\delta}}(Z)= (1-\delta)\pi_{{\mathcal S}_m}\biggl(\frac{Z}{1-\delta}-\frac{\delta}{1-\delta}\frac{I_m}{m}\biggr)+\delta \frac{I_m}{m}.
$$
Indeed, $\pi_{{\mathcal S}_{m,\delta}}(Z)$ coincides with $(1-\delta)S^{'}+\delta \frac{I_m}{m},$
where 
$$
S' := {\rm argmin}_{S\in {\mathcal S}_m}\biggl\|Z-(1-\delta)S-\delta \frac{I_m}{m}\biggr\|_2^2
$$
$$
= {\rm argmin}_{S\in {\mathcal S}_m}\biggl\|\frac{Z}{1-\delta}- \frac{\delta}{1-\delta}\frac{I_m}{m}-S\biggr\|_2^2
=\pi_{{\mathcal S}_m}\biggl(\frac{Z}{1-\delta}-\frac{\delta}{1-\delta}\frac{I_m}{m}\biggr),
$$ 
implying the claim.

Let $S\in {\mathcal S}_{r,m}.$ Then, for $p\in [1,\infty],$
\begin{eqnarray}
\label{odin-odin}
&
\|\pi_{{\mathcal S}_{m,\delta}}(Z)-S\|_p 
\leq 
\|\pi_{{\mathcal S}_{m,\delta}}(Z)-S_{\delta}\|_p+\|S_{\delta}-S\|_p 
\\
&
\nonumber
\leq 
(1-\delta)\biggl\|\pi_{{\mathcal S}_m}\biggl(\frac{Z}{1-\delta}-\frac{\delta}{1-\delta}\frac{I_m}{m}\biggr)-S\biggr\|_p
+ 2\delta. 
\end{eqnarray}
To control the first term in the right hand side, we use the bound of Theorem \ref{th_main_rank},
which requires bounding $\Bigl\|\frac{Z}{1-\delta}-\frac{\delta}{1-\delta}\frac{I_m}{m}-S\Bigr\|_{\infty}.$
We have 
\begin{eqnarray}
\label{dva-dva}
&
\biggl\|\frac{Z}{1-\delta}-\frac{\delta}{1-\delta}\frac{I_m}{m}-S\biggr\|_{\infty}= \frac{1}{1-\delta}\|Z-S_{\delta}\|_{\infty}
\\
&
\nonumber
\leq \frac{1}{1-\delta}\|Z-S\|_{\infty}+ 
\frac{1}{1-\delta}\|S-S_{\delta}\|_{\infty}\leq \frac{1}{1-\delta}\|Z-S\|_{\infty}
+\frac{2\delta}{1-\delta}.
\end{eqnarray}
Using bounds (\ref{odin-odin}), (\ref{dva-dva}) along with the bound of Theorem \ref{th_main_rank},
we get the bound of the lemma.

\end{proof}

We will use the bound of Lemma \ref{mix} to control $\|\check \rho_{\delta}-\rho\|_p$ for 
$\rho\in {\mathcal S}_{r,m}.$
To this end, we need to bound $\|\hat Z-\rho\|_{\infty}$ using matrix Bernstein 
inequalities exactly as it was done in the proof of theorems \ref{minmaxthm1_upper}, \ref{minmaxthm3'''_upper} and \ref{minmaxthm3_Pauli_upper} (under assumptions of these theorems). 
Denote by $\bar \Delta$ such an upper bound on $\|\hat Z-\rho\|_{\infty}$ that holds with probability a least $1-(2m)^{-A}.$ 
Recall that $\bar \Delta\asymp \sigma_{\xi}m^{3/2}\sqrt{\frac{A\log (2m)}{n}}$
under the conditions of Theorem \ref{minmaxthm1_upper} and $\bar \Delta\asymp U m^{3/2}\sqrt{\frac{A\log (2m)}{n}}$
under the conditions of Theorem \ref{minmaxthm3'''_upper} (it is the same under the conditions of Theorem \ref{minmaxthm3_Pauli_upper} with $U=m^{-1/2}$).
Setting $\Delta = \bar \Delta \wedge 1,$
we get from the bound of Lemma \ref{mix} that 
$$
\|\check \rho_{\delta} - \rho\|_p\leq 
\min \biggl(2^{3/p+1}r^{1/p}\Bigl(\Delta+2\delta\Bigr), 2(1-\delta)^{1/p}\Bigl(\Delta+2\delta\Bigr)^{1-1/p}\biggr)+2\delta
$$
that holds with the same probability at least $1-(2m)^{-A}.$ Recall that we replace $\bar \Delta$ by $\Delta$ since 
the left hand side $\|\check \rho_{\delta} - \rho\|_p\leq 2$;  for the same reason, we can and do drop the ``exponential
parts" of matrix Bernstein bounds leaving in the definition of $\Delta$ only the ``Gaussian parts". 
For $\delta \lesssim \Delta,$ we get 
$$
\|\check \rho_{\delta} - \rho\|_p\lesssim  
\min (r^{1/p}\Delta, \Delta^{1-1/p}).
$$
Exactly as in the proof of theorems \ref{minmaxthm1_upper}, \ref{minmaxthm3'''_upper} and \ref{minmaxthm3_Pauli_upper}, this implies that bounds (\ref{minmaxthm1boundq_upper}), (\ref{minmaxthm1boundH_upper}), (\ref{minmaxthm1boundq_U'''_upper}), (\ref{minmaxthm1boundH_U'''_upper}), 
(\ref{minmaxthm1boundq_U_P}) and (\ref{minmaxthm1boundH_U_P}) hold for estimator $\check \rho_{\delta}.$
 
The bound on the Kullback-Leibler divergence $K(\rho\|\check \rho_{\delta})$ is an immediate consequence of 
the bound on $\|\check \rho_{\delta}-\rho\|_1$ and the next 
lemma that follows from Corollary 1 in \cite{Aud}.

\begin{lemma} 
\label{KL-lem}
Let $S_1,S_2\in {\mathcal S}_m$ be density matrices and let $\beta:=\lambda_{\min}(S_2)$ be the 
smallest eigenvalue of $S_2.$ Suppose that $\beta>0.$ 
Then 
$$
K(S_1\| S_2)\leq \|S_1-S_2\|_1\log\biggl(1+\frac{\|S_1-S_2\|_1}{2\beta}\biggr).
$$ 
\end{lemma}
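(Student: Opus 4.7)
The statement is flagged as a direct consequence of Corollary 1 in \cite{Aud}, so my plan is to verify that its hypotheses reduce to ours and invoke it; the substance of the argument is already packaged by Audenaert. Let me sketch the underlying strategy so that the citation is transparent.

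First I would express the relative entropy via the integral representation of the matrix logarithm. Writing $\Delta := S_1 - S_2$ and using
$$\log S_1 - \log S_2 \;=\; \int_0^\infty (S_1+t)^{-1}\,\Delta\,(S_2+t)^{-1}\,dt,$$
and substituting into $K(S_1\|S_2) = \mathrm{tr}\bigl(S_1(\log S_1 - \log S_2)\bigr)$, I would obtain
$$K(S_1\|S_2) \;=\; \int_0^\infty \mathrm{tr}\Bigl(S_1(S_1+t)^{-1}\,\Delta\,(S_2+t)^{-1}\Bigr)\,dt.$$
Next I would apply the trace inequality $|\mathrm{tr}(ABC)| \le \|A\|_\infty\,\|B\|_1\,\|C\|_\infty$ with $B=\Delta$, together with the two spectral estimates $\|S_1(S_1+t)^{-1}\|_\infty \le \min\bigl(1,\|S_1\|_\infty/t\bigr) \le \min(1,1/t)$ and $\|(S_2+t)^{-1}\|_\infty \le (\beta+t)^{-1}$ (the latter being exactly where $\beta=\lambda_{\min}(S_2)$ enters).

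The remaining step is to integrate the resulting pointwise bound in $t$ and extract the logarithmic factor in the stated form. A naive integration after the Hölder step only yields a bound of the shape $\|\Delta\|_1\,\log(1+C/\beta)$, which depends on $\beta$ but not on $\|\Delta\|_1$ inside the logarithm. Upgrading to the sharper form $\|\Delta\|_1\,\log\bigl(1+\|\Delta\|_1/(2\beta)\bigr)$ requires the more refined estimate of Audenaert, which essentially replaces one of the Hölder absorptions by a symmetric splitting that exploits the trace-norm localization of $\Delta$; this is precisely the calculation behind Corollary 1 in \cite{Aud}. In our context we only need the bound as stated, so it suffices to quote it.

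The main obstacle is therefore not the overall strategy but that last sharpening of the logarithmic dependence, which is the content of Audenaert's corollary. Since we have already reduced everything we need (the Kullback–Leibler divergence bound feeding into theorems \ref{minmaxthm1_upper''}, \ref{minmaxthm3'''_upper''} and \ref{minmaxthm3_Pauli_upper''}) to a control of $\|S_1-S_2\|_1$ and to the lower bound $\beta$ on the spectrum of $S_2$, the cleanest way forward is to state the inequality as a lemma, credit \cite{Aud}, and use it as a black box when combining with the earlier $\|\check\rho_\delta-\rho\|_1$ estimates.
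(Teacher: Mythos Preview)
Your proposal is correct and matches the paper's approach exactly: the paper does not prove this lemma either but simply states that it ``follows from Corollary 1 in \cite{Aud}.'' Your sketch of the integral representation and the Hölder step is accurate background, and your honest remark that the naive estimate only yields $\|\Delta\|_1\log(1+C/\beta)$ without $\|\Delta\|_1$ inside the logarithm---so that the sharpening must be quoted from Audenaert---is precisely why both you and the paper treat this as a black-box citation.
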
 
 
We apply Lemma \ref{KL-lem} to $S_1=\rho, S_2=\check \rho_{\delta},$ observing that 
$\check \rho_{\delta}\in {\mathcal S}_{m,\delta}$ and $\lambda_{\min}(\check \rho_{\delta})\geq \delta/m.$    
We then use the bound on $\|\check \rho_{\delta}-\rho\|_1$ to complete the proof of the bound on $K(\rho\|\check \rho_{\delta}).$

\end{proof}
 
We conclude this section with a simple result concerning the least squares estimator $\hat \rho$ defined by 
(\ref{LS}). It shows that the estimators $\hat \rho$ and $\check \rho$ are close in the Hilbert-Schmidt norm.
As a result, the bounds of the previous theorems could be applied to estimator $\hat \rho$ as well
(at least, under some additional assumptions).  

\begin{theorem}
\label{check-hat}
Under the assumption that i.i.d. design variables $X_1,\dots, X_n$ are sampled 
from the uniform distribution $\Pi$ in an orthonormal basis ${\mathcal E}=\{E_1,\dots, E_{m^2}\},$
the following bound holds with some constant $C>0$ for all $A\geq 1$ with 
probability at least $1-(2m^2)^{-A}:$
$$
\|\check \rho-\hat \rho\|_2 \leq C m\sqrt{\frac{A\log (2m)}{n}}.
$$
\end{theorem}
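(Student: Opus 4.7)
The plan is to compare the two constrained minimizers $\check\rho$ and $\hat\rho$ through the strong convexity of their defining quadratic objectives on $\mathcal{S}_m$, and to bound the random linear operator on $\mathbb{H}_m$ that relates the two problems via an operator-valued matrix-Bernstein argument. Write
\begin{equation*}
F_1(S):=m^{-2}\|S\|_2^2-2m^{-2}\langle \hat Z,S\rangle,\qquad F_2(S):=\frac{1}{n}\sum_{j=1}^{n}(Y_j-\langle S,X_j\rangle)^2,
\end{equation*}
so that $\check\rho$ and $\hat\rho$ are the respective constrained minimizers on $\mathcal{S}_m$. A direct computation, using $\hat Z=(m^2/n)\sum_{j}Y_jX_j$, shows that modulo a constant in $S$,
\begin{equation*}
F_1(S)-F_2(S)=-\langle\mathcal{T}(S),S\rangle,\qquad \mathcal{T}(S):=\frac{1}{n}\sum_{j=1}^{n}\langle S,X_j\rangle X_j-\frac{1}{m^2}S,
\end{equation*}
where $\mathcal{T}$ is a mean-zero self-adjoint linear operator on the $m^2$-dimensional real Hilbert space $(\mathbb{H}_m,\langle\cdot,\cdot\rangle)$. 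The mean-zero property holds because $\{E_1,\dots,E_{m^2}\}$ is orthonormal, so $\mathbb{E}[\langle S,X\rangle X]=m^{-2}\sum_k\langle S,E_k\rangle E_k=m^{-2}S$.

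First I would exploit the exact quadratic expansions of $F_1,F_2$ together with the first-order KKT inequalities at the two minimizers. Setting $v:=\hat\rho-\check\rho$, expanding $F_1$ around $\check\rho$ at $\hat\rho$ gives $F_1(\hat\rho)-F_1(\check\rho)\geq m^{-2}\|v\|_2^2$, and expanding $F_2$ around $\hat\rho$ at $\check\rho$ gives $F_2(\check\rho)-F_2(\hat\rho)\geq m^{-2}\|v\|_2^2+\langle\mathcal{T}(v),v\rangle$. Adding these two inequalities and collapsing the left-hand side via the identity $\langle\mathcal{T}(A),A\rangle-\langle\mathcal{T}(B),B\rangle=\langle\mathcal{T}(A+B),A-B\rangle$ (which follows from self-adjointness of $\mathcal{T}$), I obtain the key estimate
\begin{equation*}
\frac{2}{m^2}\|v\|_2^2+\langle\mathcal{T}(v),v\rangle+\langle\mathcal{T}(\check\rho+\hat\rho),v\rangle\leq 0.
\end{equation*}
Bounding the last two inner products by the operator norm $\|\mathcal{T}\|_{op}$ of $\mathcal{T}$ on $(\mathbb{H}_m,\|\cdot\|_2)$, and using $\|\check\rho+\hat\rho\|_2\leq 2$, this rearranges to $\|v\|_2\leq 2m^2\|\mathcal{T}\|_{op}$ whenever $m^2\|\mathcal{T}\|_{op}\leq 1$; otherwise the trivial bound $\|v\|_2\leq\|\check\rho\|_2+\|\hat\rho\|_2\leq 2$ is already stronger.

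Second I would bound $\|\mathcal{T}\|_{op}$ by the matrix-Bernstein inequality of Lemma~\ref{Bernstein_standard} applied to self-adjoint random operators on the $m^2$-dimensional space $\mathbb{H}_m$. Writing $\mathcal{T}=n^{-1}\sum_{j=1}^{n}Z_j$ with $Z_j(S):=\langle S,X_j\rangle X_j-m^{-2}S$, the rank-one operator $\Phi_j:S\mapsto\langle S,X_j\rangle X_j$ has operator norm $\|X_j\|_2^2=1$, whence $\|Z_j\|_{op}\leq 1+m^{-2}\leq 2$. The crucial variance calculation uses the idempotency $\Phi_j^2=\Phi_j$, a consequence of $\|X_j\|_2=1$: this gives $\mathbb{E}\Phi_j^2=\mathbb{E}\Phi_j=m^{-2}\mathrm{Id}$, and hence $\mathbb{E}Z_j^2=(m^{-2}-m^{-4})\mathrm{Id}$, so the variance parameter is $\|\mathbb{E}Z_j^2\|_{op}\leq m^{-2}$. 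Applying the lemma in dimension $m^2$ with $t=A\log(2m^2)$ yields, with probability at least $1-(2m^2)^{-A}$,
\begin{equation*}
\|\mathcal{T}\|_{op}\leq C\left(\frac{1}{m}\sqrt{\frac{A\log(2m)}{n}}+\frac{A\log(2m)}{n}\right).
\end{equation*}
Multiplying by $2m^2$ turns the first term into exactly the claimed rate $Cm\sqrt{A\log(2m)/n}$; in the regime where the second term dominates one has $m\sqrt{A\log(2m)/n}\gtrsim 1$ and the trivial bound $\|v\|_2\leq 2$ already suffices.

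The main obstacle is the variance calculation in step two. The crude estimate $\|\mathbb{E}Z_j^2\|_{op}\leq\mathbb{E}\|Z_j\|_{op}^2\leq 4$ is off by a factor $m^2$ and would only yield $\|v\|_2\lesssim m^2\sqrt{\log m/n}$, missing the target rate by a factor $m$. Obtaining the sharp $m^{-2}$ scaling of the variance requires combining the idempotency $\Phi_j^2=\Phi_j$ (from the unit Hilbert--Schmidt norm of $X_j$) with the orthonormal-basis identity $\mathbb{E}\Phi_j=m^{-2}\mathrm{Id}$, and it is precisely this sharper variance bound that produces the correct $m$-dependence in the final inequality.
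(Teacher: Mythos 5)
Your argument is correct and follows essentially the same route as the paper's proof: both reduce the problem to bounding the operator norm of the centered random operator $\frac{1}{n}\sum_j X_j\otimes X_j-\mathbb{E}(X\otimes X)$ on $\mathbb{H}_m$, and both obtain the sharp variance parameter $m^{-2}$ from $\|X\|_2=1$ together with the orthonormal-basis identity $\mathbb{E}(X\otimes X)=m^{-2}\mathrm{Id}$, before invoking matrix Bernstein in dimension $m^2$. The only stylistic difference is in the deterministic step: the paper directly subtracts the two normal-cone inequalities $\langle\check\rho-\hat Z,\check\rho-\hat\rho\rangle\le 0$ and $\langle\frac{m^2}{n}\sum_j\langle\hat\rho,X_j\rangle X_j-\hat Z,\check\rho-\hat\rho\rangle\ge 0$, then divides by $\|\check\rho-\hat\rho\|_2$, yielding the cleaner unconditional bound $\|\check\rho-\hat\rho\|_2\le m^2\|\frac{1}{n}\sum_j X_j\otimes X_j-\mathbb{E}(X\otimes X)\|_{\rm op}$, whereas you reach the equivalent estimate (up to a factor $2$ and a benign regime split) via exact second-order Taylor expansions of the two quadratic objectives plus the polarization identity for $\mathcal{T}$.
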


\begin{proof}
Note that the gradient (and subgradient) of convex function $S\mapsto \|S-Z\|_2^2$ is equal to $2(S-Z).$
By a necessary condition of minimum in convex minimization problem  (\ref{def_projection}), for $\check \rho 
= \pi_{{\mathcal S}_m}(\hat Z),$ $\hat Z-\check \rho$ should belong to the normal cone $N_{{\mathcal S}_m}(\check \rho)$
of the convex set ${\mathcal S}_m$ at point $\check \rho$ (see \cite{aubin}, Proposition 5, Chapter 4, Section 1). 
Since both $\check \rho, \hat \rho\in {\mathcal S}_m,$ this implies that 
\begin{equation}
\label{nec_min_1}
\langle \check \rho -\hat Z, \check \rho- \hat \rho\rangle\leq 0. 
\end{equation}
Similar analysis of convex optimization problem (\ref{LS}) shows that 
$$
\biggl\langle\frac{m^2}{n}\sum_{j=1}^n (\langle \hat \rho,X_j\rangle -Y_j)X_j, \check \rho -\hat \rho\biggl\rangle \geq 0,
$$
which could be rewritten as follows:
\begin{equation}
\label{nec_min_2}
\biggl\langle\frac{m^2}{n}\sum_{j=1}^n \langle \hat \rho,X_j\rangle X_j - \hat Z, \check \rho -\hat \rho\biggl\rangle \geq 0.
\end{equation} 
Subtracting (\ref{nec_min_2}) from (\ref{nec_min_1}) yields 
$$
\biggl\langle\check \rho-\frac{m^2}{n}\sum_{j=1}^n \langle \hat \rho,X_j\rangle X_j, \check \rho -\hat \rho\biggl\rangle \leq 0,
$$ 
implying that 
\begin{equation}
\label{check_hat}
\|\check \rho -\hat \rho\|_2^2 = \langle \check \rho - \hat \rho, \check \rho - \hat \rho \rangle\leq 
\biggl\langle \frac{m^2}{n}\sum_{j=1}^n \langle \hat \rho,X_j\rangle X_j - \hat \rho, \check \rho -\hat \rho\biggl\rangle.
\end{equation}
We will now write 
\footnote{Here we view the tensor product $A\otimes B$ of operators $A,B\in {\mathbb M}_m$ as an operator acting from the 
space ${\mathbb M}_m$ of $m\times m$ matrices equipped with Hilbert-Schmidt inner product $\langle \cdot, \cdot \rangle$
into itself as follows: $(A\otimes B)C=A \langle C, B\rangle.$ Let $\|\cdot \|_{{\rm op}}$ denote 
the operator norm of linear operators from ${\mathbb M}_m$ into itself, which corresponds to the $\|\cdot\|_{\infty}$
in the case of $m\times m$ matrices.} 
$$
\frac{m^2}{n}\sum_{j=1}^n \langle \hat \rho,X_j\rangle X_j - \hat \rho 
= \frac{m^2}{n}\sum_{j=1}^n \Bigl(\langle \hat \rho,X_j\rangle X_j - {\mathbb E}\langle \hat \rho, X\rangle X\Bigr)  
$$ 
$$
=m^2\biggl[\frac{1}{n}\sum_{j=1}^n (X_j \otimes X_j-{\mathbb E}(X\otimes X)\rangle\biggr]\hat \rho.
$$
It follows from 
(\ref{check_hat}) that 
$$
\|\check \rho -\hat \rho\|_2^2 \leq 
m^2\biggl\|\frac{1}{n}\sum_{j=1}^n X_j \otimes X_j - {\mathbb E}(X\otimes X)\biggr\|_{{\rm op}}\|\hat \rho\|_2
\|\check \rho -\hat \rho\|_2.
$$
Since $\|\hat \rho\|_2\leq 1,$ we get
\begin{equation}
\label{hat_check}
\|\check \rho -\hat \rho\|_2 \leq
m^2\biggl\|\frac{1}{n}\sum_{j=1}^n X_j \otimes X_j - {\mathbb E}(X\otimes X)\biggr\|_{{\rm op}}.
\end{equation}
It remains to control the operator norm in the right hand side for which we can again use matrix Bernstein 
inequality of Lemma \ref{Bernstein_standard} applying it to $V=X\otimes X-{\mathbb E}(X\otimes X).$
In this case, 
$$
\sigma^2=\|{\mathbb E}V^2\|_{{\rm op}} \leq \|{\mathbb E}(X\otimes X)^2\|_{{\rm op}} 
= \sup_{\|U\|_2\leq 1}{\mathbb E}\langle (X\otimes X)^2 U,U\rangle
=\sup_{\|U\|_2\leq 1}{\mathbb E}\langle (X\otimes X)U, (X\otimes X)U\rangle
$$ 
$$
= \sup_{\|U\|_2\leq 1}{\mathbb E}|\langle U,X\rangle|^2 \|X\|_2^2
\leq \sup_{\|U\|_2\leq 1}{\mathbb E}|\langle U, X\rangle|^2 = 
\sup_{\|U\|_2\leq 1}\frac{\|U\|_2^2}{m^2}= \frac{1}{m^2}
$$ 
and 
$$
\|V\|_{{\rm op}} \leq \|X\otimes X\|_{{\rm op}}+ {\mathbb E}\|X\otimes X\|_{{\rm op}}
=\|X\|_2^2 + {\mathbb E}\|X\|_2^2\leq 2.
$$ 
Bound (\ref{hat_check}) along with the bound  of Lemma \ref{Bernstein_standard} with $t=A\log (2m^2), A\geq 1$ yield the following 
inequality 
$$
\|\check \rho -\hat \rho\|_2 \lesssim 
m\sqrt{\frac{A\log (2m)}{n}} \bigvee m^2  \frac{A\log (2m)}{n}
$$
that holds with probability at least $1-(2m^2)^{-A}.$
Since $\|\check \rho -\hat \rho\|_2\leq 2,$ the second term $m^2  \frac{A\log (2m)}{n}$
in the right hand side could be dropped (if this term is dominant, the bound is $\gtrsim 1$). 
This completes the proof of the theorem.
\end{proof} 

Since $\|\check \rho-\hat \rho\|_{\infty}\leq \|\check \rho -\hat \rho\|_2,$ the bound of Theorem \ref{check-hat}
also holds for $\|\check \rho - \hat \rho\|_{\infty}.$ Combining this with the bound of Theorem \ref{minmaxthm3'''_upper} for 
$p=\infty,$
it is easy to conclude that under conditions of this theorem 
$$
\|\hat \rho - \rho\|_{\infty} \lesssim Um^{3/2}\sqrt{\frac{A\log (2m)}{n}}
$$
and that the last bound holds (with a proper choice of constant in relationship $\lesssim $) with probability 
at least $1-(2m)^{-A}.$ In view of Lemma \ref{rank_r}, this immediately implies that all the bounds of Theorem \ref{minmaxthm3'''_upper} also hold for the least squares estimator $\hat \rho.$ In a special case of Pauli measurements, this means that Theorem \ref{minmaxthm3_Pauli_upper} holds for the estimator $\hat \rho.$ Concerning Theorem 
\ref{minmaxthm1_upper''}, the same conclusion is true under the additional assumption that $\sigma_{\xi}\geq m^{-1/2}.$
Moreover, if $\hat \rho_{\delta}$ is the following modification of estimator $\hat \rho$
\begin{equation}
\label{LS-delta}
\hat \rho_{\delta} := {\rm argmin}_{S\in {\mathcal S}_{m,\delta}}\biggl[n^{-1}\sum_{j=1}^n (Y_j-\langle S,X_j\rangle)^2\biggr],
\end{equation} 
then the statements of theorems \ref{minmaxthm1_upper''}, \ref{minmaxthm3'''_upper''} and  \ref{minmaxthm3_Pauli_upper''} hold for the estimator $\hat \rho_{\delta}$ (in the case 
of Theorem \ref{minmaxthm1_upper''}, under the additional assumption that $\sigma_{\xi}\geq m^{-1/2}$).

\section{Proof of Theorem \ref{th:min-dist}}\label{last}

Recall that 
$$
\pi_{{\mathcal S}_m}(Z):= {\rm argmin}_{S\in {\mathcal S}_m}\|Z-S\|_2^2, Z\in {\mathbb H}_m
$$
defines the projection of $Z$ onto ${\mathcal S}_m.$
The mapping ${\mathbb H}_m\ni Z\mapsto \pi_{{\mathcal S}_m}(Z)\in {\mathcal S}_m$
possesses a couple of simple properties stated in the next proposition. Denote by ${\mathcal S}_m^d$
the set of all diagonal density matrices. 

\begin{proposition}
\label{simple}
\begin{enumerate}
\item \label{item:1} For all $m\times m$ unitary matrices $U,$ 

$$
\pi_{{\mathcal S}_m}(U^{-1}ZU)= U^{-1}\pi_{{\mathcal S}_m}(Z)U, Z\in {\mathbb H}_m.
$$

\item \label{item:2} If $D\in {\mathbb H}_m$ is a diagonal matrix, then $\pi_{{\mathcal S}_m}(D)\in {\mathcal S}_m^d.$
\end{enumerate}
\end{proposition}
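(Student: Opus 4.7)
The plan is to prove the two parts in sequence, using the first as a symmetry tool for the second.

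For Part \ref{item:1}, I would rely on two elementary facts: the Hilbert--Schmidt norm is unitary invariant, i.e.\ $\|U^{-1}AU\|_2=\|A\|_2$ for every unitary $U$ and every $A\in{\mathbb M}_m$, and the map $\Phi_U:S\mapsto U^{-1}SU$ is a bijection of ${\mathcal S}_m$ onto itself (it preserves Hermiticity, positive semi-definiteness, and the trace, and is inverted by $\Phi_{U^{-1}}$). Given these, for any $S\in{\mathcal S}_m$ we have
\[
\|U^{-1}ZU-U^{-1}SU\|_2^2=\|U^{-1}(Z-S)U\|_2^2=\|Z-S\|_2^2.
\]
Writing $T=U^{-1}SU$ and letting $T$ range over ${\mathcal S}_m$ (which it does, by the bijection), the minimization problem defining $\pi_{{\mathcal S}_m}(U^{-1}ZU)$ is the same as the one defining $\pi_{{\mathcal S}_m}(Z)$ up to the change of variables $S=UTU^{-1}$. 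Since the minimizer is unique (by strict convexity), we conclude that $\pi_{{\mathcal S}_m}(U^{-1}ZU)=U^{-1}\pi_{{\mathcal S}_m}(Z)U$.

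For Part \ref{item:2}, I would exploit Part \ref{item:1} with a diagonal unitary. For arbitrary real numbers $\theta_1,\dots,\theta_m$, set $U=\operatorname{diag}(e^{i\theta_1},\dots,e^{i\theta_m})$. Since diagonal matrices commute, $U^{-1}DU=D$, and Part \ref{item:1} therefore gives $U^{-1}\pi_{{\mathcal S}_m}(D)U=\pi_{{\mathcal S}_m}(D)$. Equivalently, $\pi_{{\mathcal S}_m}(D)$ commutes with every such diagonal unitary $U$. Writing out the identity $\pi_{{\mathcal S}_m}(D)\,U=U\,\pi_{{\mathcal S}_m}(D)$ entrywise yields, for each $(j,k)$, the equation $\pi_{{\mathcal S}_m}(D)_{jk}(e^{i\theta_k}-e^{i\theta_j})=0$; choosing $\theta_j\neq\theta_k$ (mod $2\pi$) forces $\pi_{{\mathcal S}_m}(D)_{jk}=0$ for all $j\neq k$. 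Hence $\pi_{{\mathcal S}_m}(D)\in{\mathcal S}_m^d$.

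There is really no substantive obstacle here; both parts are soft structural statements. The only point that deserves any care is the bijectivity of $\Phi_U$ on ${\mathcal S}_m$ in Part \ref{item:1} and the entrywise argument forcing diagonality in Part \ref{item:2}, both of which are routine. The proposition is then used downstream (in the proof of Theorem \ref{th:min-dist}) to reduce the general case to a diagonal one via the spectral decomposition of $Z$.
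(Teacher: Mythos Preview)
Your proof is correct. Part~\ref{item:1} is exactly the paper's argument: unitary invariance of $\|\cdot\|_2$ together with the fact that conjugation by $U$ is a bijection of ${\mathcal S}_m$ reduces one minimization problem to the other, and uniqueness of the projection finishes it.

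For Part~\ref{item:2} you take a genuinely different route. The paper does not invoke Part~\ref{item:1} at all; instead it uses a direct pinching argument: for any $A\in{\mathcal S}_m$, the diagonal part $A^d$ (obtained by zeroing out off-diagonal entries) is again a density matrix, and for a diagonal $D$ one has $\|D-A^d\|_2^2\leq \|D-A\|_2^2$ with strict inequality unless $A$ is already diagonal. Hence the unique minimizer over ${\mathcal S}_m$ must lie in ${\mathcal S}_m^d$. Your argument, by contrast, feeds Part~\ref{item:1} back into Part~\ref{item:2}: conjugation by any diagonal unitary fixes $D$, so by Part~\ref{item:1} it must fix $\pi_{{\mathcal S}_m}(D)$ as well, and the only matrices commuting with all diagonal unitaries are themselves diagonal. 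Both proofs are short and elementary; the paper's version is self-contained for Part~\ref{item:2}, while yours exhibits Part~\ref{item:2} as a structural consequence of the equivariance in Part~\ref{item:1}, which is a pleasant unification.
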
 

\begin{proof} To prove the first claim, note that, by the unitary invariance of the Hilbert--Schmidt norm,
$$
\|U^{-1}ZU-S\|_2^2 = \|U^{-1}(Z- USU^{-1})U\|_2^2 = \|Z-USU^{-1}\|_2^2.
$$
In addition, the mapping $S\mapsto USU^{-1}$ is a bijection from the set ${\mathcal S}_m$ onto 
itself. This immediately implies that 
$$
\pi_{{\mathcal S}_m}(U^{-1}ZU)= {\rm argmin}_{S\in {\mathcal S}_m}\|Z-USU^{-1}\|_2^2
= U^{-1}\pi_{{\mathcal S}_m}(Z)U.
$$

For an $m\times m$ matrix $A=(a_{ij})_{i,j=1}^m\in {\mathbb H}_m,$ let $A^d$ be the diagonal matrix with 
diagonal entries $a_{ii}, i=1,\dots, m.$ It is easy to see that if $A$ is a density matrix, then 
$A^d$ is also a density matrix.  Moreover, it is also obvious that, for a diagonal matrix $D,$ 
$$
\|D-A^d\|_2^2\leq \|D-A\|_2^2, A\in {\mathcal S}_m,
$$ 
with a strict inequality if $A$ is not diagonal. These observations immediately imply
the second claim.

\end{proof}

We will now state and prove a vector version of Theorem \ref{th:min-dist} in which the role 
of the set of density matrices ${\mathcal S}_m$ is played by 
the simplex 
$$\Delta_m:= \Bigl\{u=(u_1,\dots, u_m)\in {\mathbb R}^m: u_j\geq 0, \sum_{j=1}^m u_j=1\Bigr\}$$ 
in ${\mathbb R}^m$
(this is equivalent to considering the set of diagonal density matrices).  
We will then show that the matrix version of the problem reduces to the vector 
case. 

Define 
$$
\pi_{\Delta_m}(z) := {\rm argmin}_{u\in \Delta_m}\|z-u\|_{\ell_2^m}^2, z\in {\mathbb R}^m.
$$
Since the function $u\mapsto \|z-u\|_{\ell_2^m}^2$ is strictly convex 
and $\Delta_m$ is a compact convex set, such a minimizer exists and 
is unique. In other words, $\pi_{\Delta_m}(z)$ is the projection of the point 
$z\in {\mathbb R}^m$ onto simplex $\Delta_m$ (the closest point to $z$ in the set $\Delta_m$
with respect to the Euclidean $\ell_2^m$-distance).  The next lemma shows that the same point also minimizes the $\ell_{\infty}^m$-distance from $z$ to the simplex $\Delta_m.$

\begin{lemma}
\label{lemma_odin}
For all $z\in {\mathbb R}^m,$
$$
\|z-\pi_{\Delta_m}(z)\|_{\ell_{\infty}^m} = \min_{v\in \Delta_m}\|z-v\|_{\ell_{\infty}^m}.
$$
\end{lemma}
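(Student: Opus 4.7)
The plan is to first produce a closed form for $u := \pi_{\Delta_m}(z)$ via the KKT conditions for the $\ell_2^m$-projection onto the simplex, then to compute $\|z - u\|_{\ell_\infty^m}$ explicitly, and finally to verify that no $v \in \Delta_m$ can beat this value. Setting up the Lagrangian of $\min \|z - v\|_{\ell_2^m}^2$ subject to $v_i \geq 0$ and $\sum_i v_i = 1$ yields a unique $\tau \in \mathbb{R}$ such that $u_i = (z_i - \tau)_+$ and $\sum_i (z_i - \tau)_+ = 1$. Writing $I := \{i : z_i > \tau\}$ for the active set (which is nonempty since $\sum_i u_i = 1$), we get $(z - u)_i = \tau$ for $i \in I$ and $(z - u)_i = z_i \leq \tau$ for $i \notin I$. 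A short inspection shows that
$$
\|z - u\|_{\ell_\infty^m} = \max(|\tau|,\ -\min_i z_i),
$$
because inactive indices contribute either $z_i \leq \tau \leq |\tau|$ or $|z_i| \leq -\min_i z_i$, while in the corner case $I = \{1,\dots,m\}$ one has $\min_i z_i > \tau$, so the second term is automatically dominated by $|\tau|$.

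Now fix $v \in \Delta_m$ and set $M := \|z - v\|_{\ell_\infty^m}$. I would establish the two matching lower bounds $M \geq -\min_i z_i$ and $M \geq |\tau|$ separately. The first is immediate: if $\min_i z_i > 0$ it is trivial, and otherwise, taking $k \in \arg\min_i z_i$, one has $v_k \geq 0 \geq z_k$, so $|v_k - z_k| = v_k - z_k \geq -z_k = -\min_i z_i$. For the second bound I would argue by contradiction: assume $M < |\tau|$, so that $|v_i - z_i| < |\tau|$ for all $i$. If $\tau > 0$, the lower estimate $v_i > z_i - \tau$ combined with $v_i \geq 0$ gives
$$
\sum_i v_i \ \geq\ \sum_{i \in I} v_i \ >\ \sum_{i \in I}(z_i - \tau) \ =\ 1,
$$
contradicting $v \in \Delta_m$. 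If $\tau < 0$, the useful estimate is the upper one, $v_i < z_i - \tau$: for any $i \notin I$ it yields $v_i < z_i - \tau \leq 0$, contradicting $v_i \geq 0$; in the subcase $I = \{1,\dots,m\}$ (no inactive indices), summing the same estimate over all $i$ gives $\sum_i v_i < \sum_i (z_i - \tau) = \sum_i u_i = 1$. The case $\tau = 0$ is trivial. Combining the two bounds, $M \geq \max(|\tau|, -\min_i z_i) = \|z - u\|_{\ell_\infty^m}$, which is the claim.

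The main obstacle is organizing the case analysis based on the sign of $\tau$ and the boundary subcase $I = \{1,\dots,m\}$ in the contradiction step; in each subcase the contradiction is produced by a one-line counting argument that plays the normalization $\sum_i u_i = \sum_i v_i = 1$ against the defining identity $\sum_{i \in I}(z_i - \tau) = 1$.
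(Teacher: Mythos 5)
Your proof is correct, and while it rests on the same two ideas as the paper's --- the soft-thresholding form $u_i=(z_i-\tau)_+$ for the simplex projection, and using the normalization constraint to force any competitor $v$ to be at least as far in $\ell_\infty$ --- the execution is genuinely different. The paper sorts the coordinates, tracks the running averages $\bar z_j=(z_1+\dots+z_j)/j$, explicitly constructs the cutoff index $k$, verifies the normal-cone condition from scratch to prove that the resulting vector $\lambda$ is the projection, and then bounds $\|z-\lambda\|_{\ell_\infty^m}=\max(|t|,|z_{k+1}|,\dots,|z_m|)$ term by term, proving $t\leq\|z-v\|_{\ell_\infty^m}$ by averaging $z_i-v_i$ over the active coordinates. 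You take the KKT characterization as given, package the projection error into the cleaner identity $\|z-u\|_{\ell_\infty^m}=\max(|\tau|,-\min_i z_i)$, and prove the two matching lower bounds separately: one is a one-line pointwise comparison at the minimizing coordinate, the other is a contradiction argument playing $\sum_i v_i=1$ against $\sum_{i\in I}(z_i-\tau)=1$, with a careful sign split on $\tau$ and the corner case $I=\{1,\dots,m\}$, all handled correctly. The tradeoff: your route hands off the derivation of the soft-thresholding formula to standard KKT theory, which the paper instead proves in full via the normal cone (in part because that explicit description is reused in the computational discussion at the end of the paper); in exchange, you get a crisper closed form for the projection error and a shorter overall verification.
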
 

\begin{proof}
Without loss of generality, assume that $z=(z_1,\dots, z_m)\in {\mathbb R}^m$ is a point with 
$z_1\geq \dots \geq z_m.$ Denote 
$$
\bar z_j := \frac{z_1+\dots+z_j}{j}, j=1,\dots, m.
$$
Clearly, $\bar z_1=z_1$ and $\bar z_j\geq z_j, j=1,\dots, m.$
Let 
$$
k:= \max\biggl\{ j\leq m: \bar z_j\leq z_j + \frac{1}{j}\biggr\}.
$$
Note that if $k>1,$ then, for all $j<k,$ $\bar z_j\leq z_j+\frac{1}{j}.$
Indeed, 
$$
\bar z_j =\frac{k\bar z_k -\sum_{i=j+1}^k z_i}{j}\leq \frac{kz_k+1 -(k-j)z_k}{j}
=\frac{jz_k+1}{j}=z_k+\frac{1}{j}\leq z_j+\frac{1}{j}.
$$
On the other hand, if $k<m,$ then $\bar z_k>z_{k+1}+\frac{1}{k}.$ Indeed, if $\bar z_k\leq z_{k+1}+\frac{1}{k},$ then 
$$
\bar z_{k+1} = \frac{k\bar z_k+z_{k+1}}{k+1} \leq \frac{kz_{k+1}+1+ z_{k+1}}{k+1}=z_{k+1}+\frac{1}{k+1},
$$
which would contradict the definition of $k.$

Let 
$\lambda= (\lambda_1,\dots, \lambda_m),$ where 
$\lambda_j= z_j-\bar z_k+\frac{1}{k}$ for $j=1,\dots, k$ and $\lambda_j=0$
for $j=k+1,\dots, m.$ Since $\bar z_k\leq z_k+\frac{1}{k}\leq z_j+\frac{1}{k}$ for all $j\leq k,$
we have $\lambda_j\geq 0, j=1,\dots, m$ and 
$$
\sum_{j=1}^m \lambda_j = \sum_{j=1}^k \biggl(z_j-\bar z_k+\frac{1}{k}\biggr) = \sum_{j=1}^k z_j -k\bar z_k+1=1.
$$
Thus, $\lambda \in \Delta_m.$ It turns out that $\pi_{\Delta_m}(z)=\lambda.$ 
\footnote{The computation of the projection onto a simplex occurs in many applications and 
has been studied before: see, e.g., \cite{michelot1986finite} and \cite{Shalev}. See also \cite{chen2011projection}, where an explicit expression for the projection was 
derived. For completeness, we provide our version of the proof below.} 
To prove this it is enough to show that $z-\lambda \in N_{\Delta_m}(\lambda),$
where 
$$
N_{\Delta_m}(\lambda):= \{u\in {\mathbb R}^m: \langle u, v-\lambda\rangle \leq 0, v\in \Delta_m\}
$$
is the normal cone of the convex set $\Delta_m$ at point $\lambda$ (see, e.g., \cite{aubin}, Proposition 5, Chapter 4, Section 1). Let $t:= \bar z_k-\frac{1}{k}.$ Clearly, we have 
$z_{k+1}<t\leq z_k$ if $k<m$ and $t\leq z_m$ if $k=m.$
For $k=m,$ $z-\lambda= (t,\dots, t)$ and 
$$
\langle z-\lambda, v-\lambda\rangle=\sum_{i=1}^m t(v_i-\lambda_i)= 
t \biggl(\sum_{i=1}^m v_i-\sum_{i=1}^m \lambda_i\biggr)=0
$$ 
since $v,\lambda \in \Delta_m.$
For $k<m,$ note that 
$$
z-\lambda = (t, \dots t, z_{k+1}, \dots, z_m)
$$
and, for $v\in \Delta_m,$ 
$$
\langle z-\lambda, v-\lambda\rangle = 
\sum_{i=1}^k t(v_i-\lambda_i) + \sum_{i=k+1}^m z_i v_i.
$$
Using the facts that $\sum_{i=1}^m v_i=1$ and $\sum_{i=1}^k \lambda_i=1,$
we get
$$
\langle z-\lambda, v-\lambda\rangle =t\biggl(\sum_{i=1}^k v_i-\sum_{i=1}^k \lambda_i\biggr)
+\sum_{i=k+1}^m z_i v_i 
$$
$$
=
-t\sum_{i=k+1}^m v_i+ \sum_{i=k+1}^m z_i v_i
= \sum_{i=k+1}^m (z_i-t)v_i \leq 0,
$$
where we also used that, for all $i=k+1,\dots, m,$ $z_i-t\leq z_{k+1}-t\leq 0$
and $v_i\geq 0.$ Thus, 
$z-\lambda\in N_{\Delta_m}(\lambda)$ and, by the uniqueness of the minimum, $\lambda = \pi_{\Delta_m}(z).$

Note that 
$$
\|z-\lambda\|_{\ell_{\infty}^m}= \max(|t|, |z_{k+1}|, \dots, |z_m|).
$$
For any $v\in \Delta_m,$
$$
t=\bar z_k -\frac{1}{k}= \frac{1}{k}\sum_{i=1}^k z_i -\frac{1}{k}\sum_{i=1}^m v_i 
\leq \frac{1}{k}\sum_{i=1}^k z_i -\frac{1}{k}\sum_{i=1}^k v_i = 
\frac{1}{k}\sum_{i=1}^k (z_i-v_i)\leq \|z-v\|_{\ell_{\infty}^m}.
$$
On the other hand, 
$$
z_m \geq z_m -v_m \geq -\|z-v\|_{\ell_{\infty}^m}.
$$
Since 
$$
t=\bar z_k -\frac{1}{k}\geq z_{k+1}\geq \dots \geq z_m,
$$
we conclude that, for all $v\in \Delta_m,$ 
$$
\|z-\lambda\|_{\ell_{\infty}^m}\leq \|z-v\|_{\ell_{\infty}^m}.
$$

\end{proof}

We now turn to the proof of Theorem \ref{th:min-dist}. 

\begin{proof} 
Any matrix $Z\in {\mathbb H}_m$ admits spectral representation $Z=U^{-1}DU,$ where
$D$ is the diagonal matrix with real entries $d_1, \dots, d_m$ on the diagonal and 
$U$ is a unitary $m\times m$ matrix. Let $d=(d_1,\dots, d_m)\in {\mathbb R}^m.$
Given $v=(v_1,\dots, v_m)\in \Delta_m,$ the diagonal matrix $V$ with entries $v_1,\dots, v_m$
is a density matrix. This defines a bijection $\Delta_m\ni v\mapsto V=J(v)$ between the simplex 
$\Delta_m$ and the set ${\mathcal S}_m^d$ of all diagonal $m\times m$ density matrices. Moreover,
$J$ is an isometry of $\Delta_m$ and ${\mathcal S}_m^d:$
$\|J(v)-J(u)\|_2^2=\|u-v\|_{\ell_2^m}^2, u,v\in \Delta_m.$

We will now prove the following lemma.

\begin{lemma}
\label{lemma_dva}
Let $Z=U^{-1}DU$ with a unitary $m\times m$ matrix $U$ and diagonal matrix $D$ with 
$d=(d_1,\dots, d_m)\in {\mathbb R}^m$ being the vector of its diagonal entries.
Then 
$$
\pi_{{\mathcal S}_m}(Z)=U^{-1}J(\pi_{\Delta_m}(d))U.
$$
\end{lemma}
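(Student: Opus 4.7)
The plan is to combine the two parts of Proposition \ref{simple} with the obvious fact that the map $J\colon \Delta_m \to {\mathcal S}_m^d$ is an isometry, reducing the matrix projection onto ${\mathcal S}_m$ to the vector projection onto $\Delta_m$.

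First, I would apply part \ref{item:1} of Proposition \ref{simple} to the representation $Z = U^{-1}DU$ to obtain
$$
\pi_{{\mathcal S}_m}(Z) = \pi_{{\mathcal S}_m}(U^{-1}DU) = U^{-1}\pi_{{\mathcal S}_m}(D)U,
$$
so the claim reduces to showing $\pi_{{\mathcal S}_m}(D) = J(\pi_{\Delta_m}(d))$.

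Next, I would apply part \ref{item:2} of Proposition \ref{simple}: since $D$ is diagonal, $\pi_{{\mathcal S}_m}(D) \in {\mathcal S}_m^d$. Thus minimizing $\|D-S\|_2^2$ over $S \in {\mathcal S}_m$ is the same as minimizing it over $S \in {\mathcal S}_m^d$, i.e.,
$$
\pi_{{\mathcal S}_m}(D) = \mathrm{argmin}_{S \in {\mathcal S}_m^d} \|D-S\|_2^2.
$$
Using that $J\colon \Delta_m \to {\mathcal S}_m^d$ is a bijection satisfying $\|D - J(v)\|_2^2 = \|d - v\|_{\ell_2^m}^2$ (both norms just compare diagonal entries), the right-hand side equals $J(v^\ast)$ where $v^\ast = \mathrm{argmin}_{v \in \Delta_m}\|d-v\|_{\ell_2^m}^2 = \pi_{\Delta_m}(d)$. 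Hence $\pi_{{\mathcal S}_m}(D) = J(\pi_{\Delta_m}(d))$, and plugging this into the first display yields the lemma.

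There is no real obstacle here: the argument is almost entirely bookkeeping that packages Proposition \ref{simple} together with the trivial isometry property of $J$. The only thing one has to be careful about is invoking part \ref{item:2} to guarantee that the constrained minimum over diagonal density matrices coincides with the unconstrained minimum over ${\mathcal S}_m$, so that replacing ${\mathcal S}_m$ by ${\mathcal S}_m^d$ does not change the minimizer.
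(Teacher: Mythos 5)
Your proof is correct and follows essentially the same route as the paper: invoke Proposition \ref{simple} (unitary equivariance to reduce to the diagonal case, and diagonality of the projection of a diagonal matrix to reduce to ${\mathcal S}_m^d$), then transport the problem through the isometry $J$ onto the simplex projection. The paper states this more tersely, but the underlying argument is identical.
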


\begin{proof} This is an immediate consequence of Proposition \ref{simple}
and the following simple fact: 
$$
{\rm argmin}_{A\in {\mathcal S}_m^d}\|D-A\|_2^2
=J\biggl({\rm argmin}_{v\in \Delta_m}\|J(d)-J(v)\|_2^2\biggr)
$$
$$
J\biggl({\rm argmin}_{v\in \Delta_m}\|d-v\|_{\ell_2^m}^2\biggr)= J(\pi_{\Delta_m}(d)).
$$

\end{proof}

To complete the proof of Theorem \ref{th:min-dist}, observe that,
In view of lemmas \ref{lemma_odin}, \ref{lemma_dva},   
$$
\|Z-\pi_{{\mathcal S}_m}(Z)\|_{\infty}= \|U^{-1}(J(d)-J(\pi_{\Delta_m}(d)))U\|_{\infty}
$$
$$
=\|J(d)-J(\pi_{\Delta_m}(d))\|_{\infty} =\|d-\pi_{\Delta_m}(d)\|_{\ell_{\infty}^m}
= \inf_{v\in \Delta_m} \|d-v\|_{\ell_{\infty}^m}.
$$
Without loss of generality, assume that $d_1\geq \dots \geq d_m.$ Let 
$S\in {\mathcal S}_m$ be a density matrix with eigenvalues $v_1\geq \dots \geq v_m.$
Clearly, $v=(v_1,\dots, v_m)\in \Delta_m.$ Therefore,
$$
\|Z-\pi_{{\mathcal S}_m}(Z)\|_{\infty}\leq \|d-v\|_{\infty}
\leq \|Z-S\|_{\infty},
$$
where to get the last bound we used Weyl's perturbation inequality (see \cite{Bhatia}, Corollary III.2.6). 

\end{proof}

\section{Comments on computational aspects of the problem}

An advantage of minimal distance estimator $\check{\rho}=\pi_{\mathcal{S}_m}(\hat{Z})$ is the simplicity of its computational implementation. The computation of the matrix 
$\hat{Z}=\frac{m^2}{n}\sum_{i=1}^nY_iX_i$ requires $O(nm^2)$ operations.
It is followed by an eigen-decomposition of $Z$ that requires $O(m^3)$ operations(see \cite{golub2012matrix}); there exist efficient software packages designed for this kind of tasks, for instance, LINPACK and PROPACK, etc.). As it is shown in the previous section, the problem of computing 
$\pi_{\mathcal{S}_m}(\hat{Z})$ then reduces to projecting of the vector of eigenvalues of $Z$ arranged in a non-increasing order onto the simplex $\Delta_m.$ The last problem has been studied in the 
literature (see \cite{michelot1986finite}, \cite{Shalev}, \cite{chen2011projection}) 
and it has an explicit solution of computational 
complexity proportional to $m$ (see the proof of Lemma \ref{lemma_odin}). 
Thus, the computational implementation of the minimal distance estimator $\check \rho$
requires $O((n+m)m^2)$ operations. 

The matrix version of LASSO estimator for density matrices 
is equivalent to solving the following optimization problem
\begin{equation}\label{lse}
\hat{\rho}:=\underset{S\in\mathcal{S}_m}{\arg\min} \frac{1}{n}\sum_{i=1}^n\Big(Y_i-\big<S,X_i\big>\Big)^2
\end{equation}
that results in the least squares estimator. 
Clearly, there is no explicit solution for this optimization problem and it is usually solved by iterative algorithms.
For example, a well know iterative singular value thresholding (SVT) algorithm was proposed in \cite{cai2010singular}, and also implemented in quantum compressed sensing in \cite{flammia2012quantum}. The main idea
is that (\ref{lse}) is equivalent to the following optimization problem: for any $\tau>0$,
\begin{equation*}
 \hat{\rho}:=\underset{S\in\mathcal{S}_m,Z\in\mathbb{H}_m, S=Z}{\arg\min} \frac{m^2}{n}\sum_{i=1}^n\Big(Y_i-\big<Z,X_i\big>\Big)^2+\tau\|S-Z\|_2^2.
\end{equation*}
The proposed algorithm updates $Z$ and $S$ alternatively, with the only constraint for $S$ being that $S\in\mathcal{S}_m$. Therefore, 
the main ingredient of SVT is the following iterative updating rule (with initial $Z_0=0$): for $k=1,2,\ldots$,
\begin{equation}
\label{iteralg}
\begin{cases}
S_k=\pi_{\mathcal{S}_m}(Z_{k-1})\\
Z_k=S_{k}+\delta_k\big(\hat{Z}-\frac{m^2}{n}\sum_{i=1}^n\big<S_k,X_i\big>X_i\big)
\end{cases}
\end{equation}
with certain pre-determined step sizes $\delta_k> 0$. The algorithm terminates at some step $k=N$ and outputs $S_N\in\mathcal{S}_m$ when $\|S_N-S_{N-1}\|_2\leq \epsilon$ 
for some numerical threshold $\epsilon>0$. 
It is clear that the minimal distance estimator $\check{\rho}$ can be produced by the above algorithm with one iteration and the initialization $Z_0=\hat{Z}, \delta_1=0.$
When the number of qubits $k$ is not small (for instance, about $20$) and the dimension $m$ is very large, the iterative algorithm (\ref{iteralg}) is much more computationally expensive than the 
algorithm for the minimal distance estimator (since every iteration requires the eigen-decomposition of a high dimensional matrix).

\bibliographystyle{abbrv}
\bibliography{refer_1}

\end{document}